\def\eqref#1{equation~\ref{#1}}
\def\1{\bm{1}}
\DeclareMathAlphabet{\mathsfit}{\encodingdefault}{\sfdefault}{m}{sl}
\SetMathAlphabet{\mathsfit}{bold}{\encodingdefault}{\sfdefault}{bx}{n}
\DeclareMathOperator*{\argmin}{arg\,min}
\newenvironment{proof}{\par\noindent{\bf Proof\ }}{\hfill\BlackBox\\[2mm]}
\newtheorem{theorem}{Theorem}
\newtheorem{lemma}[theorem]{Lemma}
\newtheorem{remark}{Remark}
\newcommand{\RN}[1]{%
	\textup{\lowercase\expandafter{\it \romannumeral#1}}%
}
\def\Adapt{\textsf{Adapt}}
\newcommand{\new}[1]{{\color{black}{#1}}}
\newcommand{\ROM}[1]{\uppercase\expandafter{\romannumeral #1\relax}}
\newcommand*\samethanks[1][\value{footnote}]{\footnotemark[#1]}
\setlist[itemize]{noitemsep, topsep=-5pt}
\title{Meta-Learning with Neural Tangent Kernels}
\author{%
  Yufan Zhou\thanks{The first two authors contribute equally. Correspondence to Changyou Chen (changyou@buffalo.edu).},\quad  Zhenyi Wang\samethanks[1] ,\quad Jiayi Xian,\quad Changyou Chen,\quad Jinhui Xu \thanks{The research of the first and fifth authors was supported in part by NSF through grants CCF-1716400 and IIS-1910492.}  \\
  Department of Computer Science and Engineering, State University of New York at Buffalo\\
  \texttt{\{yufanzho,zhenyiwa,jxian,changyou,jinhui\}@buffalo.edu} \\
}
\begin{document}

\maketitle

\begin{abstract}
    Model Agnostic Meta-Learning (MAML) has emerged as a standard framework for meta-learning, where a {\em meta-model} is learned with the ability of fast adapting to new tasks. However, as a double-looped optimization problem, MAML needs to differentiate through the whole inner-loop optimization path for every outer-loop training step, which may lead to both computational inefficiency and sub-optimal solutions. In this paper, we generalize MAML to allow meta-learning to be defined in function spaces, and propose the first meta-learning paradigm in the Reproducing Kernel Hilbert Space (RKHS) induced by the meta-model's Neural Tangent Kernel (NTK). Within this paradigm, we introduce two meta-learning algorithms in the RKHS, which no longer need \new{a sub-optimal iterative inner-loop adaptation} as in the MAML framework. We achieve this goal by 1) replacing the adaptation with a fast-adaptive regularizer in the RKHS; and 2) solving the adaptation analytically based on the NTK theory. Extensive experimental studies demonstrate \new{advantages} of our paradigm in both efficiency and quality of solutions compared to related meta-learning algorithms. Another interesting feature of our proposed methods is that they are \new{demonstrated to be more} robust to adversarial attacks and out-of-distribution adaptation than \new{popular baselines}, as demonstrated in our experiments.
\end{abstract}

\section{Introduction}

Meta-learning \citep{schmidhuber1987} has made tremendous progresses in 
the last few years. It aims to learn abstract knowledge from many 
related tasks so that 
fast adaption to new and unseen tasks becomes possible. 
For example, in few-shot learning, meta-learning corresponds to learning a meta-model or meta-parameters so that they can fast adapt to new tasks with a limited number of data samples. 
Among all existing meta-learning methods, Model Agnostic Meta-Learning (MAML) \citep{finn17a} is perhaps one of the most popular  and flexible ones, 
with a number of follow-up works such as \citep{reptile18,PMAML2018,YaoW19,KhodakBT19,adaptive19, learn2learn19, converge20, balance20, provable20}. MAML adopts a double-looped optimization framework, where
adaptation is achieved by one or several gradient-descent  steps in the inner-loop optimization. 
Such a framework could lead to some undesirable issues related to computational inefficiency and sub-optimal solutions. The main reasons are that   
1) it is computationally expensive to back-propagate through a stochastic-gradient-descent chain, and 2) it is hard to tune the number of adaptation steps in the inner-loop as it can be different for both training and testing. Several previous works tried to address these issues, but they can only alleviate them to certain extents.
For example, first order MAML (FOMAML) \citep{finn17a} ignores the high-order terms of the standard MAML, which can speed up the training but may lead to deteriorated performance; MAML with Implicit Gradient (iMAML) \citep{IMAML2019} directly minimizes the objective of the outer-loop without performing the inner-loop optimization. But it still needs an iterative solver to estimate the meta-gradient. 

To better address these issues, we propose two algorithms that generalize meta-learning to the Reproducing Kernel Hilbert Space (RKHS) induced by the meta-model's Neural Tangent Kernel (NTK) \citep{jacot2018neural}. In this RKHS, instead of using parameter adaptation, we propose to perform an implicit function adaptation. To this end, we introduce two algorithms to avoid explicit function adaptation: one replaces the function adaptation step in the inner-loop with a new meta-objective with a fast-adaptive regularizer inspired by MAML; the other solves the adaptation problem analytically based on tools from NTK so that the meta-objective can be directly evaluated on samples in a closed-form. When restricting the function space to be RKHS, the solutions to the proposed two algorithms become conveniently solvable. In addition, we provide theoretical analysis on our proposed algorithms in the cases of using fully-connected neural networks and convolutional neural networks as the meta-model. Our analysis shows close connections between our methods and the existing ones. Particularly, we prove that one of our algorithms is closely related to MAML with some high-order terms ignored in the meta-objective function, thus endowing effective optimization. 
In summary, our main contributions are:

\begin{itemize}
    \setlength\itemsep{0em}
    \item We re-analyze the meta-learning problem and introduce two new algorithms for meta-learning in RKHS. Different from all existing meta-learning algorithms, our proposed methods can be solved \new{efficiently without cumbersome chain-based adaptations}.
    \item We conduct theoretically analysis on the proposed algorithms, which suggests that 
    our proposed algorithms are closely related to the existing MAML methods when fully-connected neural networks and convolutional neural networks are used as the meta-model.
    \item We conduct extensive experiments to validate our algorithms. Experimental results indicate the effectiveness of our proposed methods, through standard few-shot learning, robustness to adversarial attacks and out-of-distribution adaptation.
\end{itemize}

\vspace{-0.2cm}
\section{Preliminaries}
\vspace{-0.1cm}
\subsection{Meta-Learning}
\vspace{-0.2cm}
Meta-learning can be roughly categorized as black-box adaptation methods \citep{learn2learn2016, NTM2014, SNAILICLR18}, optimization-based methods \citep{finn17a},  non-parametric methods \citep{matching16, protonet17, metadata20} and Bayesian meta-learning methods \citep{PMAML2018, BMAML2018, ABML19}. In this paper, we focus on the framework of Model Agnostic Meta-Learning (MAML) \citep{finn17a}, which has two key components, meta initialization and fast adaptation. Specifically, MAML solves the meta-learning problem through a double-looped 
optimization procedure.
In the inner-loop, MAML runs a task-specific adaptation procedure to transform a meta-parameter, $\thetab$, to a task-specific parameter, $\{\phib_m\}_{m=1}^B$, for a total of $B$ different tasks. In the outer-loop, MAML minimizes a total loss   of $\sum_{m=1}^B \mathcal{L}(f_{\phib_m})$ with respect to meta-parameter $\thetab$, where $f_{\phib_m}$ is the model adapted on task $m$ that is typically represented by a deep neural network.
It is worth noting that in MAML, one potential problem is to compute 
the meta-gradient $\nabla_{\thetab} \sum_{m=1}^B \mathcal{L}(f_{\phib_m})$. It requires one to differentiate through the whole inner-loop optimization path, which could be very inefficient. 


\vspace{-0.1cm}
\subsection{Gradient Flow}
\vspace{-0.2cm}
Our proposed method relies on the concept of gradient flow. Generally speaking, gradient flow is a continuous-time version of gradient descent. In the finite-dimensional parameter space, a gradient flow is defined by an ordinary differential equation (ODE),  $\mathrm{d}\thetab^t/\mathrm{d}t = - \nabla_{\thetab^t} F(\thetab^t)$, with a starting point $\thetab^0$ and function $F:R^d \rightarrow R$. Gradient flow is also known as steepest descent curve.

One can generalize gradient flows to infinite-dimensional function spaces. Specifically, given a function space $\mathcal{H}$, a functional $\mathcal{F}: \mathcal{H} \rightarrow R$, and a starting point $f^0 \in \mathcal{H}$, a gradient flow is similarly defined as the solution of $\mathrm{d} f^t/\mathrm{d} t = - \nabla_{f^t} \mathcal{F}(f^t)$. This is a curve in the function space $\mathcal{H}$. 
In this paper, 
we use notation $\nabla_{f^t} \mathcal{F}(f^t)$, instead of $\nabla_{\mathcal{H}} \mathcal{F}(f^t)$, to denote the general function derivative of the energy functional $\mathcal{F}$ with respect to function $f^t$ \citep{OT_book}.

\vspace{-0.1cm}
\subsection{The Neural Tangent Kernel}
\vspace{-0.2cm}
Neural Tangent Kernel (NTK) is a recently proposed technique for characterizing the dynamics of a neural network under gradient descent \citep{jacot2018neural, arora19, NIPS2019_9063}. NTK allows one to analyze deep neural networks (DNNs) in RKHS induced by NTK. One immediate benefit of this is that the loss functional in the function space is often convex, even when it is highly non-convex in the parameter space \citep{jacot2018neural} \new{\footnote{\new{Let $\mathcal{H}$ be the function space, $F$ be the realization function for neural network defined in Section \ref{subsec:meta-learning-in-RKHS}. Note even if a functional loss ({\it e.g.}, L2 loss) $\mathcal{E}: \mathcal{H} \rightarrow R$ is convex on $\mathcal{H}$, the composition $\mathcal{E} \circ F$ is in general not.}}}. This property allows one to better understand the property of DNNs. Specifically, let $f_{\thetab}$ be a DNN parameterized by $\thetab$. The corresponding NTK $\Thetab$ is defined as: $\Thetab(\xb_1, \xb_2) = \dfrac{\partial f_\thetab(\xb_1)}{\partial \thetab} \dfrac{\partial f_{\thetab}(\xb_2)}{\partial \thetab} ^\intercal$, where $\xb_1,\xb_2$ are two data points. In our paper, we will define meta-learning on an RKHS induced by such a kernel.

\section{Meta-Learning in RKHS}
We first define the meta-learning problem in a general function space, and then restrict the function space to be an RKHS, where two frameworks will be proposed to make meta-learning feasible in RKHS, along with some theoretical analysis. For simplicity, in the following we will hide the superscript time $t$ unless necessary, {\it e.g.}, when the analysis involves time-changing.

\subsection{Meta-Learning in  Function Space}
Given a function space $\mathcal{H}$, a distribution of tasks $P(\mathcal{T})$, and a loss function $\mathcal{L}$, the goal of meta-learning is to find a {\em meta function} $f^* \in \mathcal{H}$, so that it performs well after simple adaptation on a specific task. Let $\mathcal{D} _m^{tr}$ and $\mathcal{D} _m^{test}$ be the training and testing sets, respectively, sampled from a data distribution of task $\mathcal{T}_m$. The meta-learning problem on function space $\mathcal{H}$ is defined as:
\begin{align}\label{eq:goal_function_space}
    f^* = \argmin_{f \in \mathcal{H}} \mathcal{E}(f), \text{ with } \mathcal{E}(f) = \mathbb{E}_{\mathcal{T}_m}\left[\mathcal{L}\Big(\Adapt(f, \mathcal{D}_m^{tr}), \mathcal{D} _m^{test}\Big)\right]
\end{align}
where $\Adapt$ denotes some adaptation algorithms, {\it e.g.}, several steps of gradient descent; $\mathcal{E}:\mathcal{H} \rightarrow R$ is called energy functional, which is used to evaluate the model represented by the function $f$.

In theory, solving \eqref{eq:goal_function_space} is equivalent to solving the gradient flow equation $\textup{d} f^t/\textup{d} t = -\nabla_{f^t} \mathcal{E}(f^t)$. 
However, solving the gradient flow equation is generally infeasible, since $\RN{1})$ it is hard to directly apply optimization methods in function space and $\RN{2})$ the energy functional $\mathcal{E}$ contains an adaptation algorithm $\Adapt$, making the functional gradient infeasible. 
Thus, a better way is to design a special energy functional so that it can be directly optimized without running the specific adaptation algorithm. In the following, we first specify the functional meta-learning problem in RKHS, and then propose two methods to derive efficient solutions for the problem.

\subsection{Meta-Learning in  RKHS}\label{subsec:meta-learning-in-RKHS}

We consider a function $f$ that is parameterized by $\thetab \in \mathbb{R}^P$, denoted as $f_{\thetab}$, with $P$ being the number of parameters. Define a realization function $F: \mathbb{R}^P \rightarrow \mathcal{H}$ that maps parameters to a function. With these, we can then define an energy function in the parameter space as $E \triangleq \mathcal{E} \circ F: R^P \rightarrow R$ with $\circ$ being the composition operator. Consequently, with an initialized $\thetab^0$, we can define the gradient flow of $E(\thetab^t)$ in parameter space as: $\textup{d}\thetab^t/\textup{d} t = -\nabla_{\thetab^t} E(\thetab^t)$. 
In the following, we first establish an equivalence between the gradient flow in RKHS and the gradient flow in the parameter space. We then propose two algorithms for meta-learning in the RKHS induced by NTK.

\begin{theorem}\label{thm:equal_gf}
    Let $\mathcal{H}$ be an RKHS induced by the NTK $\Thetab$ of $f_{\thetab}$.
    With $f^0 = f_{\thetab^0}$, the gradient flow of $\mathcal{E}(f^t)$ coincides with the function evolution of $f_{\thetab^t}$ driven by the gradient flow of $E(\thetab^t)$. 
\end{theorem}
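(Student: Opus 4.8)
The plan is to verify that the pushed-forward parameter trajectory $t\mapsto f_{\thetab^t}=F(\thetab^t)$ satisfies the function-space gradient-flow ODE $\textup{d}f^t/\textup{d}t=-\nabla_{f^t}\mathcal{E}(f^t)$ in $\mathcal{H}$, and then to invoke uniqueness of the flow (together with the shared initial condition $f^0=f_{\thetab^0}$) to conclude that the two curves coincide. This reduces the theorem to an instantaneous claim: at each $t$, the velocity of the curve $f_{\thetab^t}$ in $\mathcal{H}$ equals $-\nabla_{f^t}\mathcal{E}(f_{\thetab^t})$, the negative functional gradient taken in the RKHS metric.

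First I would push the parameter flow through $F$. Since $f_{\thetab^t}=F(\thetab^t)$, the chain rule gives $\textup{d}f_{\thetab^t}/\textup{d}t = DF(\thetab^t)\,\dot{\thetab}^t$, where the differential acts by $DF(\thetab)\,\delta\thetab=\sum_j (\partial f_\thetab/\partial\theta_j)\,\delta\theta_j$ and each feature $\partial f_\thetab/\partial\theta_j$ is regarded as an element of $\mathcal{H}$. Substituting $\dot{\thetab}^t=-\nabla_{\thetab}E(\thetab^t)$ and expanding $\nabla_\thetab E=\nabla_\thetab(\mathcal{E}\circ F)$ by the chain rule yields $\nabla_\thetab E(\thetab)=DF(\thetab)^{*}\nabla_{f}\mathcal{E}(F(\thetab))$, with $DF(\thetab)^{*}$ the adjoint relative to the RKHS inner product and the Euclidean product on $\mathbb{R}^P$. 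Hence $\textup{d}f_{\thetab^t}/\textup{d}t = -DF(\thetab^t)\,DF(\thetab^t)^{*}\,\nabla_{f}\mathcal{E}(f_{\thetab^t})$. Evaluating at a point $\xb$ and writing out the inner products, this is the NTK-weighted combination $-\sum_i \partial_{f(\xb_i)}\mathcal{E}\cdot\Thetab(\xb,\xb_i)$ in the representative case where $\mathcal{E}$ depends on $f$ through its evaluations.

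The decisive algebraic fact is that $\mathcal{H}$ is the RKHS whose reproducing kernel is exactly $\Thetab$. By Riesz representation the gradient $\nabla_{f}\mathcal{E}(f)=\sum_i \partial_{f(\xb_i)}\mathcal{E}\cdot\Thetab(\cdot,\xb_i)$ lies in $\mathcal{H}$, hence in the closed span of the kernel sections $\{\Thetab(\cdot,\xb)\}$. On this span the operator $DF(\thetab)\,DF(\thetab)^{*}$ acts as the identity: a direct computation gives $DF\,DF^{*}\Thetab(\cdot,\xb_i)=\sum_j (\partial f_\thetab(\xb_i)/\partial\theta_j)\,(\partial f_\thetab/\partial\theta_j)=\Thetab(\cdot,\xb_i)$, which is precisely the statement that the parameter-gradient feature map realizes $\Thetab$ as the reproducing kernel. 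Consequently $DF\,DF^{*}\nabla_f\mathcal{E}=\nabla_f\mathcal{E}$, so the right-hand side above equals $-\nabla_{f^t}\mathcal{E}(f_{\thetab^t})$, establishing the instantaneous identity and, with the matched initialization, the coincidence of the two flows.

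I expect the main obstacle to be the metric and adjoint bookkeeping: one must be careful that the functional gradient $\nabla_{f}\mathcal{E}$ and the adjoint $DF^{*}$ are both taken with respect to the RKHS norm induced by $\Thetab$ rather than, say, the $L^2$ norm, since it is exactly this choice that makes $DF\,DF^{*}$ the identity on the kernel sections; under an $L^2$ metric the two flows would differ by the NTK operator. The reproducing-property step needs only that $\nabla_f\mathcal{E}$ lies in $\mathcal{H}$, which holds automatically for a Fréchet-differentiable $\mathcal{E}$ and is most transparent when $\mathcal{E}$ depends on $f$ through function evaluations, as here through the data losses composed with $\Adapt$. Finally, a caveat worth flagging is that $\Thetab$ depends on $\thetab^t$ and therefore drifts along the trajectory, so the identity should be read instantaneously at each $f_{\thetab^t}$, and the notion of a single fixed $\mathcal{H}$ is exact only when the NTK is stationary, as in the wide-network regime.
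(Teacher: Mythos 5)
Your proof is correct, and it reaches the paper's conclusion by a more structural route than the paper's own argument. The paper proceeds by direct two-sided computation: it writes $\mathcal{E}$ in terms of a pointwise cost $C(f(\xb_m),\yb_m)$, pushes the parameter flow forward through the chain rule to obtain $-\mathbb{E}\big[\tfrac{\partial C}{\partial f}\,\Thetab^t(\xb_m,\cdot)\big]$, separately computes the RKHS gradient flow using the identity $\nabla_{f} f(\xb_m)=\Thetab^t(\xb_m,\cdot)$ (the gradient of the evaluation functional, derived from the reproducing property), and observes that the two expressions agree. You instead factor the computation through an operator identity: the pushed-forward velocity is $-DF\,DF^{*}\nabla_{f}\mathcal{E}$, and $DF\,DF^{*}$ acts as the identity on kernel sections precisely because the parameter-gradient feature map realizes $\Thetab$ as a reproducing kernel; you then close with ODE uniqueness and the shared initialization. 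Both proofs consume the same ingredients (chain rule, reproducing property, NTK as a feature-map Gram), but your packaging buys something: it works for any Fr\'echet-differentiable $\mathcal{E}$ whose gradient lies in the closed span of kernel sections (which is all of $\mathcal{H}$, so no pointwise-cost structure is strictly needed), it isolates exactly where the RKHS metric---as opposed to an $L^2$ metric---is what makes the flows match, and it makes explicit the uniqueness step that the paper leaves implicit when it merely ``compares'' the two right-hand sides. Your closing caveat about the drift of $\Thetab^t$ is also well taken: the paper's own proof silently switches to $\Thetab^t$ and $\mathcal{H}^t$ along the trajectory, so the single fixed $\mathcal{H}$ of the theorem statement is exact only in the stationary-kernel (wide-network) regime, and otherwise the identity must be read instantaneously, exactly as you flag.
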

The proof of Theorem \ref{thm:equal_gf} relies on the property of NTK \citep{jacot2018neural}, and is provided in the Appendix. Theorem \ref{thm:equal_gf} serves as a foundation of our proposed methods, which indicates that solving the meta-learning problem in RKHS can be done by some appropriate manipulations. In the following, we describe two different approaches termed Meta-RKHS-\ROM{1} and Meta-RKHS-\ROM{2}, respectively.


\subsection{Meta-RKHS-\ROM{1}: Meta-Learning in RKHS without Adaptation}

Our goal is to design an energy functional that has no adaptation component, but is capable of achieving fast adaptation. For this purpose, we first introduce two definitions: empirical loss function $\mathcal{L}(f_\thetab, \mathcal{D}_m)$ and expected loss function $\mathcal{L}(f_\thetab)$. 
Let $\mathcal{D}_m = \{\xb_{m,i}, \yb_{m,i}\}_{i=1}^n$ be a set containing the data of a regression task $\mathcal{T}_m$. The empirical loss function $\mathcal{L}(f_\thetab, \mathcal{D}_m)$ and the expected loss function $\mathcal{L}_m(f_\thetab)$ can be defined as:
\begin{align*}
    \mathcal{L}(f_\thetab, \mathcal{D}_m) = \dfrac{1}{2n} \sum_{i=1}^n\big\Vert f(\xb_{m,i}) - \yb_{m,i}\big\Vert^2,~~~~
    \mathcal{L}_m(f_\thetab) = \mathbb{E}_{\xb_{m}, \yb_{m}}\left[\dfrac{1}{2}\big\Vert f(\xb_{m}) - \yb_{m}\big\Vert^2\right].
\end{align*}

Our idea is to define a regularized functional such that it endows the ability of fast adaptation in RKHS. Our solution is based on some property of the standard MAML. 
We start from analyzing the meta-objective of MAML with a $k$-step gradient-descent adaptation, {\it i.e.}, applying $k$ gradient-descent steps in the inner-loop. The objective can be formulated as
\vspace{-0.2cm}
\[
\thetab^* = \argmin_{\thetab} \mathbb{E}_{\mathcal{T}_m}\left[\mathcal{L}(f_{\phib}, \mathcal{D}_m^{test})\right], \text{ with } \phib = \thetab - \alpha \sum_{i=0}^{k-1}\nabla_{\thetab_i} \mathcal{L}(f_{\thetab_i}, \mathcal{D}_m^{tr})~,
\]
where $\alpha$ is the learning rate of the inner-loop, $\thetab_0 = \thetab$, and $\thetab_{i+1} = \thetab_{i} - \alpha \nabla_{\thetab_i} \mathcal{L}(f_{\thetab_i}, \mathcal{D}_m^{tr})$\footnote{\new{For ease of  our later notation, we write the gradient $\nabla_{\thetab_i} \mathcal{L}$ (thus the parameter as well) as a row vector.}}. 
By Taylor expansion, we have
\vspace{-0.2cm}
\begin{align}\label{eq:taylor_maml}
     \mathbb{E}_{\mathcal{T}_m}\left[\mathcal{L}(f_{\phib}, \mathcal{D}_m^{test})\right] \approx  \mathbb{E}_{\mathcal{T}_m}\left[\mathcal{L}(f_{\thetab}, \mathcal{D}_m^{test}) -\alpha \sum_{i=0}^{k-1}\nabla_{\thetab_i} \mathcal{L}(f_{\thetab_i}, \mathcal{D}_m^{tr}) \nabla_{\thetab} \mathcal{L}(f_{\thetab}, \mathcal{D}_m^{test})^\intercal \right].
\end{align}
Since $\mathcal{D}_m^{tr}$ and $\mathcal{D}_m^{test}$ come from the same distribution, \eqref{eq:taylor_maml} is an unbiased estimator of 
\vspace{-0.2cm}
\begin{align}\label{eq:exp_taylor_maml}
    \mathcal{M}_k = \mathbb{E}_{\mathcal{T}_m}[\mathcal{L}_m(f_\thetab) - \sum_{i=0}^{k-1} \beta_i ], \text{ where } \beta_i = \alpha \nabla_{\thetab_i} \mathcal{L}_m(f_{\thetab_i}) \nabla_{\thetab}\mathcal{L}_m(f_\thetab)^\intercal.
\end{align}
We focus on the case of $k=1$, which is 
$\mathcal{M}_1 = \mathbb{E}_{\mathcal{T}_m}\left[\mathcal{L}_m(f_{\thetab})\right] - \alpha \mathbb{E}_{\mathcal{T}_m}\left[\Vert \nabla_{\thetab}\mathcal{L}_m(f_\thetab)\Vert ^2\right]$. 
The first term on the RHS is the traditional multi-task loss evaluated at $\thetab$ for all tasks. The second term corresponds to the negative gradient norm; minimizing it means
choosing a $\thetab$ with the maximum gradient norm. Intuitively, when $\thetab$ is not a stationary point of a task, one should choose the steepest descent direction to reduce the loss maximally for a specific task, thus leading to {\em fast adaptation}.

The above understanding suggests us to
propose  the following regularized energy functional, $\widetilde{\mathcal{E}}_\alpha$, for meta-learning \new{in the RKHS induced with the NTK} for {\em fast function adaptation}:
\begin{align}\label{eq:energy_functional_1}
    \widetilde{\mathcal{E}}(\alpha, f_\thetab) = \mathbb{E}_{\mathcal{T}_m}\left[\mathcal{L}_m(f_\thetab) - \alpha \Vert \nabla _{f_\thetab}\mathcal{L}_m(f_\thetab) \Vert^2_{\mathcal{H}} \right],
\end{align}
where $\Vert \cdot\Vert_{\mathcal{H}}$ denotes the functional norm in $\mathcal{H}$, and $\alpha$ is a hyper-parameter. \new{The above objective is inspired by the Taylor expansion of the MAML objective, but is defined in the RKHS induced by the NTK. Its connection with MAML and some functional-space properties will be discussed later.}

\paragraph{Solving the Function Optimization Problem}
To \new{minimize} \eqref{eq:energy_functional_1}, we first derive  Theorem~\ref{thm:equal_gradient} to reduce the function optimization problem to a parameter optimization problem.  
\vspace{-0.2cm}
\begin{theorem}\label{thm:equal_gradient}
    Let $f_\thetab$ be a neural network with parameter $\thetab$ and $\mathcal{H}$ be the RKHS induced by the NTK $\Thetab$ of $f_{\thetab}$. Then, the following are equivalent 
    \[
    \widetilde{\mathcal{E}}(\alpha, f_\thetab) = \mathcal{M}_1, \text{ and } \Vert \nabla _{f_\thetab}\mathcal{L}_m(f_\thetab) \Vert^2_{\mathcal{H}} = \Vert \nabla_{\thetab} \mathcal{L}_m(f_\thetab)\Vert ^2~.
    \]
\end{theorem}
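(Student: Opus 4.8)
The plan is to prove the second identity first, because the first equality $\widetilde{\mathcal{E}}(\alpha, f_\thetab) = \mathcal{M}_1$ then follows immediately. Indeed, substituting the per-task identity $\Vert \nabla_{f_\thetab}\mathcal{L}_m(f_\thetab)\Vert^2_{\mathcal{H}} = \Vert \nabla_\thetab \mathcal{L}_m(f_\thetab)\Vert^2$ into the definition \eqref{eq:energy_functional_1} and using linearity of the task expectation gives $\widetilde{\mathcal{E}}(\alpha, f_\thetab) = \mathbb{E}_{\mathcal{T}_m}[\mathcal{L}_m(f_\thetab)] - \alpha\,\mathbb{E}_{\mathcal{T}_m}[\Vert \nabla_\thetab \mathcal{L}_m(f_\thetab)\Vert^2]$, which is exactly $\mathcal{M}_1$ at $k=1$. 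So everything reduces to the norm identity.

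To establish that identity, I would first compute the functional gradient of $\mathcal{L}_m$ in the RKHS. Since $\mathcal{L}_m(f)=\mathbb{E}_{\xb_m,\yb_m}[\tfrac{1}{2}\Vert f(\xb_m)-\yb_m\Vert^2]$ depends on $f$ only through point evaluations, and the reproducing property gives $f(\xb)=\langle f, \Thetab(\xb,\cdot)\rangle_{\mathcal{H}}$, the Fr\'echet derivative in $\mathcal{H}$ is $\nabla_{f}\mathcal{L}_m(f)=\mathbb{E}_{\xb_m,\yb_m}[(f(\xb_m)-\yb_m)\,\Thetab(\xb_m,\cdot)]$. Taking the squared $\mathcal{H}$-norm and applying the reproducing property a second time, $\langle \Thetab(\xb,\cdot),\Thetab(\xb',\cdot)\rangle_{\mathcal{H}}=\Thetab(\xb,\xb')$, collapses the double expectation into
\[
\Vert \nabla_{f_\thetab}\mathcal{L}_m(f_\thetab)\Vert^2_{\mathcal{H}} = \mathbb{E}_{\xb_m,\yb_m}\mathbb{E}_{\xb',\yb'}\left[(f(\xb_m)-\yb_m)^\intercal\,\Thetab(\xb_m,\xb')\,(f(\xb')-\yb')\right],
\]
where $(\xb',\yb')$ is an independent copy of $(\xb_m,\yb_m)$.

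On the parameter side, the chain rule gives $\nabla_\thetab \mathcal{L}_m(f_\thetab)=\mathbb{E}_{\xb_m,\yb_m}[(f(\xb_m)-\yb_m)^\intercal\,\tfrac{\partial f_\thetab(\xb_m)}{\partial \thetab}]$, and squaring its Euclidean norm yields the same double expectation but with the Jacobian product $\tfrac{\partial f_\thetab(\xb_m)}{\partial \thetab}\tfrac{\partial f_\thetab(\xb')}{\partial \thetab}^\intercal$ in the middle. By the definition of the NTK this product equals $\Thetab(\xb_m,\xb')$, so the two bilinear forms coincide and the identity is proved.

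The main obstacle I anticipate is making the functional-gradient computation rigorous when $f$ is vector-valued, so that $\Thetab$ is an operator- (matrix-) valued kernel: one must invoke the vector-valued reproducing property carefully and verify that the feature-map inner products reproduce the matrix $\Thetab(\xb,\xb')$ rather than a scalar, with the residuals $f(\xb_m)-\yb_m$ contracting correctly on either side. A secondary technical point is justifying the interchange of expectation and functional differentiation and the existence of the Fr\'echet derivative in $\mathcal{H}$, which should follow from standard dominated-convergence arguments for the quadratic loss but needs to be stated. Once these are in place, both squared norms reduce to the identical NTK-weighted bilinear form and the theorem follows.
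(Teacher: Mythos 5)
Your proposal is correct and follows essentially the same route as the paper's proof: both reduce the theorem to the norm identity and establish it by expanding each squared norm into an NTK-weighted bilinear double expectation over independent copies $(\xb_m,\yb_m)$ and $(\xb'_m,\yb'_m)$, using the chain rule, the definition of the NTK as the Jacobian product $\frac{\partial f_\thetab(\xb_m)}{\partial\thetab}\frac{\partial f_\thetab(\xb'_m)}{\partial\thetab}^\intercal = \Thetab(\xb_m,\xb'_m)$, and the reproducing property $\langle\Thetab(\xb_m,\cdot),\Thetab(\xb'_m,\cdot)\rangle_{\mathcal{H}}=\Thetab(\xb_m,\xb'_m)$. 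The only cosmetic differences are that you compute the two squared norms separately and match them in the middle, whereas the paper chains equalities from the parameter-space norm through to the RKHS norm in one pass, and that you specialize to the squared loss while the paper writes the argument for a generic pointwise loss $C(f_\thetab(\xb_m),\yb_m)$ so that it also covers the cross-entropy case.
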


Theorem~\ref{thm:equal_gradient} is crucial to our approach
as it indicates that solving problem \eqref{eq:energy_functional_1} is no more difficult than the original parameter-based MAML, although it only considers one-step adaptation case. Next, we will show that multi-step adaptation in the parameter space can also be well-approximated by our objective \eqref{eq:energy_functional_1} but with a scaled regularized parameter $\alpha$. In the following, we consider the squared loss $\mathcal{L}$. 
The case with the cross-entropy loss is discussed in the Appendix. We assume that  $f_{\thetab}$ is parameterized by either fully-connected or convolutional neural networks, and only consider the impact of number of hidden layers $L$ in our theoretical results.

\vspace{-0.2cm}
\begin{theorem}\label{thm:delta_bound_nn}
    Let $f_{\thetab}$ be a fully-connected neural network with $L$ hidden layers and ReLU activation function,  $s_1, ..., s_{L+1}$ be the spectral norm of the weight matrices,  $s=\max_h s_h$, and $\alpha$ be the learning rate of gradient descent. If $\alpha \leq O(qr)$ with $q=\min(1/(Ls^L), L^{-1/(L+1)})$ and $r = \min(s^{-L}, s)$, then the following holds
    \vspace{-0.3cm}
    \[
    \vert  \mathcal{M}_k -\widetilde{\mathcal{E}}(k\alpha, f_\thetab) \vert \leq O\Big(\dfrac{1}{L}\Big).
    \]
\end{theorem}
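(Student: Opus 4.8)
The plan is to turn the claimed approximation into a statement purely about the inner-loop trajectory $\{\thetab_i\}$ in parameter space, and then control it with depth- and spectral-norm-explicit bounds on the derivatives of a ReLU network. First I would invoke Theorem~\ref{thm:equal_gradient} to replace the functional regularizer by its parameter-space counterpart, so that $\widetilde{\mathcal{E}}(k\alpha,f_\thetab)=\mathbb{E}_{\mathcal{T}_m}[\mathcal{L}_m(f_\thetab)-k\alpha\|\nabla_\thetab\mathcal{L}_m(f_\thetab)\|^2]$. Subtracting this from $\mathcal{M}_k=\mathbb{E}_{\mathcal{T}_m}[\mathcal{L}_m(f_\thetab)-\sum_{i=0}^{k-1}\alpha\nabla_{\thetab_i}\mathcal{L}_m(f_{\thetab_i})\nabla_\thetab\mathcal{L}_m(f_\thetab)^\intercal]$ and using $\thetab_0=\thetab$ (so the $i=0$ term vanishes), the loss terms cancel and the difference telescopes into
\[
\mathcal{M}_k-\widetilde{\mathcal{E}}(k\alpha,f_\thetab)=\alpha\,\mathbb{E}_{\mathcal{T}_m}\Big[\sum_{i=1}^{k-1}\big(\nabla_\thetab\mathcal{L}_m(f_\thetab)-\nabla_{\thetab_i}\mathcal{L}_m(f_{\thetab_i})\big)\nabla_\thetab\mathcal{L}_m(f_\thetab)^\intercal\Big],
\]
so the entire gap is driven by how much the meta-gradient drifts along the $k$ inner-loop steps.

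Next I would bound each summand by Cauchy--Schwarz together with the Lipschitz-smoothness of $\mathcal{L}_m$. Writing $G$ for a uniform bound on $\|\nabla_\thetab\mathcal{L}_m\|$ along the trajectory and $\gamma$ for the smoothness constant (so that $\|\nabla_\thetab\mathcal{L}_m(f_\thetab)-\nabla_{\thetab_i}\mathcal{L}_m(f_{\thetab_i})\|\le\gamma\|\thetab-\thetab_i\|$; I use $\gamma$ rather than $\beta$ to avoid the clash with the $\beta_i$ of the paper), the gradient-descent recursion gives $\|\thetab-\thetab_i\|\le i\alpha G$. Hence $|\mathcal{M}_k-\widetilde{\mathcal{E}}(k\alpha,f_\thetab)|\le \tfrac12\gamma\,\alpha^2 G^2\,k(k-1)=O(\gamma\alpha^2G^2k^2)$. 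At this level the argument is routine; everything hinges on exhibiting $G$ and $\gamma$ with the correct dependence on $s$ and $L$.

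The main obstacle, and the technical heart of the proof, is the third step: bounding the first and second parameter-derivatives of a depth-$L$ ReLU network through the spectral norms $s_h$. Since $\nabla_\thetab\mathcal{L}_m=\mathbb{E}[(f(\x)-\y)\nabla_\thetab f]$ and $\nabla^2_\thetab\mathcal{L}_m=\mathbb{E}[\nabla_\thetab f(\nabla_\thetab f)^\intercal+(f(\x)-\y)\nabla^2_\thetab f]$, I need estimates of the form $\|f(\x)\|\lesssim(\prod_h s_h)\|\x\|$, $\|\nabla_\thetab f\|\lesssim \mathrm{poly}(L)\,s^{L}$, and $\|\nabla^2_\thetab f\|\lesssim \mathrm{poly}(L)\,s^{L}$, obtained by differentiating the layerwise composition and using that each factored block is a product of weight matrices controlled by their spectral norms (the almost-everywhere differentiability of ReLU lets one work on a fixed linear region, where the activation patterns are locally constant). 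Feeding these into $G$ and $\gamma$ shows that both scale like $\mathrm{poly}(L)\,s^{cL}$ for explicit constants $c$.

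Finally I would substitute these bounds into $O(\gamma\alpha^2G^2k^2)$ and verify that the hypothesis $\alpha\le O(qr)$, with $q=\min(1/(Ls^L),L^{-1/(L+1)})$ and $r=\min(s^{-L},s)$, is exactly calibrated to cancel the $s^{cL}$ factors, leaving a residual of order $1/L$. The delicate point is the bookkeeping: the two-sided $\min$ in $q$ and $r$ is what lets a single threshold on $\alpha$ kill the spectral-norm blow-up regardless of whether the network is contractive ($s\le1$) or expansive ($s\ge1$), and checking that the surviving power of $L$ is exactly $-1$ is where the factor $L^{-1/(L+1)}$ in $q$ enters. The convolutional case is identical in structure, with the spectral norms of the weight matrices replaced by the operator norms of the associated convolution maps, so I would carry out the fully-connected case in detail and then indicate the substitutions needed for CNNs.
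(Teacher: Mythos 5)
Your opening moves coincide with the paper's: Theorem~\ref{thm:equal_gradient} converts the functional regularizer into $k\alpha\Vert\nabla_\thetab\mathcal{L}_m(f_\thetab)\Vert^2$, the loss terms cancel, the $i=0$ term vanishes, and the gap reduces to the drift terms $\alpha\sum_{i\geq 1}\big(\nabla_\thetab\mathcal{L}_m(f_\thetab)-\nabla_{\thetab_i}\mathcal{L}_m(f_{\thetab_i})\big)\nabla_\thetab\mathcal{L}_m(f_\thetab)^\intercal$. The divergence, and the first genuine gap, is in how you control the drift. Your tool is a global Lipschitz-smoothness constant $\gamma$ for $\nabla_\thetab\mathcal{L}_m$, i.e.\ a Hessian bound, but for ReLU networks no such constant exists: the parameter gradient of a ReLU network is discontinuous across activation-pattern boundaries (already for $f=w\,\sigma(\theta x)$ the derivative in $\theta$ jumps at $\theta=0$), so the inequality $\Vert\nabla_\thetab\mathcal{L}_m(f_\thetab)-\nabla_{\thetab_i}\mathcal{L}_m(f_{\thetab_i})\Vert\leq\gamma\Vert\thetab-\thetab_i\Vert$ fails for points straddling a boundary, and restricting to ``a fixed linear region'' does not repair this because the inner-loop iterates $\thetab_i$ are not guaranteed to stay in one region. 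This is exactly why the paper never touches second derivatives: Lemma~\ref{lemma:activation_perturbation} controls the forward perturbations $\triangle g^h$ using only the $1$-Lipschitz property of ReLU, Lemma~\ref{lemma:back_perturbation} handles the backward factors $\triangle b^h$ (which contain the discontinuous indicator matrices $D^h$) through the identity $b^h(g^h)^\intercal=f_\thetab^\intercal$, and Theorem~\ref{thm:perturbation} assembles these into a bound on $\triangle\,\partial f_\thetab/\partial\thetab$ layer by layer.

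Second, even granting yourself a smoothness constant on a neighborhood of the trajectory, your bookkeeping does not close. The generic worst-case values are $G=O(\sqrt{L}\,s^{L})$ and $\gamma=O(Ls^{2L})$ (the Hessian of the squared loss is dominated by the $\nabla_\thetab f\,\nabla_\thetab f^\intercal$ term), and in the regime $s\geq 1$ the hypothesis gives $\alpha\leq qr=1/(Ls^{2L})$; substituting, your bound $\tfrac12\gamma\alpha^2G^2k(k-1)$ evaluates to $O(k^2)$ --- a constant, not $O(1/L)$. The threshold on $\alpha$ is calibrated so that $\alpha^2\gamma G^2=O(1)$, and squaring $\alpha$ buys nothing because $\gamma G^2$ carries the matching $L^2s^{4L}$. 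The missing factor of $L$ is precisely what separates your drift estimate from the paper's: your chain gives drift $\leq\gamma\cdot i\alpha G\approx\sqrt{L}\,s^{L}$ at the maximal step size, whereas the paper proves the uniform estimate $\Vert\nabla_{\tilde{\thetab}}\mathcal{L}_m(f_{\tilde{\thetab}})-\nabla_\thetab\mathcal{L}_m(f_\thetab)\Vert\leq O(s^{L}/\sqrt{L})+O(1/(s\sqrt{L}))$ by propagating the \emph{per-layer} spectral perturbations $\Vert\triangle W^h\Vert_2\leq\alpha\,O(s^{L})$ through the network, rather than using the full-vector displacement $\Vert\triangle\thetab\Vert\leq\alpha G$; multiplying by $\alpha\Vert\nabla_\thetab\mathcal{L}_m\Vert\leq qr\cdot O(\sqrt{L}s^{L})$ then yields $O(qs^{L})\leq O(1/L)$ per inner step. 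So the layerwise perturbation analysis is not an optional refinement of your plan ``routine'' steps --- it is where the theorem's $1/L$ comes from, and the Cauchy--Schwarz-plus-smoothness shortcut cannot replace it.
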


\vspace{-0.2cm}
\begin{theorem}\label{thm:delta_bound_cnn}
    Let $f_{\thetab}$ be a convolutional neural network with $L-l$ convolutional layers and $l$ fully-connected layers and with ReLU activation function, and $d_x$ be the input dimension. 
    Denote  by $W^h$   the parameter \textbf{vector} of the convolutional layer for $h\leq L-l$, and 
    the weight \textbf{matrices} of the fully connected layers for $L-l+1\leq h\leq L+1$. $\Vert \cdot \Vert_2$ means both the spectral norm of a matrix and the Euclidean norm of a vector.  
    Define $s_h = \sqrt{d_x}\Vert W^h \Vert_2$ if $h=1,...,L-l$, and $\Vert W^h \Vert_2$ if $L-l+1 \leq h \leq L+1$. 
    Let $s=\max_h s_h$ and $\alpha$ be the learning rate of gradient descent. If $\alpha \leq O(qr)$ with $q=\min(1/(Ls^L), L^{-1/(L+1)})$ and $r = \min(s^{-L}, s)$, the following holds
    \[
    \vert \mathcal{M}_k - \widetilde{\mathcal{E}}(k\alpha, f_\thetab) \vert \leq O\Big(\dfrac{1}{L}\Big).
    \]
\end{theorem}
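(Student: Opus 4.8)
The plan is to follow the same scheme as the proof of Theorem~\ref{thm:delta_bound_nn}, isolating the genuinely new ingredient, which is controlling the convolutional layers through the rescaled quantity $s_h = \sqrt{d_x}\Vert W^h\Vert_2$. First I would invoke Theorem~\ref{thm:equal_gradient} to replace the RKHS norm $\Vert \nabla_{f_\thetab}\mathcal{L}_m(f_\thetab)\Vert^2_{\mathcal{H}}$ by the parameter-space norm $\Vert \nabla_{\thetab}\mathcal{L}_m(f_\thetab)\Vert^2$, so that $\widetilde{\mathcal{E}}(k\alpha, f_\thetab)$ becomes $\mathbb{E}_{\mathcal{T}_m}[\mathcal{L}_m(f_\thetab) - k\alpha\Vert \nabla_{\thetab}\mathcal{L}_m(f_\thetab)\Vert^2]$. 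Subtracting this from the definition of $\mathcal{M}_k$ and recalling $\thetab_0 = \thetab$, the discrepancy collapses to
\[
\mathcal{M}_k - \widetilde{\mathcal{E}}(k\alpha, f_\thetab) = \alpha\,\mathbb{E}_{\mathcal{T}_m}\Big[\sum_{i=0}^{k-1}\big(\nabla_{\thetab}\mathcal{L}_m(f_\thetab) - \nabla_{\thetab_i}\mathcal{L}_m(f_{\thetab_i})\big)\nabla_{\thetab}\mathcal{L}_m(f_\thetab)^\intercal\Big],
\]
so the entire task reduces to bounding how far the adapted gradient $\nabla_{\thetab_i}\mathcal{L}_m(f_{\thetab_i})$ drifts from the base gradient.

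Next I would control this drift by smoothness. By the triangle inequality and Cauchy--Schwarz the error is at most $\alpha\sum_{i=0}^{k-1}\Vert \nabla_{\thetab_i}\mathcal{L}_m(f_{\thetab_i}) - \nabla_{\thetab}\mathcal{L}_m(f_\thetab)\Vert\,\Vert \nabla_{\thetab}\mathcal{L}_m(f_\thetab)\Vert$. Bounding the first factor by $\beta\Vert \thetab_i - \thetab\Vert$ for a gradient-Lipschitz (Hessian) constant $\beta$, and the displacement by $\Vert \thetab_i - \thetab\Vert \leq i\alpha G$, where $G$ uniformly bounds the per-step gradient norms, yields an estimate of the form $\alpha^2\beta G^2\,k(k-1)/2$. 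It then remains to show that, for the CNN, $G$ and $\beta$ obey the same $s^L$-type product bounds as in the fully-connected case, so that the prescribed learning-rate window $\alpha \leq O(qr)$ with $q=\min(1/(Ls^L), L^{-1/(L+1)})$ and $r=\min(s^{-L}, s)$ forces the whole expression below $O(1/L)$.

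The genuinely convolutional step, and the main obstacle, is establishing layer-wise operator-norm control. I would write each convolution with filter vector $W^h$ as a linear map via its Toeplitz/unfolding representation and show its spectral norm is at most $\sqrt{d_x}\Vert W^h\Vert_2$; this single estimate is exactly what justifies defining $s_h = \sqrt{d_x}\Vert W^h\Vert_2$ for $h\leq L-l$, and it places the convolutional and fully-connected layers on the same footing. With this in hand, the forward activations, the backpropagated gradients, and the second-order (Hessian) terms all inherit the same product-of-$s_h$ bounds as in Theorem~\ref{thm:delta_bound_nn}, since each layer contributes a Jacobian of norm at most $s_h \leq s$ whether it is convolutional or dense, and ReLU contributes only a unit Lipschitz factor. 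The remaining verification---that weight sharing does not inflate the gradient-norm and Hessian estimates beyond the $s^L$ scaling---is then a routine adaptation of the fully-connected argument, after which substituting the resulting $G$ and $\beta$ into the telescoped difference and applying the learning-rate condition delivers the claimed $O(1/L)$ bound.
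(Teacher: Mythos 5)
Your starting decomposition is sound and matches the paper's: after invoking Theorem~\ref{thm:equal_gradient}, the error is exactly $\alpha\,\mathbb{E}_{\mathcal{T}_m}\big[\sum_{i=0}^{k-1}\big(\nabla_{\thetab}\mathcal{L}_m(f_\thetab)-\nabla_{\thetab_i}\mathcal{L}_m(f_{\thetab_i})\big)\nabla_{\thetab}\mathcal{L}_m(f_\thetab)^\intercal\big]$, i.e.\ the sum of the terms $\vert\beta_i-\alpha\Vert\nabla_{f_\thetab}\mathcal{L}_m(f_\thetab)\Vert^2_{\mathcal{H}}\vert$ that the paper bounds one by one. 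Your convolution-specific ingredient is also exactly the paper's: the paper unfolds each convolution into a structured block matrix $U^h=[V_1^h,\ldots,V_{d_x}^h]$ whose blocks share the entries of the filter vector $W^h$, and bounds $\Vert U^h\Vert_2\le\Vert U^h\Vert_{\mathcal{F}}=\sqrt{d_x}\Vert W^h\Vert_2$ (plus a matching lower bound, giving $\Vert U^h\Vert_2=\Theta(s_h)$). The difference in packaging is that the paper turns this into a literal reduction: the CNN \emph{is} an equivalent fully-connected network with weight matrices $U^h$, so Theorem~\ref{thm:delta_bound_nn} applies verbatim and nothing about forward activations, backward quantities, or gradient drift needs to be re-derived.

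The genuine gap is your smoothness step. You bound the gradient drift by $\beta\Vert\thetab_i-\thetab\Vert$ for a ``gradient-Lipschitz (Hessian) constant $\beta$,'' but the theorem assumes ReLU activations, and a ReLU network admits no such constant: the parameter-gradient is discontinuous across activation-pattern boundaries. For a single unit $f(w)=v\,\sigma(wx)$ one has $\partial f/\partial w=vx\,\mathbf{1}\{wx>0\}$, which jumps by $vx$ under an arbitrarily small change of $w$ near $wx=0$, so no inequality of the form $\Vert\nabla_{\thetab_i}\mathcal{L}_m(f_{\thetab_i})-\nabla_{\thetab}\mathcal{L}_m(f_\thetab)\Vert\le\beta\Vert\thetab_i-\thetab\Vert$ can hold uniformly, and there are no classical ``second-order (Hessian) terms'' to control. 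This is precisely what the machinery behind Theorem~\ref{thm:delta_bound_nn} is built to circumvent: it proves threshold-type perturbation bounds (if every $\Vert\triangle W^h\Vert_2$ stays below an explicit threshold, then $\triangle g^h$, $\triangle b^h$ and $\triangle(\partial f_\thetab/\partial\thetab)$ are small), using the $1$-Lipschitzness of ReLU only on the forward pass and the identity $b^h(g^h)^\intercal=f_\thetab^\intercal$ to handle the backward pass, and it then controls the gradient drift through the product structure of the squared-loss gradient $\nabla_\thetab\mathcal{L}_m(f_\thetab)=\mathbb{E}\big[(f_\thetab(\xb_m)-\yb_m)\,\partial f_\thetab(\xb_m)/\partial\thetab^\intercal\big]$, never through a Hessian. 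Your argument becomes correct (and coincides with the paper's) if you replace the $\beta\Vert\thetab_i-\thetab\Vert$ bound by this perturbation analysis --- most economically by completing the reduction and citing Theorem~\ref{thm:delta_bound_nn} for the equivalent fully-connected network, which is exactly what the paper does.
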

\new{The above Theorems indicate that, for a meta-model with fully-connected and convolutional layers, the proposed Meta-RKHS-\ROM{1} can be an efficient approximation of MAML with a bounded error.}

\vspace{-0.3cm}
\paragraph{Comparisons with Reptile and MAML}
\new{Similar to Reptile and MAML, the testing stage of Meta-RKHS-\ROM{1} also requires gradient-based adaptation on meta-test tasks.} 
By Theorem \ref{thm:equal_gf}, we known that gradient flow of an energy functional can be approximated by gradient descent in a parameter space. Reptile with 1-step adaptation \citep{reptile18} is equivalent to the approximation of the gradient flow of $\widetilde{\mathcal{E}}(\alpha, f_\thetab)$ with $\alpha=0$, which does not include the fast-adaptation regularization as in our method. \new{For a fairer comparison on the efficiency, we will discuss the computational complexity later.}



From the equivalent parameter-optimization form indicated in Theorem~\ref{thm:equal_gradient}, we know that our energy functional $\widetilde{\mathcal{E}}(\alpha, f_\thetab)$ is closely related to MAML. However, with this form, our method does not need the explicit adaptation steps in \new{training} ({\it i.e.}, the inner-loop of MAML), leading to a \new{simpler} optimization problem. We will show that our proposed method leads to better results.

\subsection{Meta-RKHS-\ROM{2}: Meta-Learning in RKHS with a Closed-form Adaptation}
In this section, we present our second solution for meta-learning in RKHS by deriving a closed-form adaptation function, {\it i.e.}, we focus on a case where $\Adapt(f, \mathcal{D}_m^{tr})$ is  analytically solvable using the theory of NTK. 
Specifically, we are given a loss function $\mathcal{L}$, 
tasks $\mathcal{T}_m$ with randomly split training set $\mathcal{D}_m^{tr} = \{\xb^{tr}_{m,i}, \yb^{tr}_{m,i}\}_{i=1}^n$, and testing set $\mathcal{D}_m^{test}$. Let  $\thetab^t_m$ and $f^t_{m, \thetab}$ denote the parameters and the corresponding function at time $t$ adapted by task $\mathcal{T}_m$ from the meta parameter $\thetab$ and meta function $f_{\thetab}$, respectively. From the NTK theory \citep{jacot2018neural,arora19, NIPS2019_9063}, we can write the function/parameter evolution as:
\[
\dfrac{\textup{d}\thetab^t_m}{\textup{d}t} = -\nabla_{\thetab_m^t} \mathcal{L}(f^t_{m, \thetab}, \mathcal{D}_m^{tr}),\quad\text{ and }\quad   \dfrac{\textup{d} f^t_{m, \thetab}}{\textup{d} t}= \dfrac{\textup{d} \thetab_m^t}{\textup{d} t} \dfrac{\partial f^t_{m, \thetab}}{\partial \thetab_m^t} ^\intercal= \sum_{i=1}^n\dfrac{\partial \mathcal{L}(f^t_{m, \thetab}, \mathcal{D}_m^{tr})}{\partial f^t_{m,\thetab}(\xb^{tr}_{m,i})} \Thetab(\xb^{tr}_{m,i}, \cdot). 
\]
The above differential equation corresponds to the adaptation step, {\it i.e.}, how to adapt the meta parameter/function for task $m$. 
By the NTK theory, we can show that this admits closed-form solutions. In our meta-learning settings, this indicates that no explicit adaptation steps are necessary.

To see why this is the case, 
we first investigate the regression case, where the loss function $\mathcal{L}$ is 
the squared loss. Let $\xb \in \mathcal{D}_m^{test}$ be a test data point. As shown in \cite{arora19, NIPS2019_9063}, with a large enough neural network we can safely assume that NTK will not change too much during the training. In this case, we can have a closed-form solution for $f_{m,\thetab}^t$ as 
\begin{align} \label{eq:ntk_time_t}
        f_{m, \thetab}^t(\xb) = f_{\thetab}(\xb) + H(\xb, \Xb^{tr}_m)H^{-1}(\Xb^{tr}_m, \Xb^{tr}_m)\left(e^{-tH(\Xb^{tr}_m, \Xb^{tr}_m)} - \Ib\right)\left(f_{\thetab}(\Xb^{tr}_m) - Y^{tr}\right)~,
\end{align}
where $e$ is the matrix exponential map, which can be approximated by $Pad\acute{e}$ approximation \citep{pade96}. $H(\Xb^{tr}_m, \Xb^{tr}_m)$ is an $n \times n$ kernel matrix with its $(i,j)$ element being $\Thetab(\xb_{m,i}, \xb_{m,j})$, $H(\xb, \Xb^{tr}_m)$ is a $1 \times n$ vector with its $i$-th element being $\Thetab(\xb, \xb_{m,i})$,  $f_{\thetab}(\Xb^{tr}_m) \in R^n$ is the predictions of all training data at the initialization, and $Y^{tr} \in R^n$ is the target value of the training data. 
Specifically, at time $t = \infty$, we have 
\begin{align} \label{eq:ntk_time_infinity}
    f_{m, \thetab}^{\infty}(\xb) = f_{\thetab}(\xb) + H(x, \Xb^{tr}_m)H^{-1}(\Xb^{tr}_m, \Xb^{tr}_m)\left(Y^{tr} - f_{\thetab}(\Xb^{tr}_m)\right)~.
\end{align}
The above results allow us to directly define an energy functional by substituting $\Adapt(f, \mathcal{D}^{tr}_m)$ in \eqref{eq:goal_function_space} with its closed-form solution $f_{m, \thetab}^t$. In other words, our new energy functional is
\begin{align}\label{eq:energy_functional_2}
    \overline{\mathcal{E}}(t, f_{\thetab}) = \mathbb{E}_{\mathcal{T}_m}\left[\mathcal{L}_m(f_{m, \thetab}^t)\right]~,
\end{align}
where $f_{m, \thetab}^t$ is defined in \eqref{eq:ntk_time_t}, and $\mathcal{L}_m(f_{m, \thetab}^t)$ is the expectation of $\mathcal{L}\Big(f_{m, \thetab}^t, \mathcal{D} _m^{test}\Big)$. 
For classification problems, we follow the same strategy as in \cite{arora19} to extend regression to classification. Mores details can be found in the Appendix, including the algorithm in Appendix~\ref{app:alg}.

\paragraph{On Potential Robustness of Meta-RKHS-\ROM{2}}

\new{
Our extensive empirical studies show that Meta-RKHS-\ROM{2} is a more robust model than related baselines. We provide an intuitive explanation on the potential robustness of Meta-RKHS-\ROM{2}, as we find current theories of both robustness machine learning and NTK are insufficient for a formal explanation. Our explanation is based on some properties of both the meta-learning framework and NTK: 1) Strong initialization (meta model): For NTK to generalize well, we argue that it is necessary to start the model with a good initialization. This is automatically achieved in our meta-learning setting, where the meta model serves as the initialization for NTK predictions. Actually, this has been supported by recent research \citep{Fortetal_nips20}, which shows that there is a chaotic stage in the NTK prediction with finite neural networks, and the NTK regime can be reachable with a good initialization. 2) Low complex classification boundary: It is known that NTK is a linear model in the NTK regime. Intuitively, generating adversarial samples with a lower complex model should be relatively harder because there is less data in the vicinity of the decision boundary compared to a more complex model, making the probability of the model being attacked smaller. Thus we argue that our model can be more robust than standard meta learning models.
3) Our NTK-based model is robust enough to adapt with different time steps. And these finite time steps can be more robust to adversarial attacks than that of the infinite-time limit partly due to the complexity of back-propagating gradients. We note each of the individual factors might not be enough to ensure robustness. Instead, we argue it is the combination effect of these factors that lead to robustness of our model. Formal analysis is out of the scope of this paper and  left for future work.
}

\vspace{-0.1cm}
\paragraph{Connection with Meta-RKHS-\ROM{1}}
The proposed two methods choose different strategies 
to avoid explicit adaptation in meta-learning, which seem to be two very different algorithms.
We prove below theorem, which indicates that the difference of the underlying gradient flows of the two algorithms indeed increases w.r.t.\! both $T$ and the depth $L$ of a DNN (we only consider impacts of $T$ and $L$). 
\vspace{-0.2cm}
\begin{theorem}\label{thm:energy_functional_relation}
    Let $f_{\thetab}$ be a neural network with $L$ hidden layers, with each layer being either fully-connected or convolutional. Assume that $\|\mathcal{L}\|_{\infty} < \infty$. Then, $error(T)  = \vert \widetilde{\mathcal{E}}(T, f_{\thetab})  - \overline{\mathcal{E}}(T, f_{\thetab}) \vert$ is a non-decreasing function of $T$. Furthermore, for arbitrary $T>0$ we have $error(T) \leq O\big( T^{2L+3}\big)$.
\end{theorem}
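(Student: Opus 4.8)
The plan is to exploit the fact that $\widetilde{\mathcal{E}}(T,f_\thetab)$ is \emph{affine} in $T$ (it consists of a constant term plus a term linear in $T$), whereas $\overline{\mathcal{E}}(T,f_\thetab)$ is the expected loss of the closed-form adapted function \eqref{eq:ntk_time_t}, which is an analytic function of $T$. First I would diagonalize the NTK Gram matrix $H \equiv H(\Xb^{tr}_m,\Xb^{tr}_m) = \sum_j \lambda_j v_j v_j^\intercal$ (with $\lambda_j \ge 0$) and set $u = f_\thetab(\Xb^{tr}_m) - Y^{tr}$, $\tilde u_j = v_j^\intercal u$. Substituting \eqref{eq:ntk_time_t} into \eqref{eq:energy_functional_2} and using that $\mathcal{D}_m^{tr}$ and $\mathcal{D}_m^{test}$ are drawn from the same task distribution (the same reduction already used to justify \eqref{eq:energy_functional_1}), the expected adapted loss collapses, after taking $\mathbb{E}_{\mathcal{T}_m}$, to a scalar spectral expression of the form $\overline{\mathcal{E}}(T) = \frac{1}{2n}\sum_j \tilde u_j^2\, e^{-2\lambda_j T}$ (up to the normalization fixed by the evolution used in \eqref{eq:ntk_time_t}). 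Two facts are then immediate: $\overline{\mathcal{E}}(0) = \widetilde{\mathcal{E}}(0) = \mathbb{E}_{\mathcal{T}_m}[\mathcal{L}_m(f_\thetab)]$, and the first-order terms coincide, $\overline{\mathcal{E}}'(0) = -\mathbb{E}_{\mathcal{T}_m}\Vert \nabla_{f_\thetab}\mathcal{L}_m(f_\thetab)\Vert^2_{\mathcal{H}} = \widetilde{\mathcal{E}}'(0)$, the last equality being exactly the construction of \eqref{eq:energy_functional_1} as the $k=1$ Taylor term together with Theorem~\ref{thm:equal_gradient}. Hence $\widetilde{\mathcal{E}}(T)$ is precisely the first-order Taylor polynomial of $\overline{\mathcal{E}}(T)$ at $T=0$.

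For the monotonicity claim I would write the error in the closed scalar form $error(T) = \overline{\mathcal{E}}(T) - \widetilde{\mathcal{E}}(T) = \frac{1}{2n}\sum_j \tilde u_j^2\,\phi(2\lambda_j T)$, where $\phi(x) = e^{-x} - 1 + x$. Since each $\lambda_j \ge 0$, the tangent-line bound $e^{-x}\ge 1-x$ gives $\phi(x)\ge 0$ for $x\ge 0$, so $error(T)\ge 0$ and the absolute value in the statement is in fact redundant; moreover $\phi'(x) = 1 - e^{-x}\ge 0$ yields $\frac{\rmd}{\rmd T}\phi(2\lambda_j T) = 2\lambda_j(1-e^{-2\lambda_j T})\ge 0$, so $error(T)$ is non-decreasing. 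Equivalently, $\overline{\mathcal{E}}$ is convex in $T$ (a non-negative combination of the convex maps $T\mapsto e^{-2\lambda_j T}$) while $\widetilde{\mathcal{E}}$ is affine and tangent to it at the origin, so their difference is a convex function vanishing to first order at $T=0$, hence non-negative and non-decreasing on $[0,\infty)$.

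For the quantitative bound I would start from the remainder estimate $0\le \phi(x)\le x^2/2$ for $x\ge 0$, which gives $error(T)\le \frac{T^2}{n}\sum_j \lambda_j^2 \tilde u_j^2 = \frac{T^2}{n}\,u^\intercal H^2 u \le \frac{T^2}{n}\Vert H\Vert^2\Vert u\Vert^2$, and then control $\Vert u\Vert^2 \le 2n\Vert\mathcal{L}\Vert_\infty$ using the hypothesis $\Vert\mathcal{L}\Vert_\infty<\infty$. This already produces a bound of shape $C\,\Vert H\Vert^2\,T^2$. The remaining step is to convert the spectral factor $\Vert H\Vert$ into the claimed depth dependence: the NTK Gram matrix of an $L$-layer fully-connected or convolutional network is governed by $L$-fold products of the layer-wise spectral norms $s_1,\dots,s_{L+1}$, which are exactly the quantities estimated in Theorems~\ref{thm:delta_bound_nn} and~\ref{thm:delta_bound_cnn}. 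Feeding those depth-$L$ bounds in, and tracking the powers of $T$ jointly with the layer norms through the matrix-exponential remainder rather than using the crude $\phi(x)\le x^2/2$ shortcut, yields the stated $O(T^{2L+3})$.

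The main obstacle is precisely this last piece of bookkeeping. The elementary spectral reduction, the sign, and the monotonicity are all immediate once \eqref{eq:ntk_time_t} is substituted; but my direct estimate only exhibits the leading behavior $O(T^2)$ with a depth-dependent constant, so recovering the exponent $2L+3$ requires carefully propagating the $L$-fold spectral-norm products through the remainder of $e^{-2\lambda_j T}$ and accounting for how the depth couples to the power of $T$. I expect the delicate part to be making this propagation clean enough that the depth appears as the exponent $2L+3$ rather than merely inside a multiplicative constant, reusing the spectral-norm control established for Theorems~\ref{thm:delta_bound_nn}--\ref{thm:delta_bound_cnn}.
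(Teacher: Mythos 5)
There is a genuine gap, and it sits exactly where your proposal leans hardest. Your central step---substituting \eqref{eq:ntk_time_t} into \eqref{eq:energy_functional_2} and collapsing to $\overline{\mathcal{E}}(T)=\frac{1}{2n}\sum_j \tilde u_j^2 e^{-2\lambda_j T}$---is not valid. That spectral formula is the \emph{training} loss of the linearized dynamics; $\overline{\mathcal{E}}$ is defined as the expected \emph{test} loss of the adapted function, and the test prediction involves the cross-kernel $H(\xb,\Xb^{tr}_m)$, whose population second moment $\mathbb{E}_{\xb}\bigl[H(\xb,\Xb^{tr}_m)^\intercal H(\xb,\Xb^{tr}_m)\bigr]$ neither equals $H(\Xb^{tr}_m,\Xb^{tr}_m)$ nor diagonalizes in the same basis; sampling $\mathcal{D}_m^{tr}$ and $\mathcal{D}_m^{test}$ from the same distribution does not make the generalization gap vanish. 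Consequently the convexity, non-negativity, and monotonicity you derive from $\phi(x)=e^{-x}-1+x$ all rest on a formula that does not describe $\overline{\mathcal{E}}$; indeed the paper's own ablation (finite $t\approx 10$ beating $t=\infty$) shows the test loss is not monotone in $T$, so no argument proving $\overline{\mathcal{E}}$ is a convex decreasing exponential mixture can be sound. The paper avoids this by never passing to the spectral picture: it writes $\overline{\mathcal{E}}(T,f_{\thetab})=\mathbb{E}_{\mathcal{T}_m}\bigl[\mathcal{L}_m(f_{\thetab})-\int_0^T\Vert\nabla_{\thetab^t}\mathcal{L}_m(f_{m,\thetab}^t)\Vert^2\,\mathrm{d}t\bigr]$, so that $\widetilde{\mathcal{E}}(T)-\overline{\mathcal{E}}(T)=\int_0^T\bigl[2\nabla_{\thetab}\mathcal{L}_m(f_{\thetab})\triangle(\nabla_{\thetab}\mathcal{L}_m(f_{\thetab}))^\intercal+\Vert\triangle(\nabla_{\thetab}\mathcal{L}_m(f_{\thetab}))\Vert^2\bigr]\mathrm{d}t$, and argues monotonicity by viewing $\widetilde{\mathcal{E}}$ as the first-order (tangent-line) approximation whose truncation error accumulates with $T$---the one part of your proposal (tangency at $T=0$) that is genuinely aligned with the paper.

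The second problem is one you half-concede but whose proposed repair cannot work: the exponent $2L+3$ is structurally unreachable from a fixed-kernel analysis. In your setup the NTK, hence every eigenvalue $\lambda_j$, is constant in time, so depth only ever enters multiplicative constants and the remainder of $e^{-2\lambda_j T}$ stays $O(T^2)$; no amount of bookkeeping of layer-wise spectral norms through the matrix exponential will couple $L$ to the power of $T$. The paper's exponent comes precisely from \emph{not} linearizing: gradient flow moves parameters by $\Vert\thetab^0-\thetab^t\Vert\le O(\sqrt{t})$, and this perturbation is propagated through the $L$ layers of the actual network by the same induction used for Lemmas \ref{lemma:activation_perturbation}--\ref{lemma:gradient_weight_matrices_perturbation}, giving $\Vert\triangle g^h\Vert\le O(t^{h/2})$ and $\Vert\triangle(\partial f_{\thetab}(\xb)/\partial\thetab)\Vert_{\mathcal{F}}\le O(t^{(L+1)/2}\sqrt{L+1})$, hence $\Vert\triangle(\nabla_{\thetab}\mathcal{L}_m(f_{\thetab}))\Vert\le O(t^{L+1}\sqrt{L+1})$; integrating the squared perturbation over $[0,T]$ is what produces $O(T^{2L+3})$. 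To fix your proof you would need to abandon the time-independent kernel picture and carry out this time-dependent, parameter-space perturbation argument, at which point you would essentially be reproducing the paper's proof.
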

\vspace{-0.2cm}

\new{Actually, Meta-RKHS-\ROM{2} implicitly contains a term of functional gradient norm because $\overline{\mathcal{E}}(T, f_{\thetab}) = \mathbb{E}_{\mathcal{T}_m}\left[\mathcal{L}_m(f_{\thetab}) - \int_0 ^T \Big\Vert \nabla_{\thetab^t}\mathcal{L}_m(f_{m, \thetab}^t)  \Big\Vert ^2  \text{d} t\right]$. The difference compared to Meta-RKHS-\ROM{1} mainly comes from the fact that Meta-RKHS-\ROM{1} can be regarded as an approximation of time-discrete adaptation, while Meta-RKHS-\ROM{2} is based on time-continuous adaptation.} In our experiments, we observe that Meta-RKHS-\ROM{1} is as fast as FOMAML, which means that it is more computationally efficient than the standard MAML. Meanwhile Meta-RKHS-\ROM{2} is the more robust model in  tasks of adversarial attack and out-of-distribution adaptation.

\vspace{-0.2cm}
\paragraph{Connection with iMAML} 
Our proposed method is similar to the iMAML algorithm \citep{finn2019} in the sense that both methods try to solve meta-learning without executing  the optimization path. Different from iMAML, which still relies on an iterative solver, our method only needs to solve a \new{simpler} optimization problem \new{due to the closed-form adaptation}.

\begin{table}[t!]
    \vspace{-0.1in}
    \centering
    \caption{Running time comparison per iteration with $C_1 = d_xp+Lp^2$ and $C_2 =d_xp+Ld_xp^2$.}
    \label{tab:time_complexity}
    \begin{tabular}{ccccc}
        \toprule
        & FOMAML & Reptile & Meta-RKHS-I & Meta-RKHS-II\\
        \midrule
        Fully-connected & $O(n(k+1)C_1)$ & $O(nkC_1)$ & $O(nC_1)$& $O(nC_1+n^3)$\\
        Convolutional &  $O(n(k+1)C_2)$ & $O(nkC_2)$ & $O(nC_2)$& $O(nC_2+n^3)$\\
        \bottomrule
    \end{tabular}
    \vspace{-0.6cm}
\end{table}
\subsection{Time Complexity Analysis}
We compare the time complexity of our proposed methods with other first-order meta-learning methods. Without loss of generality, we analyze the complexity in the case of a $L$-layer MLP or $L$-layer convolutional neural networks. Recall that $d_x$ is the input dimension. Assume each layer has width (filter number) $O(p)$. Let $n$ be the data batch size, $k$ the adaptation steps of inner-loop optimization. We summarize the time complexity in Table \ref{tab:time_complexity}, where we simply assume the complexity of multiplying matrices with sizes $a\times b$ and $b\times c$ to be $O(abc)$. Note in the meta-learning setting, $n$ is typically small, indicating the efficiency of our proposed methods. 

\section{Experiments}
\vspace{-0.2cm}
We conduct a set of experiments to evaluate the effectiveness of our proposed methods, including a sine wave regression toy experiment, few-shot classification, robustness to adversarial attacks, out-of-distribution generalization and ablation study. Due  to space limit, more results are provided in the Appendix. We compare our models with related baselines including MAML \citep{finn17a}, the first order MAML (FOMAML) \citep{finn17a}, Reptile \citep{reptile18} and iMAML \citep{IMAML2019}. Results are reported as mean and variance over three independent runs. 

\vspace{-0.2cm}
\subsection{Regression}
\vspace{-0.2cm}
Following \cite{finn17a, reptile18}, we first test our proposed methods on the 1-dimensional sine wave regression problem. This problem is instructive, where a model is trained on many different sine waves with different amplitudes and phases, and tested by adapting the trained model to new sine waves with only a few data points using a fixed number of gradient-descent steps. 
Following \cite{finn17a, reptile18}, we use a fully-connected neural network with 2 hidden layers and the ReLU activation function. The results are shown in Figure \ref{fig:regression}. 

\begin{figure}[tb!]
    \centering
        \subfigure[Random Initialized]{\includegraphics[width=0.27\linewidth]{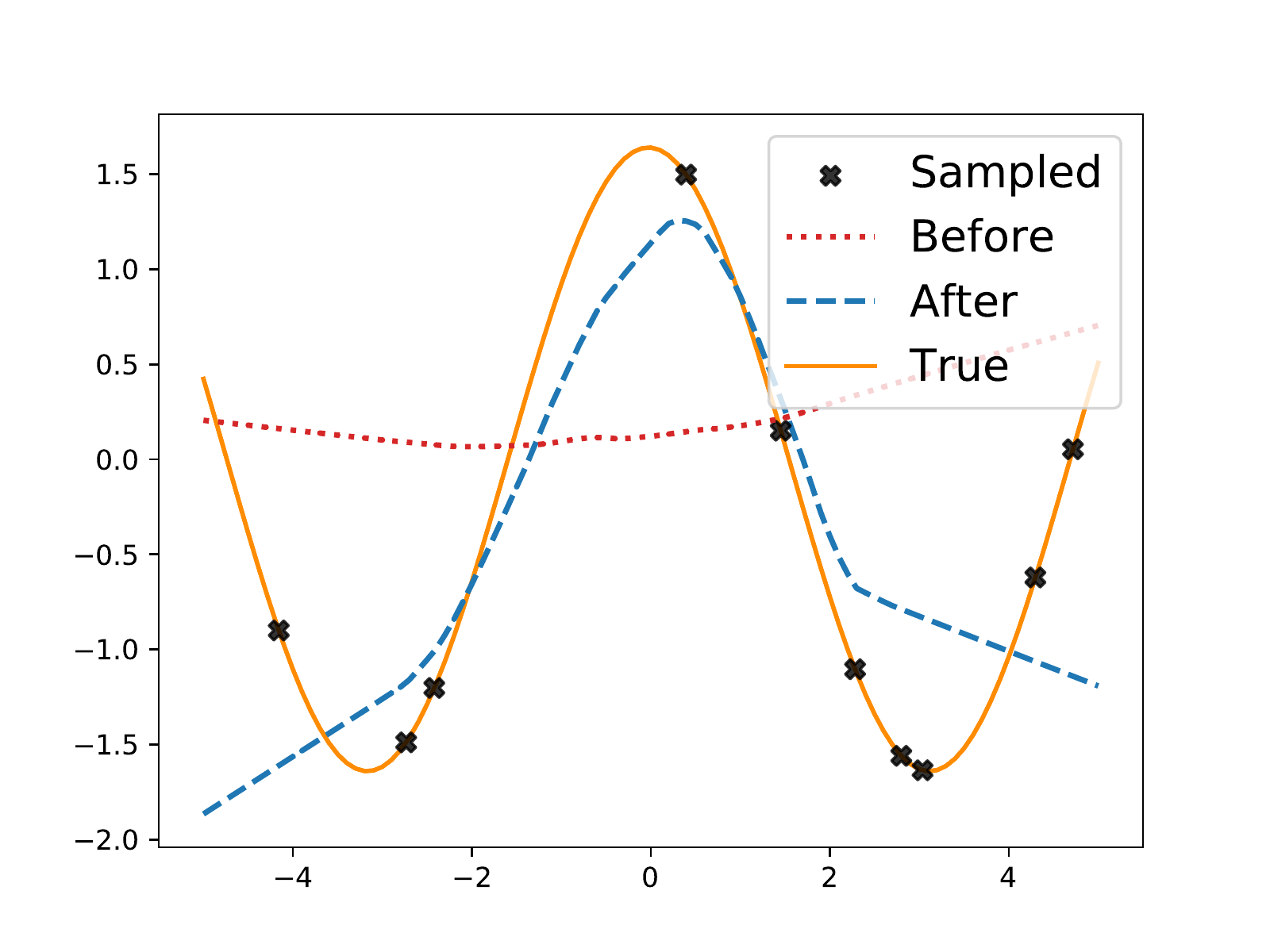}}
        \subfigure[ Meta-RKHS-\ROM{1}]{\includegraphics[width=0.27\linewidth]{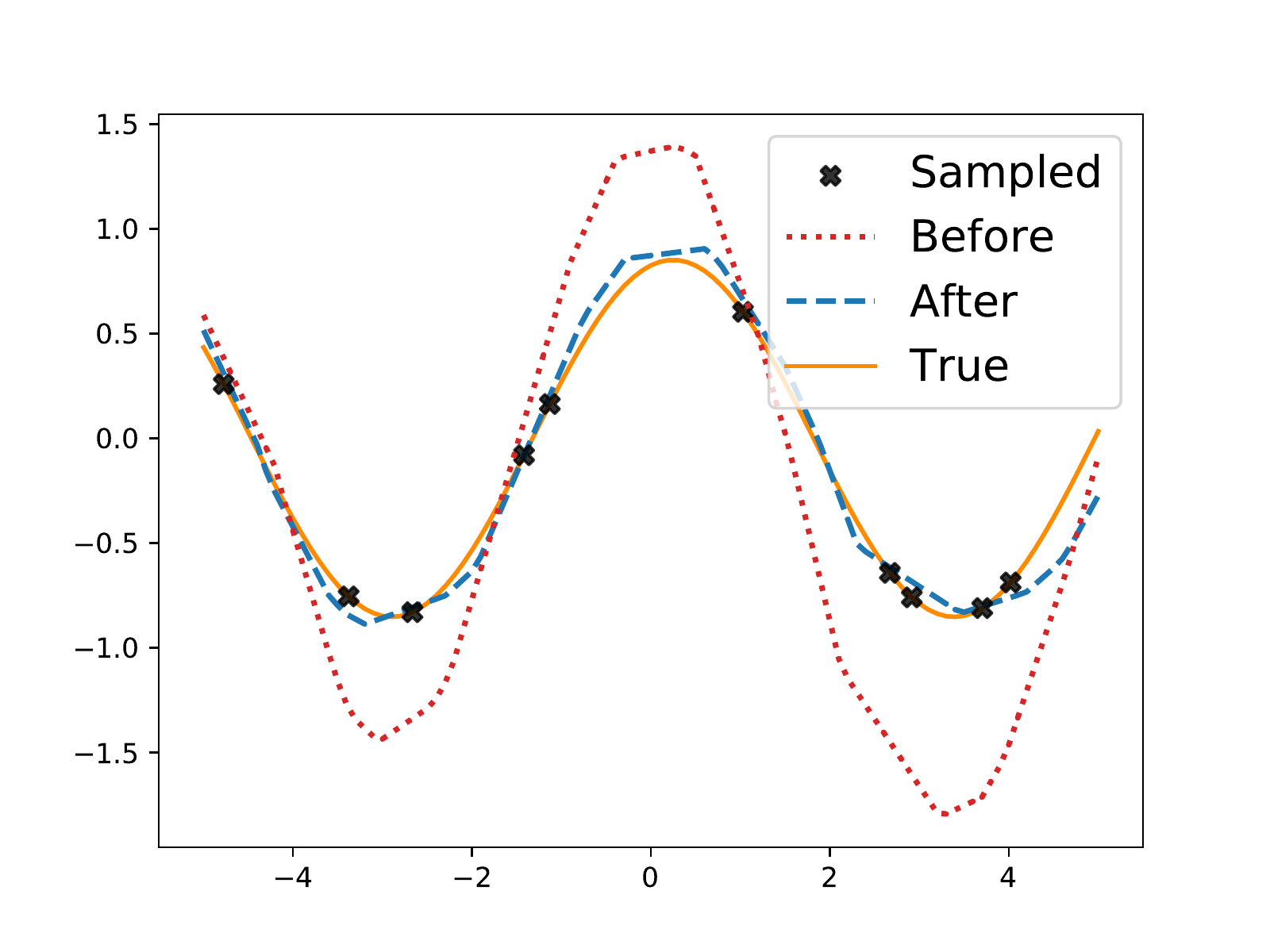}}
        \subfigure[ Meta-RKHS-\ROM{2}]{\includegraphics[width=0.27\linewidth]{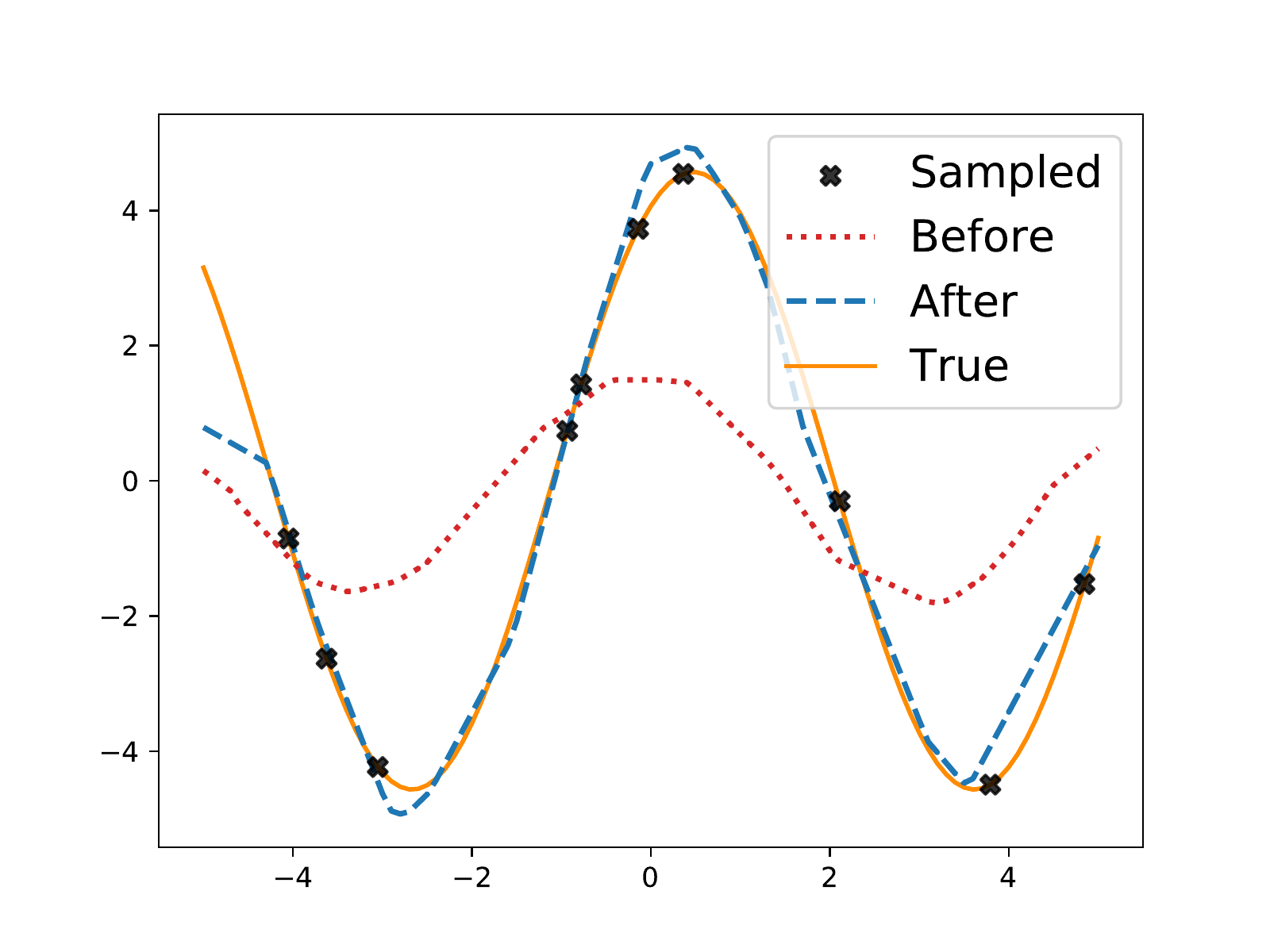}}
        \vspace{-0.1in}
     \caption{Performance of random initialized network and our methods. The models before/after adaptation are shown in dotted/dashed lines, samples used for adaptation are also shown in the figure.}
    \label{fig:regression}
    \vspace{-0.2in}
\end{figure}


\vspace{-0.1cm}
\subsection{Few-shot Image Classification}
\vspace{-0.2cm}
For this experiment, we choose two popular datasets adopted for meta-learning: Mini-ImageNet and FC-100 \citep{TADAM18}. The cross-entropy loss is adopted for Meta-RKHS-\ROM{1}; while the squared loss is used for Meta-RKHS-\ROM{2} following \cite{arora19, CNNGP19}. 
Similar to \cite{finn17a}, the model architecture is set to be a four-layer convolutional neural network with ReLU activation. The filter number is set to be 32. The Adam optimizer \citep {adam15} is used to minimize the energy functional. Meta batch size is set to be 16 and learning rates are set to be 0.01 for Meta-RKHS-\ROM{2}. 

\begin{wraptable}{R}{0.6\linewidth}
    \vspace{-0.2in}
    \caption{Few-shot classification results on Mini-ImageNet and FC-100.}
    \vspace{-0.05in}
    \label{tab:results_mini_imagenet_fc_100}
    \begin{center}
    \begin{sc}
    \begin{adjustbox}{scale=0.57}
        \begin{tabular}{lrrrrrr}
            \toprule
            &\multicolumn{2}{c}{Mini-ImageNet} & \multicolumn{2}{c}{FC-100}\\
            Algorithm& 5 Way 1 Shot & 5 Way 5 Shots&5 Way 1 Shot & 5 Way 5 Shots \\
            \midrule
             MAML& $48.70 \pm 1.84 \%$ & $63.11 \pm 0.93 \%$ & $38.00 \pm 1.95 \%$ & $49.34 \pm 0.97 \%$&\\
             FOMAML& $48.07 \pm 1.75 \%$ & $63.15 \pm 0.91 \%$& $37.73 \pm 1.93 \%$ & $49.05 \pm 0.99 \%$&\\
             iMAML& $49.30 \pm 1.88 \%$ & $64.89 \pm 0.95 \%$& $38.38 \pm 1.70 \%$ & $49.41 \pm 0.80 \%$&\\
             Reptile & $49.70 \pm 1.83 \%$ & $65.91 \pm 0.84 \%$ & $38.40 \pm 1.94 \%$ & $50.50 \pm 0.87 \%$&\\
             \midrule
             Meta-RKHS-\ROM{1}& $\mathbf{51.10 \pm 1.82} \%$ & $\mathbf{66.19 \pm 0.80 \%}$ & $38.90 \pm 1.90 \%$ & $\mathbf{51.47 \pm 0.86\%}$&\\
             Meta-RKHS-\ROM{2}& $50.53 \pm 2.09\%$& $65.40\pm 0.91 \%$& $\mathbf{41.20 \pm 2.17\%}$& $51.36 \pm 0.96$\\
             \bottomrule
        \end{tabular}
    \end{adjustbox}
    \end{sc}
    \end{center}
    \vspace{-0.1in}
\end{wraptable}
The results are shown in Table \ref{tab:results_mini_imagenet_fc_100}. Note the results of Reptile is different from those in \cite{reptile18}, because we re-evaluate it under the same setting as \cite{finn17a}, {\it i.e.}, 10 steps of adaptation is applied during testing. Our results of iMAML is based on the implementation of \cite{spigler2019}. It is observed that our proposed methods achieve better accuracy than different baselines. Interestingly, our Meta-RKHS-\ROM{1} performs better than FOMAML (this is also the case in other experiments), although they share a similar objective. We conjecture the reason is because our Meta-RKHS-\ROM{1} restricts the function to be in an RKHS, making the functional space smaller thus easier to optimize compared to the unrestricted version of FOMAML. In terms of our two algorithms, there is not always a winner on all the tasks. We note that Meta-RKHS-\ROM{1} 
is more efficient in training. However, we show below that Meta-RKHS-\ROM{2} is better in terms of robustness to adversarial attacks and out-of-distribution generalization.

\vspace{-0.1cm}
\subsection{Robustness to Adversarial Attacks}
\vspace{-0.2cm}
We now compare the adversarial robustness of our methods with other popular baselines. 
We adopt both white-box and black-box attacks in this experiment. For the white-box attacks, we adopt strong attacks including the PGD Attack \citep{Madry2017TowardsDL}, \new{BPDA attack \citep{athalye2018obfuscated} and SPSA attack \citep{uesato2018adversarial}}. For PGD attack, we use $\ell_{\infty}$ norm and compare the results on Mini-imagenet and FC-100. We compare the robust accuracy with different magnitude with 20-step attack with a step size of $2/255$. \new{For BPDA attack, we apply median smoothing, JPEGFilter and BitSqueezing as input transformation adapted from \citep{guo2018countering} as defense strategies. For SPSA attack, we follow \citep{uesato2018adversarial} and set the Adam learning rate 0.01, perturbation size $\delta$ = 0.01}.  For Black-box attack, we adopt the strong query efficient attack method \citep{blackbox19}. Follow the setting of \cite{blackbox19}, we use a fixed step size of 0.2.

We consider both finite-time and infinite-time adaptation in this experiment. For finite-time adaptation, the Pad\'{e} approximation with $P=Q=1$ and $P=Q=2$ to approximate the matrix exponential are considered \citep{ButcherC:1992}. We use Meta-RKHS-II$\_$t100$\_$PQ1 and Meta-RKHS-II$\_$t100$\_$PQ2 to denote methods using finite time $t=100$, $P=Q=1$ or $P=Q=2$, respectively. We observe other finite time $t$ makes similar predictions, thus we only consider $t=100$. The results from the black-box attack in Figure~\ref{fig:blackbox_att} indicate the robustness of our Meta-RKHS-\ROM{2}. In fact, the gaps are significantly large, making it the only useful robust model in the adversarial-attack setting. Our Meta-RKHS-\ROM{1} is not as robust as Meta-RKHS-\ROM{2}, but still slightly outperforms other baselines. \new{Regarding the white-box attack, results in Figure~\ref{fig:bpda_att}, \ref{fig:spsa_att} and \ref{fig:whitebox_att} again show that our proposed Meta-RKHS-\ROM{2} is significantly more robust than baselines under the three strong attacks.} It is also interesting to see that our Meta-RKHS-\ROM{1} performs slightly better than Meta-RKHS-\ROM{2} in some rare cases, {\it e.g.}, in the Mini-ImageNet 5-way 1-shot case when the attack magnitude is not too small. More results are presented in the Appendix.

     



\begin{figure}[t!]
\includegraphics[width=0.85\linewidth]{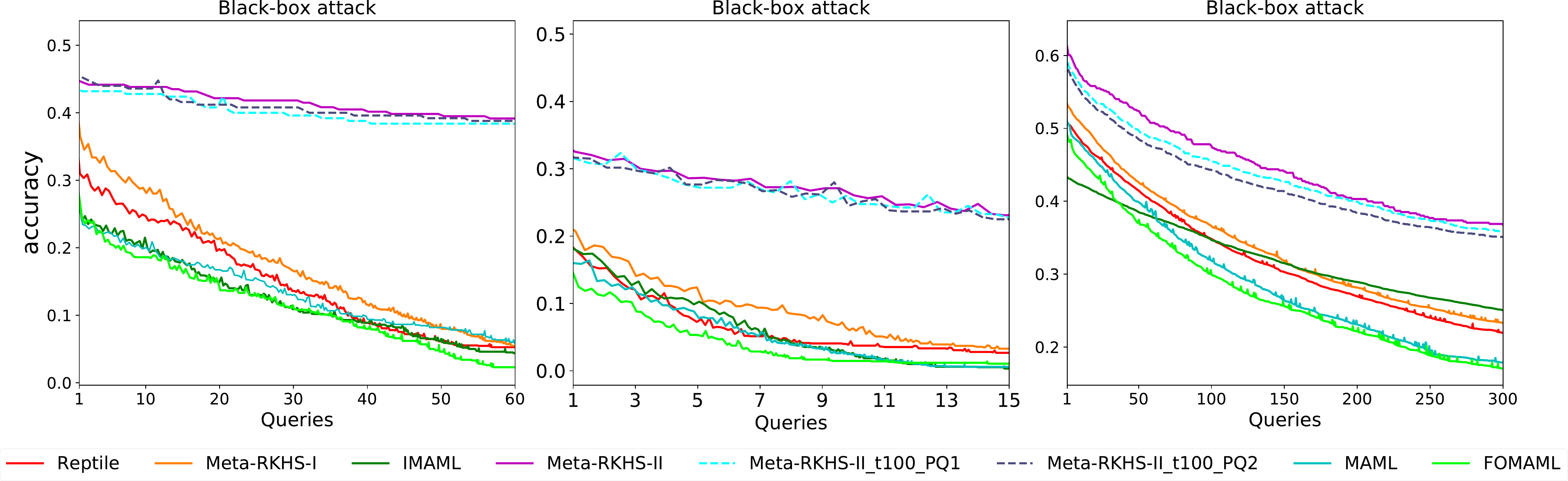}
\centering
\vspace{-0.1in}
\caption{Black-box attack on Mini-ImageNet and FC-100. Mini-ImageNet 5-way 1-shot (left), FC-100 5-way 1-shot (middle) and Mini-ImageNet 5-way 5-shot (right). }\label{fig:blackbox_att}
\vspace{-0.1in}
\end{figure}

\begin{figure}[t!]
\includegraphics[width=0.4\linewidth]{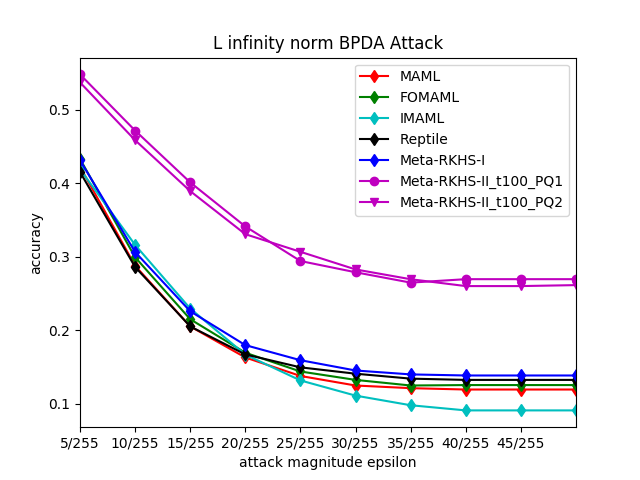}
\includegraphics[width=0.4\linewidth]{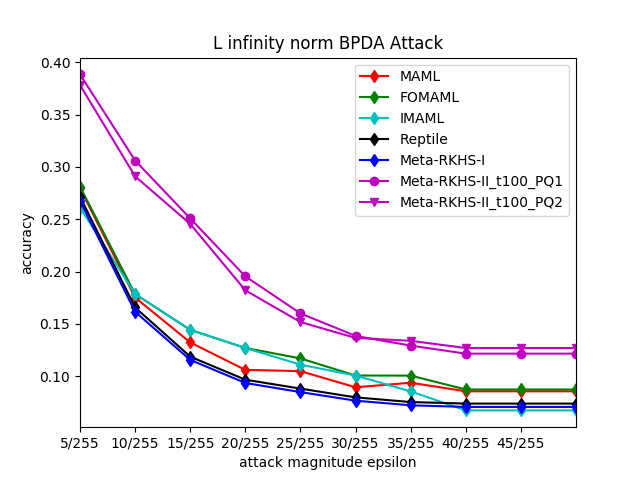}
\centering
\vspace{-0.1in}
\caption{BPDA attack on Mini-ImageNet 5-way 5-shot (left) and FC-100 5-way 5-shot (right). }\label{fig:bpda_att}
\vspace{-0.1in}
\end{figure}

\begin{figure}[t!]
\includegraphics[width=0.4\linewidth]{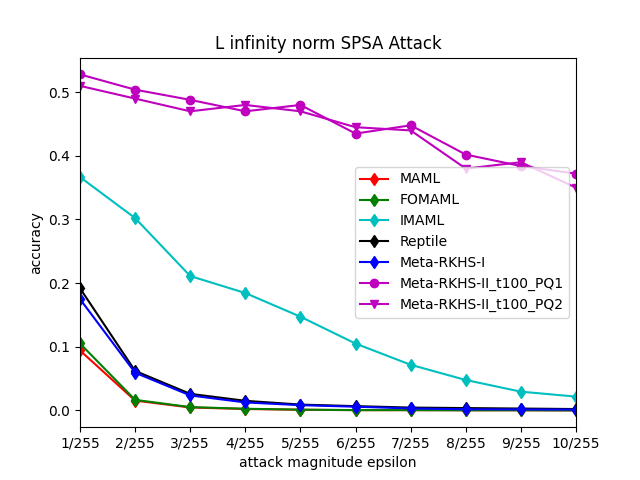}
\includegraphics[width=0.4\linewidth]{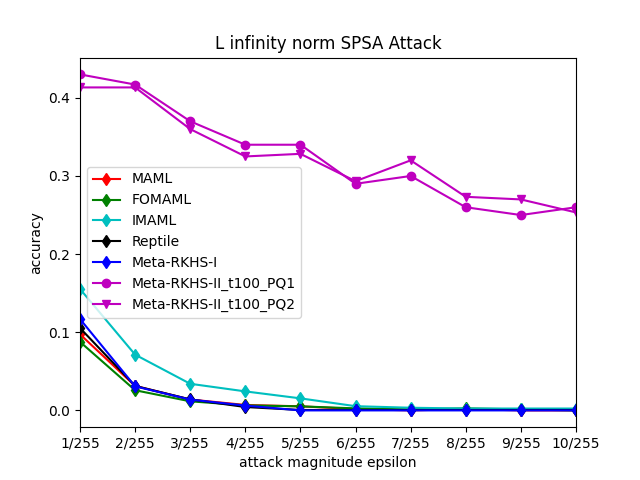}
\centering
\vspace{-0.1in}
\caption{SPSA attack on Mini-ImageNet 5-way 5-shot (left) and FC-100 5-way 5-shot (right). }\label{fig:spsa_att}
\vspace{-0.1in}
\end{figure}

\begin{figure}[t!]
\includegraphics[width=\linewidth]{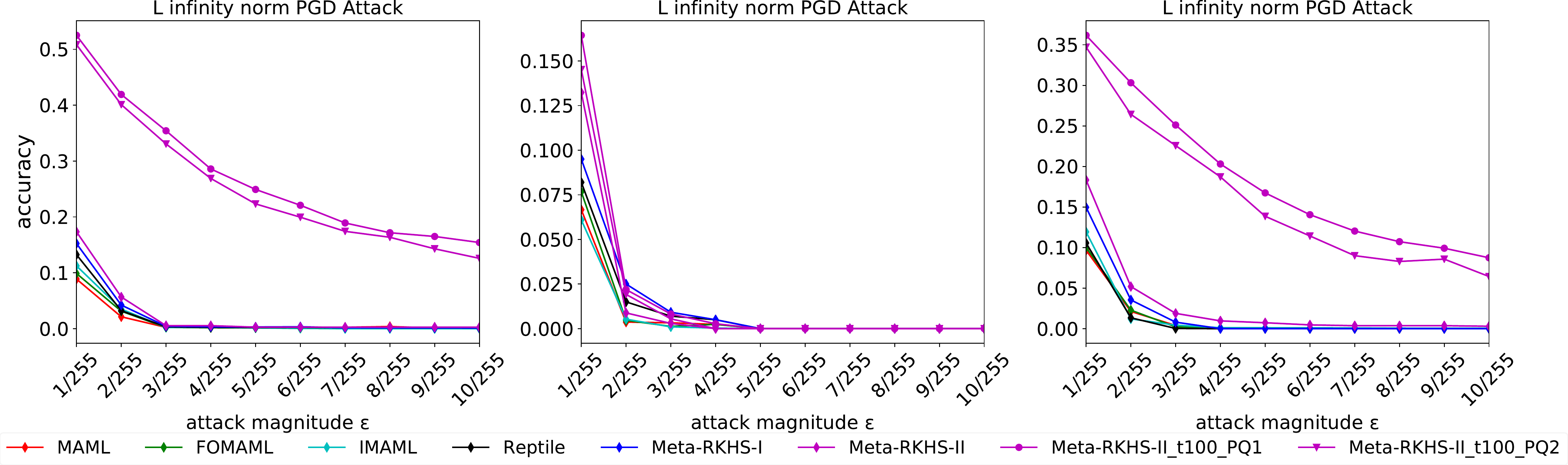}
\centering
\vspace{-0.1in}
\caption{$ \ell_{\infty}$ norm PGD attack on Mini-ImageNet and FC-100. Mini-ImageNet 5-way 5-shot (left), Mini-ImageNet 5-way 1-shot (middle) and FC-100 5-way 5-shot (right). }\label{fig:whitebox_att}
\vspace{-0.1in}
\end{figure}


\subsection{Out-of-distribution Generalization}
We adopt similar strategy in \citep{balance20} to test a model's ability of generalizing to out-of-distribution datasets. \new{In this setting, the state of arts are achieved by Bayesian TAML \citep{balance20}. Different from their setting that considers any-shot learning with maximum number of examples for each class being as large as 50, we only focus on the standard 1 or 5 shot learning. We thus modify their code to accommodate our standard setting.} The CUB \citep{CUB11} and VGG Flower \cite{vggflower08} are fine-grained datasets used in this experiment, where all images are resized to $84 \times 84$.  We follow \cite{balance20} to split these datasets into meta training/validation/testing sets. We first train all the methods on Mini-ImageNet or FC-100 datasets, then conduct meta-testing on CUB and VGG Flower datasets. The results are shown in Table \ref{fig:OOD-Mini-ImageNet}. Again, our methods achieve the best results, \new{outperforming the state-of-art method with our Meta-RKHS-\ROM{2}}, indicating the robustness of our proposed methods. More results are presented in the Appendix.

\begin{wraptable}{R}{0.6\linewidth}
    \vspace{-1.5cm}
    \caption{Meta testing on different out-of-distribution datasets with model trained on Mini-ImageNet.}
    \vspace{-0.1in}
    \label{fig:OOD-Mini-ImageNet}
    \begin{center}
    \begin{sc}
    \begin{adjustbox}{scale=0.57}
        \begin{tabular}{lrrrrrrrrrr}
            \toprule
        & \multicolumn{2}{c}{5 way 1 shot}& &\multicolumn{2}{c}{5 way 5 shot}&\\
            Algorithm & CUB & VGG Flower & & CUB & VGG Flower &\\
            \midrule
         MAML &$34.23\pm 1.52 \%$ &$52.98\pm 1.76 \%$&& $52.36 \pm 0.94 \%$&  $67.52 \pm 1.30 \%$ &\\
         FOMAML& $35.32\pm 1.69 \%$ & $53.86\pm 1.64 \%$ && $52.02\pm 0.71 \%$ & $68.83\pm 1.16 \%$\\
         Reptile& $35.61\pm 1.38 \%$ & $53.57\pm 1.58 \%$ &&$51.93\pm 0.89 \%$&$71.62\pm 1.25 \%$& \\
         iMAML& $40.55 \pm 0.61 \%$ & $54.97 \pm 0.80 \%$ & &$46.31 \pm 2.03 \%$ &$60.67 \pm 1.91 \%$\\
         Bayesian TAML(SOTA)&$41.57\pm 0.60\%$ &$58.56\pm 0.66 \%$&&$61.78 \pm 0.56\%$ & $77.95\pm0.46 \%$\\
         \midrule
         Meta-RKHS-\ROM{1}&$36.73\pm 1.26 \%$&  $54.79\pm 1.61 \%$ & &$54.19\pm 0.73 \%$ & $72.76\pm 1.08 \%$\\
         Meta-RKHS-\ROM{2}&$\mathbf{45.36\pm 0.87 \%}$&$\mathbf{60.80\pm 1.02 \%}$ && $\mathbf{65.21 \pm 0.64 \%}$& $\mathbf{78.25 \pm 0.49 \%}$ & \\
        \bottomrule
        \end{tabular}
    \end{adjustbox}
    \end{sc}
    \end{center}
    \vspace{-0.1in}
\end{wraptable}

\subsection{Ablation Study}

\begin{table}[t!]
    \caption{Meta-RKHS-\ROM{2} with different time $t$.}
    \label{tab:different_t}
    \begin{center}
    \begin{sc}
    \begin{adjustbox}{scale=0.7}
        \begin{tabular}{lrrrrrr}
            \toprule
            &Time $t$ & $t=0.1$ & $t=1$ & $t=10$ & $t=100$ &$t=\infty$ \\
            \midrule
            \multirow{2}{*}{Mini-ImageNet} & 5 Way 1 Shot& $49.67 \pm 2.23 \%$ & $48.27 \pm 2.23 \%$ & $\mathbf{50.53 \pm 2.09} \%$ & $49.13 \pm 2.19 \%$ & $48.70 \pm 2.28 \%$ \\
             &5 Way 5 Shots& $64.51 \pm 0.93 \%$ & $64.28\pm 0.98 \%$ & $\mathbf{65.40\pm 0.91 \%}$ & $64.24 \pm 1.06 \%$ & $64.95\pm 0.96 \%$ \\
            \midrule
            \multirow{2}{*}{FC-100}   &  5 Way 1 Shot& $36.50 \pm 2.10 \%$ & $38.80 \pm 2.32 \%$ & $\mathbf{41.20 \pm 2.17 \%}$ & $38.80 \pm 2.21 \%$ & $37.60 \pm 2.13 \%$ \\
            &   5 Way 5 Shots& $48.35 \pm 1.02 \%$ & $49.79 \pm 1.04 \%$ & $\mathbf{51.36 \pm 0.96} \%$ & $48.59 \pm 1.09 \%$ & $49.48 \pm 0.98 \%$ \\
            \bottomrule
        \end{tabular}
    \end{adjustbox}
    \end{sc}
    \end{center}
    \vspace{-0.3in}
\end{table}
We conduct several ablation studies, including: comparing Reptile with Meta-RKHS-\ROM{1} under different adaptation steps (results shown in the Appendix),  
testing the impact of choosing different time $t$ in Meta-RKHS-\ROM{2} (results shown in Table \ref{tab:different_t}) and the impact of network architecture with different number of CNN feature channels (results shown in the Appendix). It is interesting to see that a finite-time (around $t = 10$) achieves the best accuracy, although the infinite-time case guarantees a stationary point. This indicates that a stationary point achieved by limited training data in the adaptation step is not always the best choice, because the limited training data might easily overfit the model, thus achieving worse test results.

\vspace{-0.1cm}
\section{Conclusion}
\vspace{-0.2cm}
We develop meta-learning in RKHS, and propose two practical algorithms allowing efficient adaptation in the function space by avoiding some complicated adaptations as in traditional methods.  
We show connections between our proposed methods and existing ones. Extensive experiments suggest that our methods are more effective, achieve better generalization and are 
more robust against adversarial attacks and out-of-distribution generalization, \new{compared to popular strong baselines.} 

\clearpage
\bibliography{reference,ref}
\bibliographystyle{iclr2021_conference}

\clearpage
\appendix

\section{Algorithms}\label{app:alg}
Our proposed algorithms for meta-learning in the RKHS are summarized in Algorithm~\ref{algo:mgfl}.

\begin{algorithm}[h!]
	\caption{Meta-Learning in RKHS}\label{algo:mgfl}
	\begin{algorithmic}
	    \REQUIRE $p(\mathcal{T})$: distribution over tasks, randomly initialized neural network parameters $\thetab$.
		\WHILE{not done}
		    \STATE Sample a batch of tasks $\{{\mathcal{T}}_m\}_{m=1}^B \thicksim p(\mathcal{T})$
    		\FORALL {${\mathcal{T}}_m $}
        		\STATE Sample a batch of data points $\mathcal{D}_m$ \OR Sample two batches of data points $\mathcal{D}^{tr}_m$, $\mathcal{D}^{test}_m$.\\
    		\ENDFOR
        	\STATE Evaluate the energy functional by \eqref{eq:energy_functional_1} with $\{\mathcal{D}_m\}_{m=1}^B $ \OR Evaluate the energy functional by \eqref{eq:energy_functional_2} with $\{\mathcal{D}^{tr}_m, \mathcal{D}^{test}_m\}_{m=1}^B $. Minimize the energy functional w.r.t $\thetab$.
		\ENDWHILE
	\end{algorithmic}
\end{algorithm}

\section{Proof of Theorem \ref{thm:equal_gf}}

\textbf{Theorem \ref{thm:equal_gf}} 
\textit{
    If $f_\thetab$ is a neural network with parameter $\thetab \in R^P$ and $\mathcal{H}$ is the Reproducing Kernel Hilbert Space (RKHS) induced by $\Thetab$, where $\Thetab$ is the Neural Tangent Kernel (NTK) of $f_{\thetab}$, 
    then with initialization $f^0 = f_{\thetab^0}$, the gradient flow of $\mathcal{E}(f^t)$ coincides with the function evolution of $f_{\thetab^t}$ induced by the gradient flow of $E(\thetab^t)$. 
}
\begin{proof}
Without loss of generality, we can rewrite $\mathcal{E}(f) = \mathbb{E}_{\mathcal{T}_m}\{\mathbb{E}_{(\xb_m, \yb_m)}\left[ C(f(\xb_m), \yb_m)\right]\}$ with some function $C(\cdot, \cdot)$.

For a neural network $f_{\thetab}$ with parameter $\thetab \in R^P$, the gradient flow of $E$ in $R^P$ is
\[
    \dfrac{\textup{d} \thetab^t}{\textup{d} t} = -\nabla_{\thetab^t} E(\thetab^t).
\]
We have
\begin{align*}
    \dfrac{\textup{d} \thetab^t}{\textup{d} t} &= -\nabla_{\thetab^t} (\mathcal{E} \circ F)(\thetab^t)\\ 
    &= -\mathbb{E}_{\mathcal{T}_m}\{\mathbb{E}_{(\xb_m, \yb_m)}\left[ \nabla_{\thetab^t} C(f_{\thetab^t}(\xb_m), \yb_m)\right]\}\\
    &= -\mathbb{E}_{\mathcal{T}_m}\bigg\{\mathbb{E}_{(\xb_m, \yb_m)}\left[\dfrac{\partial C(f_\thetab^t(\xb_m), \yb_m)}{\partial f_\thetab^t(\xb_m)} \dfrac{\partial f_\thetab^t(\xb_m)}{\partial \thetab^t}\right]\bigg\}.
\end{align*}
We know that the dynamics of $f_{\thetab^t}$ is
\begin{align}\label{eq:proof_equa_2}
    \dfrac{\textup{d} f_{\thetab^t}}{\textup{d} t} 
    &= \dfrac{\textup{d} \thetab^t}{\textup{d} t} \dfrac{\partial f_{\thetab^t}}{\partial \thetab^t} ^\intercal \nonumber\\
    &=-\mathbb{E}_{\mathcal{T}_m}\bigg\{\mathbb{E}_{(\xb_m, \yb_m)}\left[\dfrac{\partial C(f_{\thetab^t}(\xb_m), \yb_m)}{\partial f_{\thetab^t}(\xb_m)} \dfrac{\partial f_{\thetab^t}(\xb_m)}{\partial \thetab^t}\right]\bigg\}\dfrac{\partial f_{\thetab^t}}{\partial \thetab^t} ^\intercal \nonumber\\
    & = -\mathbb{E}_{\mathcal{T}_m}\bigg\{\mathbb{E}_{(\xb_m, \yb_m)}\left[\dfrac{\partial C(f_{\thetab^t}(\xb_m), \yb_m)}{\partial f_{\thetab^t}(\xb_m)} \dfrac{\partial f_{\thetab^t}(\xb)}{\partial \thetab^t} \dfrac{\partial f_{\thetab^t}}{\partial \thetab^t} ^\intercal \right]\bigg\} \nonumber\\
    &=-\mathbb{E}_{\mathcal{T}_m}\bigg\{\mathbb{E}_{(\xb_m, \yb_m)}\left[\dfrac{\partial C(f_{\thetab^t}(\xb_m), \yb_m)}{\partial f_{\thetab^t}(\xb_m)} \Thetab^t(\xb_m, \cdot) \right]\bigg\},
\end{align}
where $\Thetab^t$ is the Neural Tangent Kernel of neural network $f_{\thetab^t}$ \citep{jacot2018neural}.

If $\mathcal{H}^t$ is the Reproducing Kernel Hilbert Space induced by a kernel $\Thetab^t$ and  $V_{\xb_m}: \mathcal{H}\rightarrow R $ is the evaluation functional at $\xb_m$, which is defined as
\[
V_{\xb_{m}}(f) = f(\xb_m), 
\]
then for an arbitrary function $g$ and a small perturbation $\epsilon$, we have 
\begin{align*}
        \langle \nabla_{f} V_{\xb_{m}}(f),  g\rangle &= \lim _{\epsilon \rightarrow 0} \dfrac{V_{\xb_{m}}(f + \epsilon g)-V_{\xb_{m}}(f)}{\epsilon}\\
        \langle \nabla_{f} V_{\xb_{m}}(f),  g\rangle &= \lim _{\epsilon \rightarrow 0} \dfrac{f(\xb_{m}) + \epsilon g(\xb_{m}) - f(\xb_{m})}{\epsilon}\\
        \langle \nabla_{f} V_{\xb_{m}}(f),  g\rangle &=  g(\xb_{m}) \\
        \langle \nabla_{f} V_{\xb_{m}}(f),  g\rangle &= \langle \Thetab^t (\xb_{m}, \cdot),  g\rangle\\
        \nabla_{f} V_{\xb_{m}}(f) &= \Thetab^t (\xb_{m}, \cdot)\\
        \nabla_{f} f(\xb_m) &= \Thetab^t (\xb_{m}, \cdot).
\end{align*}
With an initial function $f^0 = f_{\thetab^0} \in \mathcal{H}$, the gradient flow of $\mathcal{E}$ in $\mathcal{H}$ is 
\[
    \dfrac{\textup{d} f^t}{\textup{d} t} = -\nabla_{f^t} \mathcal{E}(f^t).
\]
We have 
\begin{align}\label{eq:proof_equa_1}
    \dfrac{\textup{d} f^t}{\textup{d} t} 
    &= -\mathbb{E}_{\mathcal{T}_m}\{\mathbb{E}_{(\xb_m, \yb_m)}\left[ \nabla_{f^t} C(f^t(\xb_m), \yb_m)\right]\} \nonumber \\
    &= -\mathbb{E}_{\mathcal{T}_m}\bigg\{\mathbb{E}_{(\xb_m, \yb_m)}\left[\dfrac{\partial C(f^t(\xb_m), \yb_m)}{\partial f^t(\xb_m)} \nabla_{f^t}f^t(\xb_m)\right]\bigg\} \nonumber \\
    &= -\mathbb{E}_{\mathcal{T}_m}\bigg\{\mathbb{E}_{(\xb_m, \yb_m)}\left[\dfrac{\partial C(f^t(\xb_m), \yb_m)}{\partial f^t(\xb_m)}\Thetab^t(\xb_m, \cdot) \right]\bigg\}. 
\end{align}
We can complete the proof by comparing \eqref{eq:proof_equa_2} and \eqref{eq:proof_equa_1}.
\end{proof}

\section{Proof of Theorem \ref{thm:equal_gradient}}
\textbf{Theorem \ref{thm:equal_gradient}} 
\textit{
    If $f_\thetab$ is a neural network with parameter $\thetab$ and $\mathcal{H}$ is the Reproducing Kernel Hilbert Space (RKHS) induced by $\Thetab$, where $\Thetab$ is the Neural Tangent Kernel (NTK) of $f_\thetab$,  then 
    \[
    \mathcal{M}_1 = \widetilde{\mathcal{E}}(\alpha, f_\thetab), \text{ and } \beta_0 = \alpha \Vert \nabla_{\thetab} \mathcal{L}_m(f_\thetab)\Vert ^2 = \alpha \Vert \nabla _{f_\thetab}\mathcal{L}_m(f_\thetab) \Vert^2_{\mathcal{H}}.
    \]
    }
\begin{proof}
Without loss of generality, we rewrite $\mathcal{L}_m(f_\thetab) = \mathbb{E}_{\xb_{m}, \yb_{m}}\left[ C(f_\thetab(\xb_m), \yb_m)\right]$. 

In regression task, we have 
\[C(f_\thetab(\xb_m), \yb_m) = \dfrac{1}{2}\big\Vert f_\thetab(\xb_m)- \yb_m \big\Vert^2\].
In classification task, we have 
\[C(f_\thetab(\xb_m), \yb_m) =  \yb_m \text{log}(f_\thetab(\xb_m))^\intercal,\] 
where $\text{log}$ is element-wise logarithm operation.
\begin{align*}
    &\Vert \nabla_{\thetab} \mathcal{L}_m(f_\thetab)\Vert ^2\\
    &= \nabla_{\thetab} \mathcal{L}_m(f_\thetab) \nabla_{\thetab} \mathcal{L}_m(f_\thetab)^\intercal \\
    &= \nabla_{\thetab} \mathbb{E}_{\xb_{m}, \yb_{m}}\left[ C(f_\thetab(\xb_m), \yb_m)\right]\nabla_{\thetab} \mathbb{E}_{\xb_{m}, \yb_{m}}\left[ C(f_\thetab(\xb_m), \yb_m)\right]^\intercal \\
    & = \mathbb{E}_{\xb_{m}, \yb_{m}}\left[ \dfrac{\partial C(f_\thetab(\xb_m), \yb_m)}{\partial f_{\thetab}(\xb_m)} \dfrac{\partial f_{\thetab}(\xb_m)}{\partial \thetab}\right] \mathbb{E}_{\xb_{m}, \yb_{m}}\left[\dfrac{\partial f_{\thetab}(\xb_m)}{\partial \thetab}^\intercal \dfrac{\partial C(f_\thetab(\xb_m), \yb_m)}{\partial f_{\thetab}(\xb_m)}^\intercal \right]\\
    & = \mathbb{E}_{\xb_{m}, \yb_{m}} \bigg\{\mathbb{E}_{\xb'_{m}, \yb'_{m}} \left [  \dfrac{\partial C(f_\thetab(\xb_m), \yb_m)}{\partial f_{\thetab}(\xb_m)} \dfrac{\partial f_{\thetab}(\xb_m)}{\partial \thetab} \dfrac{\partial f_{\thetab}(\xb'_m)}{\partial \thetab}^\intercal \dfrac{\partial C(f_\thetab(\xb'_m), \yb'_m)}{\partial f_{\thetab}(\xb'_m)}^\intercal\right]\bigg\} \\
    & = \mathbb{E}_{\xb_{m}, \yb_{m}} \bigg\{\mathbb{E}_{\xb'_{m}, \yb'_{m}} \left [  \dfrac{\partial C(f_\thetab(\xb_m), \yb_m)}{\partial f_{\thetab}(\xb_m)}\Thetab (\xb_m, \xb'_m)\dfrac{\partial C(f_\thetab(\xb'_m), \yb'_m)}{\partial f_{\thetab}(\xb'_m)}^\intercal\right]\bigg\} \\
    &= \bigg\langle \mathbb{E}_{\xb_{m}, \yb_{m}}\left [  \dfrac{\partial C(f_\thetab(\xb_m), \yb_m)}{\partial f_{\thetab}(\xb_m)}\Thetab (\xb_m, \cdot)\right],  \mathbb{E}_{\xb'_{m}, \yb'_{m}} \left[\dfrac{\partial C(f_\thetab(\xb'_m), \yb'_m)}{\partial f_{\thetab}(\xb'_m)} \Thetab (\xb'_m, \cdot)\right]\bigg\rangle_{\mathcal{H}}\\
    &=\bigg\langle \mathbb{E}_{\xb_{m}, \yb_{m}}\left [  \dfrac{\partial C(f_\thetab(\xb_m), \yb_m)}{\partial f_{\thetab}(\xb_m)}\nabla_{f_\thetab} f_{\thetab}(\xb_m)\right],  \mathbb{E}_{\xb'_{m}, \yb'_{m}} \left[\dfrac{\partial C(f_\thetab(\xb'_m), \yb'_m)}{\partial f_{\thetab}(\xb'_m)} \nabla_{f_\thetab} f_{\thetab}(\xb'_m)\right]\bigg\rangle_{\mathcal{H}}\\
    &=\big\langle \nabla _{f_\thetab}\mathcal{L}_m(f_\thetab) ,  \nabla _{f_\thetab}\mathcal{L}_m(f_\thetab) \big\rangle_{\mathcal{H}}\\
    &=\Vert \nabla _{f_\thetab}\mathcal{L}_m(f_\thetab) \Vert^2_{\mathcal{H}},
\end{align*}
where $\langle \cdot, \cdot\rangle_{\mathcal{H}}$ is the inner product in Reproducing Kernel Hilbert Space (RKHS) $\mathcal{H}$. In the above equations, we use the definition of Neural Tangent Kernel (NTK), the property of inner product in RKHS, the definition of evaluation functional and its gradient in RKHS. 

Recall that 
\[
    \widetilde{\mathcal{E}}(\alpha, f_\thetab) = \mathbb{E}_{\mathcal{T}_m}\left[\mathcal{L}_m(f_\thetab) - \alpha \Vert \nabla _{f_\thetab}\mathcal{L}_m(f_\thetab) \Vert^2_{\mathcal{H}} \right]
\]
and
\[
    \mathcal{M}_k = \mathbb{E}_{\mathcal{T}_m}\left[\mathcal{L}_m(f_\thetab) - \sum_{i=0}^{k-1} \beta_i \right],
\]
where $\beta_i = \alpha \nabla_{\thetab_i} \mathcal{L}_m(f_{\thetab_i}) \nabla_{\thetab}\mathcal{L}_m(f_\thetab)^\intercal$ and $\thetab_0 = \thetab, \thetab_{i+1} = \thetab_{i} - \alpha \nabla_{\thetab_i} \mathcal{L}(f_{\thetab_i}, \mathcal{D}_m^{tr})$.
The result is straightforward now.

\end{proof}

\section{Proof of Theorem \ref{thm:delta_bound_nn}}
\new{The proof techniques we use are similar to some previous works such as \citep{arora19, pmlr-v97-allen-zhu19a}. We summaries some of the differences. Different from previous works that typically assume a neural network is Gaussian initialized, we do not have such an assumption as we are trying to learn a good meta-initialization in the meta-learning setting. Previous works try to investigate the behavior of models during training, while we focus on revealing the connection between different meta-learning algorithms. Previous work focuses on single-task regression/classification problems, while we focus on meta-learning problem. }

\textbf{Theorem \ref{thm:delta_bound_nn}} 
\textit{
  Let $f_{\thetab}$ be a fully-connected neural network with $L$ hidden layers and ReLU activation function,  $s_1, ..., s_{L+1}$ be the spectral norm of the weight matrices,  $s=\max_h s_h$, and $\alpha$ be the learning rate of gradient descent. If $\alpha \leq O(qr)$ with $q=\min(1/(Ls^L), L^{-1/(L+1)})$ and $r = \min(s^{-L}, s)$, then the following holds
    \[
    \vert \widetilde{\mathcal{E}}(k\alpha, f_\thetab) - \mathcal{M}_k \vert \leq O\Big(\dfrac{1}{L}\Big).
    \]
    }
\begin{proof}
We first prove the case of $k=2$, i.e. applying a two-step gradient descent adaptation in MAML. 

We need to prove the following theorem first.

\begin{theorem}\label{thm:perturbation}
    Let $f_{\thetab}$ be a fully-connected neural network with $L$ hidden layers, and $\xb$ be a data sample. Represent the neural network by $f_{\thetab}(\xb)= \sigma(\sigma(...\sigma(\xb W^1)...W^{L-1})W^L)W^{L+1}$, where $W^1, ..., W^{L+1}$ denote the weight matrices, and  $\sigma$ is the ReLU activation function. Let $s_1, ..., s_{L+1}$ be the spectral norm of weight matrices, and $s=\max_h s_h$. Let $\alpha$ be the learning rate of gradient descent, and $\f_{\tilde{\thetab}}(\xb)$  be the resulting value after one step of gradient descent, and $\Vert \cdot \Vert_\mathcal{F}$ be the Frobenius norm.
    
    \indent If $\alpha \leq O(qs^{-L})$, where $q=\min(1/(Ls^L), L^{-1/(L+1)})$, then 
    \[
    \Big \Vert \dfrac{\partial \f_{\tilde{\thetab}}(\xb)}{\partial \tilde{\thetab}} - \dfrac{\partial f_{\thetab}(\xb)}{\partial \thetab}\Big \Vert_{\mathcal{F}} \leq O(\dfrac{1}{s\sqrt{L+1}}). 
    \]
\end{theorem}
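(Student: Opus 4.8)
The plan is to treat the statement as a layer-by-layer perturbation estimate for the feature map $\partial f_{\thetab}(\xb)/\partial\thetab$, controlling how one gradient step propagates through the forward pass, the backward sensitivities, and the ReLU activation patterns. First I would fix notation for the clean (pre-step) network: write $a^{(0)}=\xb$, $a^{(h)}=\sigma(a^{(h-1)}W^h)$ for $h=1,\dots,L$, and $f_{\thetab}(\xb)=a^{(L)}W^{L+1}$, and let $D^{(h)}$ be the diagonal $0/1$ matrix recording which coordinates of $a^{(h-1)}W^h$ are positive. Then each Jacobian block has the product form $\partial f_{\thetab}(\xb)/\partial W^h = (a^{(h-1)})^{\!\top} D^{(h)}W^{h+1}D^{(h+1)}\cdots D^{(L)}W^{L+1}$, with the last block being simply $(a^{(L)})^{\!\top}$. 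Using $\|D^{(h)}\|_2\le 1$ and submultiplicativity, I record the a priori bounds $\|a^{(h)}\|_2\le s^{h}\|\xb\|_2$ and $\|D^{(h)}W^{h+1}\cdots D^{(L)}W^{L+1}\|_2\le s^{\,L+1-h}$, so each block, and hence the whole Jacobian, is $O(s^{L})$ up to $\|\xb\|_2$ factors. Applying the same bounds to the loss gradient $\Delta^h=\partial\mathcal{L}/\partial W^h$, which factors through the identical activations and sensitivities times the bounded output residual, shows that the single step moves each weight by $\|\widetilde{W}^h-W^h\|_2=\alpha\|\Delta^h\|_2$, a quantity of order $\alpha$ times a fixed power of $s$.

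Next I would propagate this weight perturbation forward. Since $\sigma$ is $1$-Lipschitz, the activation errors obey a recursion of the shape $\|\widetilde a^{(h)}-a^{(h)}\|_2 \le s_h\|\widetilde a^{(h-1)}-a^{(h-1)}\|_2 + \|a^{(h-1)}\|_2\,\|\widetilde W^h-W^h\|_2$, whose solution is $O\big(\alpha\cdot\mathrm{poly}(s,L)\big)$; an analogous telescoping estimate controls the change $\|\widetilde B^{(h)}-B^{(h)}\|_2$ in each backward sensitivity product $B^{(h)}=D^{(h)}W^{h+1}\cdots D^{(L)}W^{L+1}$. The purpose of the hypothesis $\alpha\le O(q s^{-L})$ is precisely to make every \emph{relative} perturbation $\|\widetilde W^h-W^h\|_2/s_h$, and hence each $\|\widetilde a^{(h)}-a^{(h)}\|_2$ and $\|\widetilde B^{(h)}-B^{(h)}\|_2$, small enough that the accumulated $(L{+}1)$-fold products do not blow up.

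The main obstacle is the non-smoothness of ReLU, i.e.\ controlling $\widetilde D^{(h)}-D^{(h)}$, since the activation derivative jumps at $0$. The key observation is that a coordinate of $D^{(h)}$ can flip only if the corresponding clean pre-activation $(a^{(h-1)}W^h)_j$ lies within distance $\|\widetilde a^{(h-1)}W^h - a^{(h-1)}W^h\|$ of $0$; I would therefore bound the operator-norm contribution of the flipped coordinates by the magnitude of those borderline pre-activations, which is itself $O\big(\alpha\cdot\mathrm{poly}(s,L)\big)$, rather than trying to count flips directly. This lets $\widetilde D^{(h)}-D^{(h)}$ be folded into the same recursions as the other two perturbation sources. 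The two competing terms in $q=\min\!\big(1/(Ls^L),\,L^{-1/(L+1)}\big)$ enter here: the $1/(Ls^{L})$ piece caps the per-layer amplification $s^{L}$ summed over the $L$ layers, while $L^{-1/(L+1)}$ caps the $(L{+}1)$-fold telescoping constant so that the final bound degrades only like $1/\sqrt{L+1}$.

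Finally I would assemble the estimate. Writing $\partial f_{\widetilde{\thetab}}(\xb)/\partial\widetilde W^h - \partial f_{\thetab}(\xb)/\partial W^h$ as a sum of terms in each of which exactly one factor ($a^{(h-1)}$, some $D^{(\cdot)}$, or some $W^{(\cdot)}$) is replaced by its perturbation while the rest stay at their clean values, I bound each summand by the product of the a priori spectral bounds of the unchanged factors and the perturbation estimate of the changed one. Summing over $h=1,\dots,L+1$ and taking the square root over blocks yields the Frobenius norm of the difference, and the chosen range $\alpha\le O(qs^{-L})$ makes every contribution $O(1/(s\sqrt{L+1}))$, completing the proof. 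I expect the bookkeeping in these last two steps --- keeping the powers of $s$ and the factors of $L$ aligned so that the two pieces of $q$ each do their job --- to be the most delicate part, while the individual Lipschitz and spectral estimates are routine.
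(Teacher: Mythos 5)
Your overall architecture (activations $g^h$, activation-pattern matrices $D^h$, backward sensitivity products, the telescoping ``perturb one factor at a time'' decomposition, and the final sum-over-blocks assembly) mirrors the paper's proof, and your forward recursion via the $1$-Lipschitz property of ReLU is exactly the paper's Lemma~\ref{lemma:activation_perturbation}. The gap is in the step you yourself flag as the main obstacle: controlling the backward sensitivities under activation-pattern flips. Your proposed fix --- bounding ``the operator-norm contribution of the flipped coordinates by the magnitude of those borderline pre-activations'' --- does not work, because the Jacobian's dependence on $D^h$ is not Lipschitz in the pre-activations. A flip changes a diagonal entry of $D^h$ by $\pm 1$, and its contribution to the backward product $D^h W^{h+1}\cdots W^{L+1}$ is the corresponding row/column of the adjacent weight products, whose size is governed by the weight norms, not by the (tiny) pre-activation value. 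Concretely, for $f(x)=\sigma(xw_1)w_2$ with pre-activation $xw_1=\epsilon>0$ arbitrarily small, an arbitrarily small gradient step can flip the sign and change $\partial f/\partial w_1$ from $xw_2$ to $0$: a jump of size $|xw_2|$, independent of both $\epsilon$ and $\alpha$. The borderline pre-activation magnitude controls the \emph{forward} error (which is why your forward recursion is fine) but not the \emph{backward}/Jacobian error. Making a flip-counting argument of the kind you sketch rigorous is precisely what NTK-style papers use Gaussian initialization and over-parameterization for (anti-concentration to bound the number of borderline neurons, plus random-matrix bounds for sparse diagonal perturbations); this paper explicitly makes no such distributional assumption, so that route is unavailable here.

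What the paper does instead is avoid bounding $\tilde D^h - D^h$ altogether. In Lemma~\ref{lemma:back_perturbation} it exploits the algebraic identity $b^h (g^h)^\intercal = f_{\thetab}^\intercal$, where $b^h$ is the backward product and $g^h$ the activation, so that the perturbation of $b^h(g^h)^\intercal$ equals the perturbation of the network output, which is already controlled by the forward lemma; from this together with $\Vert g^h\Vert \le O(s^h)$ it extracts $\Vert \tilde b^h - b^h\Vert \le O(1/(Ls^h))$, and then assembles each Jacobian block as $\partial f_{\thetab}(\xb)/\partial W^h = (b^h)^\intercal g^{h-1}$ in Lemma~\ref{lemma:gradient_weight_matrices_perturbation}. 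One can debate the rigor of that extraction step, but it is a genuinely different mechanism from yours: it never needs an operator-norm bound on the flipped coordinates, which is exactly where your argument breaks. Separately, a minor bookkeeping slip: to obtain the stated $O(1/(s\sqrt{L+1}))$ after summing $L+1$ blocks in squared Frobenius norm, each per-block bound must be $O(1/(sL))$, not $O(1/(s\sqrt{L+1}))$ as you wrote.
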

\begin{remark}
Theorem \ref{thm:perturbation} states that for a neural network with $L$ hidden layers, if the learning rate of gradient descent is bounded, then the norm of derivative w.r.t all the parameters will not change too much, although there are $O(Lm^2)$ parameters, where $m$ denotes the maximum width of hidden layers. We use row vector instead of column vector for consistency, while it does not affect our results.
\end{remark}

For simplicity, we will write $g^h(\xb)$ as $g^h$. The bias terms in the neural network are introduced by adding an additional coordinate thus omitted in Theorem \ref{thm:perturbation}. Without loss of generality, we can assume $\Vert \xb \Vert \leq 1$, which can be done by data normalization in pre-processing.

Let $g^h(\xb)=\sigma(\sigma(...\sigma(\xb W^1)...W^{h-1})W^h)$ be the activation at $h^{th}$ hidden layer and $g^0(\xb)= \xb, g^{L+1} = f_{\thetab}(\xb)$. Define diagonal matrices $D^h$, where $D^h_{(i,i)} = \mathbf{1}\{g^{h-1}W^h \geq 0\}$ and
\[
    b^h=\left\{
                \begin{array}{ll}
                 \mathbf{I}_{d_y},\quad  &\text{if $h=L+1$} \\
                 b^{h+1} (W^{h+1})^\intercal D^h,\quad &\text{otherwise}\\
                \end{array}
              \right.
\]
where $\mathbf{I}_{d_y}$ is a $d_y \times d_y$ identity matrix.
We first prove the following Lemma.

\begin{lemma}\label{lemma:activation_perturbation}
Given a neural network as stated in Theorem \ref{thm:perturbation}, let $\Vert \cdot \Vert_2$ denote the spectral norm, $\triangle W^h = \tilde{W}^h - W^h$ denote some perturbation on weight matrices, $\tilde{g}^h(\xb)$  denote the resulting value after perturbation, and $\triangle g^h(\xb) = \tilde{g}^h(\xb) - g^h(\xb)$. 
If $s \geq 1$ and $\Vert \triangle W^h\Vert_2 \leq O(s^{-L}/L)$ for all h, then 
    \[
    \Vert \triangle g^h\Vert \leq O(\dfrac{1}{Ls^{L-h+1}});
    \]
\indent If $s < 1$ and $\Vert \triangle W^h\Vert_2 \leq O(q)$ for all h, where $q=\min(1/(Ls^L), L^{-1/(L+1)})$ and $r=\max(q,s)$, then 
    \[
    \Vert \triangle g^h\Vert \leq O(r^{h-1}q)=\left\{
                \begin{array}{ll}
                 O(\dfrac{1}{Ls^{L-h+1}}),\quad  &\text{if $1/(Ls^L) \leq L^{-1/(L+1)}$} \\
                 O(L^{-h/(L+1)}),\quad &\text{if $1/(Ls^L) > L^{-1/(L+1)}$}.\\
                \end{array}
              \right.
    \]
\end{lemma}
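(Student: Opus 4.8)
The plan is to prove Lemma~\ref{lemma:activation_perturbation} by induction on the layer index $h$, reducing everything to a single scalar recursion for $\|\triangle g^h\|$ and then estimating the resulting geometric-type sum in the two regimes. First I would record two forward bounds. Since ReLU is $1$-Lipschitz with $\sigma(0)=0$, we have $\|g^h\| = \|\sigma(g^{h-1}W^h)\| \le \|g^{h-1}\|\,\|W^h\|_2 \le s_h\|g^{h-1}\|$, so iterating and using $\|\xb\|\le 1$ gives $\|g^{h-1}\|\le \prod_{j=1}^{h-1}s_j \le s^{h-1}$; and, writing $\epsilon = \max_h\|\triangle W^h\|_2$, the perturbed weights satisfy $\|\tilde W^h\|_2 \le \|W^h\|_2 + \|\triangle W^h\|_2 \le s+\epsilon$.

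The core step is a one-layer perturbation identity. Writing $\triangle g^h = \sigma(\tilde g^{h-1}\tilde W^h) - \sigma(g^{h-1}W^h)$, adding and subtracting $g^{h-1}\tilde W^h$, and using $1$-Lipschitzness of ReLU again, I obtain
\[
\|\triangle g^h\| \le \|\triangle g^{h-1}\tilde W^h + g^{h-1}\triangle W^h\| \le (s+\epsilon)\|\triangle g^{h-1}\| + s^{h-1}\epsilon,
\]
with base case $\triangle g^0 = 0$. This linear recursion unrolls to $\|\triangle g^h\| \le \sum_{j=1}^h (s+\epsilon)^{h-j}\,s^{j-1}\epsilon$, so the entire lemma reduces to bounding this sum under the two smallness hypotheses on $\epsilon$.

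Then I would split into the two regimes. When $s\ge 1$, the hypothesis $\epsilon \le O(s^{-L}/L)$ is precisely what forces the accumulated amplification $\prod_i(1+\epsilon_i/s) \le \exp(L\epsilon/s) = \exp(O(s^{-L-1})) = O(1)$ to stay bounded across all $L$ layers; factoring this constant out, the dominant ($j=h$) term $s^{h-1}\epsilon = O(s^{h-1}s^{-L}/L) = O(1/(Ls^{L-h+1}))$ sets the rate, and the smallness of $\epsilon$ keeps the remaining terms from inflating it beyond this order. When $s<1$ the homogeneous base is $s+\epsilon$ rather than $s$; I would verify $s+\epsilon = O(r)$ with $r=\max(q,s)$ (checking both branches $q\le s$ and $q>s$), so that the leading term $(s+\epsilon)^{h-1}\epsilon = O(r^{h-1}q)$ gives the claimed bound, and the two displayed sub-cases are exactly the two branches of $q=\min(1/(Ls^L),L^{-1/(L+1)})$.

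The main obstacle is the $s<1$ regime. There the per-layer factor $s+\epsilon$ need not lie below $1$, so a naive iteration risks an $\exp(L)$ blow-up, and the delicate point is confirming that the specific threshold defining $q$ (hence the introduction of $r=\max(q,s)$) is tuned precisely so that the product of the $L$ amplification factors $(s+\epsilon)/r$ remains $O(1)$. Tracking these constants cleanly through the case split, and confirming that the geometric sum is governed by its dominant term at the stated order, is where the real work lies; the forward bounds and the recursion itself are routine.
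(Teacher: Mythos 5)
Your proposal follows essentially the same route as the paper's proof: the identical one-layer estimate (ReLU is $1$-Lipschitz, add and subtract the cross term --- your bound $(s+\epsilon)\Vert\triangle g^{h-1}\Vert + s^{h-1}\epsilon$ is exactly the paper's three terms $s\Vert\triangle g^{h}\Vert + s^{h}\epsilon + \Vert\triangle g^{h}\Vert\epsilon$ regrouped), followed by the same two-regime case split using $q=\min(1/(Ls^L),L^{-1/(L+1)})$ and $r=\max(q,s)$. The only structural difference is that the paper closes the recursion by induction with the stated bound as the hypothesis, while you unroll it into the explicit sum $\sum_{j=1}^h(s+\epsilon)^{h-j}s^{j-1}\epsilon$; these are equivalent.

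There is, however, a genuine gap at the step where you estimate that sum, and your unrolling actually makes it visible. Once the amplification factor $(1+\epsilon/s)^{h-j}$ is bounded by a constant, every one of the $h$ terms is of order $s^{h-1}\epsilon$ --- indeed each term is at least $s^{h-1}\epsilon$, since $s+\epsilon\ge s$ --- so the sum is of order $h\,s^{h-1}\epsilon$: no smallness of $\epsilon$ can make $h$ comparable terms collapse to a single one. What your argument proves is therefore $\Vert\triangle g^h\Vert \le O\big(h/(Ls^{L-h+1})\big)$, weaker than the claim by a factor of $h\le L+1$; at the output layer $h=L+1$ it gives $O(1)$ rather than the $O(1/L)$ that is needed downstream (the paper's Lemma~\ref{lemma:back_perturbation} uses $\Vert\triangle g^{L+1}\Vert\le O(1/L)$). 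This loss is real, not an artifact of the estimate: with $s=1$, all weights equal to the identity, and all perturbations of size $\epsilon=c/L$ aligned, one has $\Vert\triangle g^{h}\Vert=\Theta(h\epsilon)=\Theta(hc/L)$. A similar caveat applies to your $s<1$ case: $\epsilon/r$ need not be $O(1/L)$ (e.g.\ when $s^{L+1}=1/L$ one has $q=s=r$), so the product of the amplification factors $(s+\epsilon)/r$ over $L$ layers can be $(1+c)^L$ rather than $O(1)$, contrary to your hope that the definition of $q$ is "tuned" to prevent this. To be fair, the paper's own induction commits exactly the same sin --- its implied constant grows by at least an additive $+c$ per layer (multiplicatively in the $s<1$ case), hidden by writing $O(X)+O(X)\le O(X)$ inside an induction iterated $L$ times --- so your proposal matches the paper in both approach and level of rigor; but your sentence asserting that the dominant term "sets the rate" is precisely where the unjustified factor is absorbed, and neither argument, carried out with explicit constants, establishes the lemma as stated uniformly in $h$.
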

\begin{proof}
Proof of Lemma \ref{lemma:activation_perturbation} is based on induction. 

We first prove the case of $s\geq 1$. Note that $g^0 = \xb$, thus $\triangle g^0 = 0 \leq O(\dfrac{1}{Ls^{L-0+1}})$ always holds.

For $\triangle g^1$, we have 
\begin{align*}
    \Vert \triangle g^1 \Vert &= \Vert \sigma(\xb\tilde{W}^1) - \sigma(\xb W^1) \Vert \\
    &\leq \Vert \xb\tilde{W}^1  -  \xb W^1  \Vert, \quad \text{due to the property of ReLU activation}\\
    &\leq \Vert \xb \Vert \Vert \triangle W^1\Vert_2 \\
    &\leq O(\dfrac{1}{Ls^{L}}). 
\end{align*}
Thus,  the hypothesis holds for $\triangle g^1$.

Now, assume that the hypothesis holds for $\triangle g^h$, then we have 
\begin{align*}
    \Vert \triangle g^{h+1} \Vert &= \Vert \sigma(\tilde{g}^{h}\tilde{W}^{h+1} ) - \sigma(g^h W^{h+1}) \Vert \\
    &\leq \Vert \tilde{g}^{h}\tilde{W}^{h+1} - g^h W^{h+1} \Vert, \quad \text{due to the property of ReLU activation}\\
    &\leq \Vert \tilde{g}^{h}W^{h+1} + \tilde{g}^{h}\triangle W^{h+1}  - g^h W^{h+1} \Vert \\
    &\leq \Vert \triangle g^h \Vert \Vert W^{h+1}  \Vert_2 + \Vert \tilde{g}^{h} \Vert \Vert \triangle W^{h+1} \Vert_2\\
    &\leq O(s)\Vert \triangle g^h \Vert + \Vert g^{h} + \triangle g^h \Vert \Vert \triangle W^{h+1} \Vert_2\\
    &\leq O(s)\Vert \triangle g^h \Vert + O(s^h) \Vert \triangle W^{h+1} \Vert_2 + \Vert \triangle g^h \Vert \Vert \triangle W^{h+1} \Vert_2\\
    &\leq O(s) O(\dfrac{1}{Ls^{L-h+1}}) + O(s^h) O(\dfrac{1}{Ls^L}) + O(\dfrac{1}{Ls^{L-h+1}}) O(\dfrac{1}{Ls^L})\\
    &\leq O(\dfrac{1}{Ls^{L-h}}).
\end{align*}
The last three inequalities come from the fact that $g^h=\sigma(\sigma(...\sigma(\xb W^1)...W^{h-1})W^h) \leq O(s^h)$ and $s\geq 1$. Thus, we have proved the Lemma in the case $s \geq 1$.

Now, we prove the first part of the case of $s<1$, i.e. $\Vert \triangle g^h\Vert \leq O(r^{h-1}q)$. Because $\triangle g^0 = 0$, thus the hypothesis for $\triangle g^0$ always holds.

For $\triangle g^1$, we have 
\begin{align*}
    \Vert \triangle g^1 \Vert &= \Vert \sigma(\xb\tilde{W}^1) - \sigma(\xb W^1) \Vert \\
    &\leq \Vert \xb\tilde{W}^1  -  \xb W^1  \Vert\\
    &\leq \Vert \xb \Vert \Vert \triangle W^1\Vert_2 \\
    &\leq O(q). 
\end{align*}
Thus, the hypothesis holds for $\triangle g^1$.

Now, we assume that the hypothesis holds for $\triangle g^h$. Then, we have 
\begin{align*}
    \Vert \triangle g^{h+1} \Vert &= \Vert \sigma(\tilde{g}^{h}\tilde{W}^{h+1} ) - \sigma(g^h W^{h+1}) \Vert \\
    &\leq \Vert \tilde{g}^{h}\tilde{W}^{h+1} - g^h W^{h+1} \Vert \\
    &\leq \Vert \tilde{g}^{h}W^{h+1} + \tilde{g}^{h}\triangle W^{h+1}  - g^h W^{h+1} \Vert \\
    &\leq \Vert \triangle g^h \Vert \Vert W^{h+1}  \Vert_2 + \Vert \tilde{g}^{h} \Vert \Vert \triangle W^{h+1} \Vert_2\\
    &\leq O(s)\Vert \triangle g^h \Vert + \Vert g^{h} + \triangle g^h \Vert \Vert \triangle W^{h+1} \Vert_2\\
    &\leq O(s) O(r^{h-1}q) + O(s^h) q + q O(r^{h-1}q)\\
    &\leq O(r^{h}q). 
\end{align*}
The last inequality comes from the fact that $r=\max(q,s)$ and $s^h <s < 1$. 

Next we consider the second part of the case of $s<1$. 

If $1/(Ls^L) \leq L^{-1/(L+1)}$, we know that $q=1/(Ls^L)$ and 
\begin{align*}
    1/(Ls^L) &\leq L^{-1/(L+1)} \\
    L^{1/(L+1)}&\leq Ls^L \\
    L^{-L/(L+1)}&\leq s^L\\
    L^{-1} &\leq s^{L+1}\\
    L^{-1}s^{-L} &\leq s, 
\end{align*}
which means $q\leq s$, thus $r=s$. Then, we have 
\[
\Vert \triangle g^h \Vert = O(r^{h-1}q) = O(s^{h-1}q)=O(s^{h-1}L^{-1}s^{-L})=O(\dfrac{1}{Ls^{L-h+1}}).
\]
If $1/(Ls^L) > L^{-1/(L+1)}$, we know that $q=L^{-1/(L+1)}$ and $q > s$; then, $r=q$ and
\[
\Vert \triangle g^h \Vert = O(r^{h-1}q) = O(q^{h-1}q)=O(q^h)=O(L^{-h/(L+1)}).
\]
Thus, we can conclude that Lemma \ref{lemma:activation_perturbation} also holds for the case of $s<1$, which completes the proof.
\end{proof}

We now prove a similar Lemma for $\triangle b^h$. 
\begin{lemma}\label{lemma:back_perturbation}
Given a neural network as stated in Theorem \ref{thm:perturbation}, let $\Vert \cdot \Vert_2$ denote the spectral norm, $\triangle W^h = \tilde{W}^h - W^h$ denote some perturbation on weight matrices, $\tilde{b}^h$  denote the resulting value after perturbation, and $\triangle b^h = \tilde{b}^h - b^h$. 

If $s \geq 1$ and $\Vert \triangle W^h\Vert_2 \leq O(s^{-L}/L)$ for all h, then 
    \[
    \Vert \triangle b^h \Vert \leq  O(\dfrac{1}{Ls^h});
    \]
\indent If $s < 1$ and $\Vert \triangle W^h\Vert_2 \leq O(q)$ for all h, where $q=\min(1/(Ls^L), L^{-1/(L+1)})$, then 
    \begin{align*}
    \Vert \triangle b^h\Vert \leq \left\{
                \begin{array}{ll}
                 O(L^{-1}s^{-h}),\quad  &\text{if $1/(Ls^L) \leq L^{-1/(L+1)}$} \\
                 O(L^{(h-L-1)/(L+1)}),\quad &\text{if $1/(Ls^L) > L^{-1/(L+1)}$}.\\
                \end{array}
              \right.
    \end{align*}
\end{lemma}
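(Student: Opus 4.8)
The plan is to prove Lemma~\ref{lemma:back_perturbation} by a \emph{backward} induction on the layer index $h$, running from $h=L+1$ down to $h=1$, in complete analogy with the forward induction used for Lemma~\ref{lemma:activation_perturbation}. The base case $h=L+1$ is immediate: by definition $b^{L+1}=\tilde{b}^{L+1}=\mathbf{I}_{d_y}$, so $\triangle b^{L+1}=0$ and every claimed bound holds trivially. Before the inductive step I would record the auxiliary estimate $\Vert b^{h}\Vert \le O(s^{L+1-h})$, which follows from the recursion $b^{h}=b^{h+1}(W^{h+1})^\intercal D^h$ together with $\Vert (W^{h+1})^\intercal\Vert_2\le s$ and $\Vert D^h\Vert_2\le 1$ (the diagonal ReLU mask is a projection). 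This plays the same role that $\Vert g^h\Vert \le O(s^h)$ played in the forward lemma.

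For the inductive step I would substitute $\tilde{b}^{h+1}=b^{h+1}+\triangle b^{h+1}$, $\tilde{W}^{h+1}=W^{h+1}+\triangle W^{h+1}$ and $\tilde{D}^h=D^h+\triangle D^h$ into $\triangle b^h=\tilde{b}^{h+1}(\tilde{W}^{h+1})^\intercal\tilde{D}^h-b^{h+1}(W^{h+1})^\intercal D^h$ and expand the product, so that the leading term $b^{h+1}(W^{h+1})^\intercal D^h$ cancels and $\triangle b^h$ becomes a sum of cross terms. The two dominant benign contributions are $\triangle b^{h+1}(W^{h+1})^\intercal D^h$ and $b^{h+1}(\triangle W^{h+1})^\intercal D^h$; using $\Vert D^h\Vert_2\le 1$, the spectral bound on $W^{h+1}$, the inductive hypothesis on $\Vert \triangle b^{h+1}\Vert$, the auxiliary bound on $\Vert b^{h+1}\Vert$, and the assumed $\Vert \triangle W^{h+1}\Vert_2\le O(s^{-L}/L)$, each of these is $O(1/(Ls^h))$ in the case $s\ge 1$, exactly as the two dominant terms were handled in Lemma~\ref{lemma:activation_perturbation}. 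The remaining products carrying two or more perturbation factors are strictly higher order and are absorbed into the $O(\cdot)$, and the case $s<1$ is then treated by the same split into the two sub-cases $1/(Ls^L)\le L^{-1/(L+1)}$ and $1/(Ls^L)>L^{-1/(L+1)}$ used for $\triangle g^h$.

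The main obstacle is the term involving $\triangle D^h$, namely $b^{h+1}(W^{h+1})^\intercal\,\triangle D^h$, which has no counterpart in the forward lemma because there the $1$-Lipschitz property of $\sigma$ let us bypass the activation mask entirely. Since $\triangle D^h$ is diagonal with entries in $\{-1,0,1\}$ supported exactly on the neurons of layer $h$ whose ReLU sign flipped under the perturbation, the crude estimate $\Vert \triangle D^h\Vert_2\le 1$ is far too lossy and would let this term dominate. The key point I would exploit is that a sign flip at neuron $i$ forces the unperturbed pre-activation $(g^{h-1}W^h)_i$ to lie within $\Vert \triangle(g^{h-1}W^h)\Vert$ of zero, a quantity that is itself $O(1/(Ls^{L-h+1}))$-small by Lemma~\ref{lemma:activation_perturbation}; bounding the masked contribution through the flipped coordinates then shows this term is of the same order as the benign ones rather than dominating. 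It is worth emphasizing that $\triangle W^h$ ultimately arises from a single gradient-descent step of rate $\alpha$, so the perturbation can be taken as small as the hypotheses on $\alpha$ demand, which is precisely what makes the activation pattern stable enough for this estimate to close.

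Summing the leading contributions then yields $\Vert \triangle b^h\Vert \le O(1/(Ls^h))$, closing the induction and establishing the $s\ge 1$ case, with the $s<1$ case following verbatim from the two-branch analysis. Together with Lemma~\ref{lemma:activation_perturbation}, this supplies the forward ($\triangle g^h$) and backward ($\triangle b^h$) perturbation estimates that combine, through the outer-product form $\partial f_\thetab/\partial W^h \sim (g^{h-1})^\intercal b^h$, to bound the Jacobian difference $\Vert \partial f_{\tilde{\thetab}}/\partial \tilde{\thetab}-\partial f_\thetab/\partial \thetab\Vert_{\mathcal{F}}$ in Theorem~\ref{thm:perturbation}.
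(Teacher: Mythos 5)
Your overall scaffolding (backward induction on $h$, base case $\triangle b^{L+1}=0$, the auxiliary bound $\Vert b^h\Vert \leq O(s^{L+1-h})$, and the expansion of $\triangle b^h$ into $\triangle b^{h+1}(W^{h+1})^\intercal D^h$, $b^{h+1}(\triangle W^{h+1})^\intercal D^h$, and the mask term $b^{h+1}(W^{h+1})^\intercal \triangle D^h$) is coherent, and the two benign terms do close the induction exactly as in Lemma~\ref{lemma:activation_perturbation}. The gap is in the step you yourself flag as the main obstacle: your treatment of $b^{h+1}(W^{h+1})^\intercal \triangle D^h$ does not work. The smallness of pre-activations at flipped neurons is a statement about \emph{forward} quantities: for a flipped index $i$, $\vert (g^{h-1}W^h)_i\vert$ is small, hence $\vert g^h_i\vert = \vert \sigma((g^{h-1}W^h)_i)\vert$ is small. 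But the norm of $b^{h+1}(W^{h+1})^\intercal \triangle D^h$ is the norm of the restriction of the \emph{backward} vector $v = b^{h+1}(W^{h+1})^\intercal$ to the flipped set, and nothing ties the size of the entries $v_i$ at flipped coordinates to the size of the pre-activations there. A neuron can have pre-activation exactly $0$ (so it flips under an arbitrarily small perturbation, no matter how small the learning rate makes $\triangle W$) while $v_i$ is as large as $\Vert v\Vert = O(s^{L-h+1})$; then this single term is $\Theta(s^{L-h+1})$, which for $s\geq 1$ swamps the target $O(1/(Ls^h))$. Closing this hole would require either a bound on backward entries over near-zero-pre-activation neurons or a count of how many neurons flip, and neither is available in this paper's setting (it explicitly does not assume Gaussian initialization, so the Allen-Zhu-style ``few sign flips with high probability'' machinery cannot be invoked).

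The paper avoids the mask term entirely by a different mechanism. It proves the algebraic identity $b^h (g^h)^\intercal = f_{\thetab}^\intercal$ (using $g^h D^h = g^h$ and $D^h = (D^h)^\intercal$), so the perturbation of the \emph{pairing} $b^h(g^h)^\intercal$ equals $\Vert f_{\tilde{\thetab}} - f_{\thetab}\Vert = \Vert \triangle g^{L+1}\Vert \leq O(1/L)$, already controlled by Lemma~\ref{lemma:activation_perturbation}, whose proof uses only the $1$-Lipschitzness of ReLU and never confronts $\triangle D^h$. In that pairing the flipped coordinates are harmless precisely because the backward vector there multiplies forward activations, which \emph{are} small at flipped neurons. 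The paper then extracts $\Vert \triangle b^h\Vert \leq O(1/(Ls^h))$ from $\Vert \triangle b^h (g^h)^\intercal + \triangle b^h \triangle (g^h)^\intercal + \tilde b^h \triangle (g^h)^\intercal\Vert \leq O(1/L)$ together with $\Vert g^h\Vert \leq O(s^h)$. You may notice this extraction is itself informal (smallness of a product does not imply smallness of a factor), which underscores that the difficulty you ran into is the real mathematical content of this lemma; but the paper's route at least confines the issue to one explicit step, whereas your induction leaves an uncontrolled term that genuinely breaks the argument.
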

\begin{proof}
Recall that
\[
    b^h=\left\{
                \begin{array}{ll}
                 \mathbf{I}_{d_y},\quad  &\text{if $h=L+1$} \\
                 b^{h+1} (W^{h+1})^\intercal D^h,\quad &\text{otherwise}\\
                \end{array}
              \right.
\]
where $\mathbf{I}_{d_y}$ is a $d_y \times d_y$ identity matrix and $D^h_{(i,i)} = \mathbf{1}\{g^{h-1}W^h \geq 0\}$.
It is easy to see that $\Vert b^h\Vert \leq O(s^{L-h+1})$, because $\Vert D^h\Vert_2 \leq 1$ and $\Vert W^h\Vert_2 \leq s$.

We first prove the case of $s\geq 1$. We know that $\triangle b^{L+1} = 0 \leq O(s^{-L-1}/L)$ always holds.

For $h\leq L$, we can re-write $b^h$ as
\[
    b^h = \mathbf{I}_{d_y}(W^{L+1})^\intercal D^L(W^{L})^\intercal D^{L-1}...(W^{h+1})^\intercal D^h.
\]
Then, we have
\begin{align}\label{eq:perturbation_back_composite}
        b^h (g^h)^\intercal = \mathbf{I}_{d_y}(W^{L+1})^\intercal D^L(W^{L})^\intercal D^{L-1}...(W^{h+1})^\intercal D^h (g^h)^\intercal.
\end{align}
Because of the fact that
\[
f_{\thetab} = g^{L+1} = \xb W^1 D^1 W^2 D^2...D^L W^{L+1} = g^h W^{h+1}D^{h+1}...D^L W^{L+1}
\]
and $g^h = g^h D^h$, $D^h = (D^h)^\intercal$. We can re-write \eqref{eq:perturbation_back_composite} as
\[
    b^h (g^h)^\intercal = f_{\thetab}^\intercal.
\]
Thus, 
\[
 \Vert \tilde{b}^h (\tilde{g}^h)^\intercal - b^h (g^h)^\intercal \Vert = \Vert \f_{\tilde{\thetab}} - f_{\thetab} \Vert = \Vert \triangle g^{L+1}\Vert \leq O(\dfrac{1}{L}) 
\]
by Lemma \ref{lemma:activation_perturbation}. Consequently, we have
\begin{align*}
     \Vert \tilde{b}^h (\tilde{g}^h)^\intercal - b^h (g^h)^\intercal \Vert &=  \Vert \triangle b^h (g^h)^\intercal + \triangle b^h \triangle (g^h)^\intercal  + \tilde{b}^h  \triangle (g^h)^\intercal \Vert \leq O(\dfrac{1}{L}). 
\end{align*}
Since $\Vert g^h \Vert \leq O(s^h)$,  we know that 
\[
\Vert \triangle b^h\Vert \leq O(\dfrac{1}{Ls^h}), \quad  \Vert \triangle b^h\Vert \leq O(s^{L-h+1})
\]
always hold. Since $L\geq 1, s\geq 1$,  we simply have $\Vert \triangle b^h\Vert \leq O(\dfrac{1}{Ls^h})$.

Now, we prove the case of $s<1$. Similarly, we have
\[
 \Vert \tilde{b}^h (\tilde{g}^h)^\intercal - b^h (g^h)^\intercal \Vert = \Vert \f_{\tilde{\thetab}} - f_{\thetab} \Vert = \Vert \triangle g^{L+1}\Vert  \leq O(\dfrac{1}{L}). 
\]
Similarly, we must have 
\[
\Vert \triangle b^h\Vert \leq O(\dfrac{1}{Ls^h}), \quad  \Vert \triangle b^h\Vert \leq O(\dfrac{1}{Lr^{h-1}q}), 
\]
where $q=\min(1/(Ls^L), L^{-1/(L+1)})$ and $r=\max(q,s)$ by Lemma \ref{lemma:activation_perturbation}. 

If $1/(Ls^L) \leq L^{-1/(L+1)}$, then $s^{L+1} \geq 1/L$. We thus have
\[
O(\dfrac{1}{Lr^{h-1}q}) = O(\dfrac{Ls^{L-h+1}}{L}) = O(\dfrac{s^{L+1}}{s^h})\geq O(\dfrac{1}{Ls^h}). 
\]
Hence, we get $\Vert \triangle b^h\Vert \leq O(\dfrac{1}{Ls^h})$.

If $1/(Ls^L) > L^{-1/(L+1)}$, then $s^{L+1} < 1/L$. We have
\[
O(\dfrac{1}{Lr^{h-1}q}) = O(L^{-1}\cdot L^{h/(L+1)}) \leq O(L^{-1}\cdot s^{-h}) = O(\dfrac{1}{Ls^h}).
\]
Thus, we get $\Vert \triangle b^h\Vert \leq O(L^{(h-L-1)/(L+1)})$.
\end{proof}
\begin{lemma}\label{lemma:gradient_weight_matrices_perturbation}
Given a neural network as stated in Theorem \ref{thm:perturbation}, let $\Vert \cdot \Vert_{\mathcal{F}} $ be the Frobenius norm, $W^1, ..., W^{L+1}$ be the weight matrices in the neural network, $\triangle W^h = \tilde{W}^h - W^h$ be the perturbation on weight matrices, $\thetab^h$ be the parameter vector containing all the elements in $W^h$,  $\triangle \thetab^h = \tilde{\thetab}^h - \thetab^h$ be the perturbation on parameter vectors, and $\f_{\tilde{\thetab}}(\xb)$  be the resulting value after perturbation.

If $s \geq 1$ and $\Vert \triangle W^h\Vert_2 \leq O(s^{-L}/L)$ for all h, for any weight matrices the following holds
    \[
    \Big\Vert \dfrac{\partial \f_{\tilde{\thetab}}(\xb)}{\partial \tilde{\thetab}^h} - \dfrac{\partial f_{\thetab}(\xb)}{\partial \thetab^h} \Big\Vert_{\mathcal{F}} \leq  O(\dfrac{1}{sL});
    \]
\indent If $s < 1$ and $\Vert \triangle W^h\Vert_2 \leq O(q)$ for all h, where $q=\min(1/(Ls^L), L^{-1/(L+1)})$, for any weight matrices the following holds
    \[
    \Big\Vert  \dfrac{\partial \f_{\tilde{\thetab}}(\xb)}{\partial \tilde{\thetab}^h} - \dfrac{\partial f_{\thetab}(\xb)}{\partial \thetab^h}   \Big\Vert_{\mathcal{F}} \leq  O(\dfrac{1}{sL}).
    \]
\end{lemma}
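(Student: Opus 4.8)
The plan is to reduce the per-layer Jacobian difference to the two perturbation quantities $\triangle g^{h-1}$ and $\triangle b^h$ that were already controlled in Lemmas \ref{lemma:activation_perturbation} and \ref{lemma:back_perturbation}, and then to verify the target bound by a case analysis on $s$. The whole argument is per-layer $h$, so no summation over layers is needed here.

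First I would record the backpropagation structure of the per-layer Jacobian. For a fully-connected ReLU network the derivative of the output with respect to $W^h$ has the rank-one (outer-product) form in which the forward activation $g^{h-1}$ supplies the input index and the backpropagated signal $b^h$ supplies the output index, i.e. $\big(\partial f_{\thetab}(\xb)/\partial \thetab^h\big)_{k,(i,j)} = g^{h-1}_i\, b^h_{kj}$, with the ReLU derivative pattern $D^h$ already absorbed into the definition of $b^h$; the case $h=L+1$ is the same with $b^{L+1}=\mathbf{I}_{d_y}$. Because this is an outer product, its Frobenius norm factorizes as $\Vert g^{h-1}\Vert\,\Vert b^h\Vert$, and the perturbed minus unperturbed Jacobian telescopes as
\[
\dfrac{\partial \f_{\tilde{\thetab}}(\xb)}{\partial \tilde{\thetab}^h} - \dfrac{\partial f_{\thetab}(\xb)}{\partial \thetab^h} = \triangle g^{h-1}\otimes \tilde b^h + g^{h-1}\otimes \triangle b^h.
\]
Applying the triangle inequality and the factorization of the Frobenius norm over outer products yields
\[
\Big\Vert \dfrac{\partial \f_{\tilde{\thetab}}(\xb)}{\partial \tilde{\thetab}^h} - \dfrac{\partial f_{\thetab}(\xb)}{\partial \thetab^h}\Big\Vert_{\mathcal{F}} \leq \Vert \triangle g^{h-1}\Vert\,\Vert \tilde b^h\Vert + \Vert g^{h-1}\Vert\,\Vert \triangle b^h\Vert,
\]
so the remainder is an exercise in substituting known magnitude and perturbation bounds, using $\Vert \tilde b^h\Vert \leq \Vert b^h\Vert + \Vert \triangle b^h\Vert$.

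Next I would plug in the magnitude estimates $\Vert g^{h}\Vert \leq O(s^{h})$ and $\Vert b^{h}\Vert \leq O(s^{L-h+1})$ established inside the proofs of Lemmas \ref{lemma:activation_perturbation} and \ref{lemma:back_perturbation}, together with the perturbation bounds on $\Vert \triangle g^{h-1}\Vert$ and $\Vert \triangle b^h\Vert$ from those same lemmas. In the case $s\geq 1$ this is immediate: after checking $\Vert \triangle b^h\Vert\leq\Vert b^h\Vert$ (equivalently $1\leq Ls^{L+1}$) so that $\Vert \tilde b^h\Vert = O(s^{L-h+1})$, the first term becomes $O(1/(Ls^{L-h+2}))\cdot O(s^{L-h+1}) = O(1/(Ls))$ and the second becomes $O(s^{h-1})\cdot O(1/(Ls^h)) = O(1/(Ls))$.

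The real work is the $s<1$ case, which I expect to be the main obstacle, since the two sub-cases $1/(Ls^L)\leq L^{-1/(L+1)}$ and $1/(Ls^L)>L^{-1/(L+1)}$ force different dominant terms. In the first sub-case one has $r=s$ and $s^{L+1}\geq 1/L$, so again $\Vert \triangle b^h\Vert \leq \Vert b^h\Vert$ and the computation reduces to the same arithmetic as $s\geq 1$, giving $O(1/(Ls))$. In the second sub-case $s<q=L^{-1/(L+1)}$, so every power $s^{j}$ is dominated by $q^{j}=L^{-j/(L+1)}$ and in particular $\Vert b^h\Vert$ is dominated by $\Vert \triangle b^h\Vert$, so that $\Vert \tilde b^h\Vert=O(L^{(h-L-1)/(L+1)})$ (with the boundary layer $h=L+1$ handled separately, since there $b^{L+1}$ is constant and $\triangle b^{L+1}=0$). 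Substituting and tracking exponents, both terms collapse to $O(L^{-L/(L+1)})$, and the inequality $s<L^{-1/(L+1)}$ gives $L^{-L/(L+1)}\leq 1/(sL)$, closing the argument. The delicate points are verifying the domination claims between $\Vert \triangle b^h\Vert$ and $\Vert b^h\Vert$ in each regime, treating the last layer where $b^{L+1}$ is constant, and bookkeeping the exponents through the telescoped product; once these are in place the uniform bound $O(1/(sL))$ follows in every case.
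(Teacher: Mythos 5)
Your proposal is correct and takes essentially the same route as the paper's proof: both exploit the outer-product structure $\partial f_{\thetab}(\xb)/\partial W^h = (b^h)^\intercal g^{h-1}$, use the identical telescoping decomposition into $g^{h-1}\otimes\triangle b^h$ and $\triangle g^{h-1}\otimes\tilde{b}^h$, and close the bound via the same case analysis ($s\geq 1$, plus the two sub-cases of $s<1$) by substituting the bounds from Lemmas \ref{lemma:activation_perturbation} and \ref{lemma:back_perturbation}. The only cosmetic differences are that you handle general $d_y$ directly whereas the paper proves $d_y=1$ first and rescales by $\sqrt{d_y}$, and in the final sub-case you conclude via $s<L^{-1/(L+1)}\Rightarrow L^{-L/(L+1)}\leq 1/(sL)$ rather than the paper's $O(1/L)\leq O(1/(sL))$, which is if anything slightly cleaner.
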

\begin{proof}
We first prove the case of $d_y = 1$, i.e. the output of neural network is 1-dimensional.

In this case, we know that 
\[
    \Big\Vert \dfrac{\partial \f_{\tilde{\thetab}}(\xb)}{\partial \tilde{\thetab}^h} - \dfrac{\partial f_{\thetab}(\xb)}{\partial \thetab^h} \Big\Vert_{\mathcal{F}} =     \Big\Vert \dfrac{\partial \f_{\tilde{\thetab}}(\xb)}{\partial \tilde{W}^h} - \dfrac{\partial f_{\thetab}(\xb)}{\partial W^h} \Big\Vert_{\mathcal{F}} = \Big\Vert \triangle \dfrac{\partial f_{\thetab}(\xb)}{\partial W^h}\Big\Vert_{\mathcal{F}}  
\]
and the derivative to $W^h$ is
\[
\dfrac{\partial f_{\thetab}(\xb)}{\partial W^h} = (b^h)^\intercal g^{h-1}.
\]

Then, we have
\begin{align*}
    \Big\Vert \triangle \dfrac{\partial f_{\thetab}(\xb)}{\partial W^h}\Big\Vert_{\mathcal{F}} &= \Vert (\tilde{b}^h)^\intercal \tilde{g}^{h-1} - (b^h)^\intercal g^{h-1} \Vert_{\mathcal{F}} \\
    &= \Vert  (\tilde{b}^h)^\intercal g^{h-1} - (b^h)^\intercal g^{h-1} + (\tilde{b}^h)^\intercal \triangle g^{h-1} \Vert_{\mathcal{F}}\\
    &\leq \Vert (\triangle b^h)^\intercal g^{h-1} \Vert_{\mathcal{F}} + \Vert (b^h + \triangle b^h)^\intercal \triangle g^{h-1} \Vert_{\mathcal{F}}.
\end{align*}

Recall the fact that $g^h \leq O(s^h)$ and $ b^h \leq O(s^{L+1-h})$.

When $s \geq 1$, from Lemma \ref{lemma:activation_perturbation} and Lemma \ref{lemma:back_perturbation} we know that 
\[
    \Vert \triangle g^h\Vert \leq O(\dfrac{1}{Ls^{L-h+1}}), \quad \Vert \triangle b^h \Vert \leq  O(\dfrac{1}{Ls^h}).
\]
Then, we have
\begin{align*}
    \Big\Vert \triangle \dfrac{\partial f_{\thetab}(\xb)}{\partial W^h}\Big\Vert_{\mathcal{F}} &\leq O(s^{h-1}) O(\dfrac{1}{Ls^h}) + O(s^{L+1-h}) O(\dfrac{1}{Ls^{L-h+2}}) 
    + O(\dfrac{1}{Ls^{L-h+2}}) O(\dfrac{1}{Ls^h})\\ 
    &\leq O(\dfrac{1}{sL}).
\end{align*}

When $s < 1$, from Lemma \ref{lemma:activation_perturbation} and Lemma \ref{lemma:back_perturbation} we know that 
\[
    \Vert \triangle g^h\Vert \leq \left\{
                \begin{array}{ll}
                 O(\dfrac{1}{Ls^{L-h+1}}),\quad  &\text{if $1/(Ls^L) \leq L^{-1/(L+1)}$} \\
                 O(L^{-h/(L+1)}),\quad &\text{if $1/(Ls^L) > L^{-1/(L+1)}$}\\
                \end{array}
              \right.
\]
and 
\[
    \Vert \triangle b^h\Vert \leq \left\{
                \begin{array}{ll}
                 O(L^{-1}s^{-h}),\quad  &\text{if $1/(Ls^L) \leq L^{-1/(L+1)}$} \\
                 O(L^{(h-L-1)/(L+1)}),\quad &\text{if $1/(Ls^L) > L^{-1/(L+1)}$}.\\
                \end{array}
              \right.
\]

If $1/(Ls^L) \leq L^{-1/(L+1)}$, we have
\[
    \Big\Vert \triangle \dfrac{\partial f_{\thetab}(\xb)}{\partial W^h}\Big\Vert_{\mathcal{F}} \leq O(s^{h-1}) O(\dfrac{1}{Ls^h}) + O(s^{L-h+1}) O(\dfrac{1}{Ls^{L-h+2}}) + O(\dfrac{1}{Ls^{L-h+2}})O(\dfrac{1}{Ls^h}).
\]

Since $1/(Ls^L) \leq L^{-1/(L+1)}$ implies  $L^{-1} \leq s^{L+1}$ (from proof of Lemma \ref{lemma:activation_perturbation}), we have 
\[ 
\dfrac{1}{Ls^{h}} \leq s^{L-h+1}.
\]

Then we can conclude that
\[
    \Big\Vert \triangle \dfrac{\partial f_{\thetab}(\xb)}{\partial W^h}\Big\Vert_{\mathcal{F}} \leq O(\dfrac{1}{sL}).
\]

If $1/(Ls^L) > L^{-1/(L+1)}$, we have
\begin{align*}
    \Big\Vert \triangle \dfrac{\partial f_{\thetab}(\xb)}{\partial W^h}\Big\Vert_{\mathcal{F}} &\leq O(s^{h-1}) O(L^{(h-L-1)/(L+1)}) + O(s^{L+1-h}) O(L^{-(h-1)/(L+1)}) \\ 
    &+ O(L^{-(h-1)/(L+1)})O(L^{(h-L-1)/(L+1)}).
\end{align*}

Since $1/(Ls^L) > L^{-1/(L+1)}$ implies $L^{-1} > s^{L+1}$ (from proof of Lemma \ref{lemma:activation_perturbation}), we have
\[ 
L^{(h-L-1)/(L+1)} > s^{L-h+1}, \quad \dfrac{1}{L^{(h-1)/(L+1)}} > s^{h-1}. 
\]

Then we have
\[
    \Big\Vert \triangle \dfrac{\partial f_{\thetab}(\xb)}{\partial W^h}\Big\Vert_{\mathcal{F}} \leq O(\dfrac{1}{L}) \leq O(\dfrac{1}{sL}), \text{ because }s<1.
\]

We have proved the Lemma for the case of $d_y=1$.

For the case of $d_y > 1$, we know that
\[
    \Big\Vert \dfrac{\partial \f_{\tilde{\thetab}}(\xb)}{\partial \tilde{\thetab}^h} - \dfrac{\partial f_{\thetab}(\xb)}{\partial \thetab^h} \Big\Vert^2_{\mathcal{F}} =  \sum_{i=1}^{d_y} \Big\Vert \dfrac{\partial \f_{\tilde{\thetab}, i}(\xb)}{\partial \tilde{\thetab}^h} - \dfrac{\partial f_{\thetab, i}(\xb)}{\partial \thetab^h} \Big\Vert^2_{\mathcal{F}}\leq O(\dfrac{d_y}{s^2L^2}), 
\]
where $f_{\thetab, i}(\xb)$ is the $i^{th}$ dimension of $f_{\thetab}(\xb)$. The last inequality directly comes from the 1-dimensional case.

Since $d_y$ is a constant, we ignore it.  Then, we have
\[
    \Big\Vert \dfrac{\partial \f_{\tilde{\thetab}}(\xb)}{\partial \tilde{\thetab}^h} - \dfrac{\partial f_{\thetab}(\xb)}{\partial \thetab^h} \Big\Vert_{\mathcal{F}} \leq O(\dfrac{1}{sL}),
\]
which completes the proof.
\end{proof}

Now we can prove Theorem \ref{thm:perturbation}, if $\tilde{W}^h$ is obtained by one step gradient descent starting from $W^h$, $\tilde{\thetab}$ is obtained by one step gradient descent starting from $\thetab$, and learning rate is $\alpha$. Then, for any weight matrix we have
\begin{align*}
    \Vert \triangle W^h\Vert_2 &= \Vert \alpha \nabla_{W^h} \mathcal{L}(\thetab)\Vert_2 \\
    & \leq \Vert \alpha \nabla_{W^h} \mathcal{L}(\thetab)\Vert_{\mathcal{F}} \\
    & = \Vert \alpha \nabla_{\thetab^h} \mathcal{L}(\thetab)\Vert_{\mathcal{F}} \\
    & = \alpha \Big\Vert  \dfrac{\sum_{i=1}^n}{n}\left[f_{\thetab}(\xb_i) - \yb_i\right] \dfrac{\partial f_{\thetab}(\xb_i)}{\partial \thetab^h}^\intercal \Big\Vert_{\mathcal{F}} \\
    & \leq \dfrac{\alpha \sum_{i=1}^n}{n} c_i \left[ \sum_j^{d_y} \Big\Vert \dfrac{\partial f_{\thetab, j}(\xb_i)}{\partial W^h}\Big\Vert^2_{\mathcal{F}}\right]^{1/2}\\
    & \leq \dfrac{\alpha \sum_{i=1}^n}{n} c_i \sqrt{d_y} O(s^{L-h+1}) O(s^{h-1}) \\
    & \leq \alpha O(s^L),
\end{align*}
where $c_i = \Vert f_{\thetab}(\xb_i) - \yb_i \Vert$ are some constants. 

If $\alpha \leq O(s^{-2L}/L)$ when $s \geq 1$, then for any weight matrix we have 
\[
\Vert \triangle W^h\Vert_2 \leq \alpha O(s^L) \leq O(s^{-L}/L). 
\]

If $\alpha \leq O(qs^{-L})$ where $q=\min(1/(Ls^L), L^{-1/(L+1)})$ when $s < 1$, then for any weight matrix we have 
\[
\Vert \triangle W^h\Vert_2 \leq \alpha O(s^L) \leq O(q).
\]

By Lemma \ref{lemma:gradient_weight_matrices_perturbation}, we can conclude that
\[
    \Big\Vert \dfrac{\partial \f_{\tilde{\thetab}}(\xb)}{\partial \tilde{W}^h} - \dfrac{\partial f_{\thetab}(\xb)}{\partial W^h} \Big\Vert_{\mathcal{F}} \leq  O(\dfrac{1}{sL}).
\]
Then, we have
\begin{align*}
    \Big \Vert \dfrac{\partial \f_{\tilde{\thetab}}(\xb)}{\partial \tilde{\thetab}} - \dfrac{\partial f_{\thetab}(\xb)}{\partial \thetab} \Big \Vert_{\mathcal{F}} &= \left[\sum_{h=1}^{L+1} \Big\Vert \dfrac{\partial \f_{\tilde{\thetab}}(\xb)}{\partial \tilde{\thetab}^h} - \dfrac{\partial f_{\thetab}(\xb)}{\partial \thetab^h}\Big\Vert^2_{\mathcal{F}}\right]^{1/2}\\
    &\leq O(\dfrac{1}{s\sqrt{L+1}}).
\end{align*}
When $s \geq 1$, we know that
\[
s^{-L} \leq 1 \leq L^{L/(L+1)}.
\]
Then, we have 
\[
\dfrac{1}{Ls^L} \leq \dfrac{1}{L}\leq L^{-1/(L+1)}. 
\]
Thus, we know $1/(Ls^L) = \min(1/(Ls^L), L^{-1/(L+1)})$ when $s \geq 1$. 

For the case of $s\geq 1$, we can rewrite $\alpha \leq O(s^{-2L}/L) = O(qs^{-L})$, where $q=\min(1/(Ls^L), L^{-1/(L+1)})$, which completes the proof of Theorem \ref{thm:perturbation}. 

Now, we prove Theorem \ref{thm:delta_bound_nn} with $k=2$, i.e. two-step gradient descent adaptation. We know that 
\[
\beta_1 = \alpha \nabla_{\tilde{\thetab}} \mathcal{L}_m(f_{\tilde{\thetab}}) \nabla_{\thetab}\mathcal{L}_m(f_\thetab)^\intercal, \Vert \nabla _{f_\thetab}\mathcal{L}_m(f_\thetab) \Vert^2_{\mathcal{H}} = \Vert \nabla_{\thetab} \mathcal{L}_m(f_\thetab)\Vert ^2.
\]
Thus,  we have 
\begin{align*}
    &\big\vert \beta_1 - \alpha \Vert \nabla _{f_\thetab}\mathcal{L}_m(f_\thetab) \Vert^2_{\mathcal{H}} \big\vert\\
    =& \big\vert \alpha \nabla_{\tilde{\thetab}} \mathcal{L}_m(f_{\tilde{\thetab}}) \nabla_{\thetab}\mathcal{L}_m(f_\thetab)^\intercal - \alpha \nabla_{\thetab} \mathcal{L}_m(f_{\thetab}) \nabla_{\thetab}\mathcal{L}_m(f_\thetab)^\intercal\big\vert \\
    =&\alpha  \Vert \nabla_{\tilde{\thetab}} \mathcal{L}_m(f_{\tilde{\thetab}})  - \nabla_{\thetab} \mathcal{L}_m(f_{\thetab}) \Vert \Vert \nabla_{\thetab}\mathcal{L}_m(f_\thetab) \Vert\\
    =&\alpha \Big\Vert \mathbb{E}_{(\xb_m, \yb_m)}\Big\{\left[ f_{\tilde{\thetab}}(\xb_m) - \yb_m\right] \dfrac{\partial f_{\tilde{\thetab}}(\xb_m)}{\partial \tilde{\thetab}}^\intercal - \left[ f_{\thetab}(\xb_m) - \yb_m\right] \dfrac{\partial f_{\thetab}(\xb_m)}{\partial \thetab}^\intercal\Big\} \Big\Vert  \big\Vert \nabla_{\thetab}\mathcal{L}_m(f_\thetab) \big\Vert \\
    =&\alpha \Big\Vert \mathbb{E}_{(\xb_m, \yb_m)}\Big\{\left[ f_{\thetab}(\xb_m) - \yb_m + \triangle f_{\thetab}(\xb_m)\right] \left[\dfrac{\partial f_{\tilde{\thetab}}(\xb_m)}{\partial \tilde{\thetab}} + \triangle \dfrac{\partial f_{\thetab}(\xb_m)}{\partial \thetab} \right]^\intercal \\
    &- \left[ f_{\thetab}(\xb_m) - \yb_m\right] \dfrac{\partial f_{\thetab}(\xb_m)}{\partial \thetab}^\intercal\Big\} \Big\Vert  \big\Vert \nabla_{\thetab}\mathcal{L}_m(f_\thetab) \big\Vert \\
    =& \alpha \Big\Vert \mathbb{E}_{(\xb_m, \yb_m)}\Big\{\triangle f_{\thetab}(\xb_m) \left[\dfrac{\partial f_{\tilde{\thetab}}(\xb_m)}{\partial \tilde{\thetab}}  + \triangle \dfrac{\partial f_{\thetab}(\xb_m)}{\partial \thetab} \right]^\intercal \\
    & + \left[ f_{\thetab}(\xb_m) - \yb_m\right]  \triangle \dfrac{\partial f_{\thetab}(\xb_m)}{\partial \thetab} ^\intercal\Big\} \Big\Vert  \big\Vert \nabla_{\thetab}\mathcal{L}_m(f_\thetab) \big\Vert \\
    \leq & \alpha \left[O(\dfrac{1}{L})O(s^L\sqrt{L}) + O(\dfrac{1}{L})O(\dfrac{1}{s\sqrt{L}}) + O(\dfrac{1}{s\sqrt{L}})\right] \big\Vert \nabla_{\thetab}\mathcal{L}_m(f_\thetab) \big\Vert\\
    \leq & \alpha \left[O(\dfrac{s^L}{\sqrt{L}}) + O(\dfrac{1}{s\sqrt{L}})\right] \big\Vert \nabla_{\thetab}\mathcal{L}_m(f_\thetab) \big\Vert, \text{ because $L\geq 1$}\\
    \leq & \left[O(\dfrac{qrs^L}{\sqrt{L}}) + O(\dfrac{qr}{s\sqrt{L}})\right] \big\Vert \nabla_{\thetab}\mathcal{L}_m(f_\thetab) \big\Vert, \text{ where $q=\min(1/(Ls^L), L^{-1/(L+1)}), r = \min(s^{-L}, s)$}\\
    \leq & O(\dfrac{q}{\sqrt{L}}) \big\Vert \nabla_{\thetab}\mathcal{L}_m(f_\thetab) \big\Vert.
\end{align*}

In the case of $d_y=1$, we have
\[
\Big \Vert \dfrac{\partial f_{\thetab}(\xb)}{\partial W^h} \Big \Vert_{\mathcal{F}} = (b^h)^\intercal g^{h-1} \leq O(s^L), 
\]
which has already been shown in the proof of Lemma \ref{lemma:gradient_weight_matrices_perturbation}. Then,  we have 
\begin{align*}
    \big\Vert \nabla_{\thetab}\mathcal{L}_m(f_\thetab) \big\Vert = O\bigg(\sqrt{ \sum_{h=1}^{L+1} \Big \Vert \dfrac{\partial f_{\thetab}(\xb)}{\partial \thetab^h} \Big \Vert^2}\bigg)= O\bigg(\sqrt{ \sum_{h=1}^{L+1} \Big \Vert \dfrac{\partial f_{\thetab}(\xb)}{\partial W^h} \Big \Vert^2_{\mathcal{F}}}\bigg) \leq O( s^L \sqrt{L+1}).
\end{align*}
In the case of $d_y \geq 1$, the bound is simply scaled by a constant of $\sqrt{d_y}$.

Thus we have
\[
\big\vert \beta_1 - \alpha \Vert \nabla _{f_\thetab}\mathcal{L}_m(f_\thetab) \Vert^2_{\mathcal{H}} \big\vert \leq  O(\dfrac{q}{\sqrt{L}}) \big\Vert \nabla_{\thetab}\mathcal{L}_m(f_\thetab) \big\Vert \leq O(qs^L)\leq O(\dfrac{1}{L})
\]
because $q=\min(1/(Ls^L), L^{-1/(L+1)})$, which completes the proof for the case of $k=2$.

For the case of $k>2$, we only need to make sure that the bound on learning rate always holds. Fortunately, since $k$ is a finite constant, according to what we have already showed in the proof of previous lemmas, every step of gradient descent will not change the spectral norm of the weight matrix too much: $\Vert \triangle W^h\Vert_2 \leq O(s^{-L}/L)$ for all h if $s\geq 1$, and $\Vert \triangle W^h\Vert_2 \leq O(q)$ for all h if $s<1$, where $q=\min(1/(Ls^L), L^{-1/(L+1)})$. Thus, we may assume that  the bound on learning rate always holds during the adaptation. Using triangle inequality to generalize the results from $k=2$ to $k>2$, i.e. for all $1\leq i \leq k-1$, we have 
\[
 \big\vert \beta_i - \alpha \Vert \nabla _{f_\thetab}\mathcal{L}_m(f_\thetab) \Vert^2_{\mathcal{H}} \big\vert \leq O\Big(\dfrac{1}{L}\Big).
\]
Recall that 
\[
    \widetilde{\mathcal{E}}(\alpha, f_\thetab) = \mathbb{E}_{\mathcal{T}_m}\left[\mathcal{L}_m(f_\thetab) - \alpha \Vert \nabla _{f_\thetab}\mathcal{L}_m(f_\thetab) \Vert^2_{\mathcal{H}} \right]
\]
and
\[
    \mathcal{M}_k = \mathbb{E}_{\mathcal{T}_m}\left[\mathcal{L}_m(f_\thetab) - \sum_{i=0}^{k-1} \beta_i \right], 
\]
where $\beta_i = \alpha \nabla_{\thetab_i} \mathcal{L}_m(f_{\thetab_i}) \nabla_{\thetab}\mathcal{L}_m(f_\thetab)^\intercal$ and $\thetab_0 = \thetab, \thetab_{i+1} = \thetab_{i} - \alpha \nabla_{\thetab_i} \mathcal{L}(f_{\thetab_i}, \mathcal{D}_m^{tr})$.
The result is straightforward now.
\end{proof}

\section{Proof of Theorem \ref{thm:delta_bound_cnn}}
\textbf{Theorem \ref{thm:delta_bound_cnn}} 
\textit{
 Let $f_{\thetab}$ be a convolutional neural network with $L-l$ convolutional layers and $l$ fully-connected layers and with ReLU activation function, and $d_x$ be the input dimension. 
    Denote  by $W^h$   the parameter \textbf{vector} of the convolutional layer for $h\leq L-l$, and 
    the weight \textbf{matrices} of the fully connected layers for $L-l+1<h\leq L+1$. $\Vert \cdot \Vert_2$ means both the spectral norm of a matrix and the Euclidean norm of a vector.  
    Define 
    \[
    s_h = \left\{
                \begin{array}{ll}
                 \sqrt{d_x}\Vert W^h \Vert_2,\quad  &\text{if $h=1,...,L-l$} \\
                 \Vert W^h \Vert_2,\quad &\text{if $L-l+1< h \leq L+1$}\\
                \end{array}
              \right.
    \]
    and let $s=\max_h s_h$ and $\alpha$ be the learning rate of gradient descent. If $\alpha \leq O(qr)$ with $q=\min(1/(Ls^L), L^{-1/(L+1)})$ and $r = \min(s^{-L}, s)$, the following holds
    \[
    \vert \mathcal{M}_k - \widetilde{\mathcal{E}}(k\alpha, f_\thetab) \vert \leq O\Big(\dfrac{1}{L}\Big).
    \]
    }
    
\begin{proof}
We prove Theorem \ref{thm:delta_bound_cnn} by first transforming the convolutional neural network into an equivalent fully connected neural network and then applying Theorem \ref{thm:delta_bound_nn}.

First of all, we assume that there are $c_h$ channels in $h^{th}$ convolutional layer's output $g^h(\xb), \text{ where } h=0,...,L-l$. For fully-connected layers, define $c_{L-l}=... = c_{L+1} =1$. We may represent the dimensionality of input data by $\xb \in R^{d_x c_0}$. Instead of using matrices, we represent the output of every convolutional layer by a $d_xc_h$ length vector $g^h=\left[g^h_{1},g^h_{2},...,g^h_{d_x} \right]$, where every $g^h_{i} = \left[g^h_{i, 1},g^h_{i, 2},...,g^h_{i, c_h}\right]$ is a $c_h$ length vector contains value of different channels at the same position.

We assume that for every element $g^h_{i,j}$ of $g^h_{i}$, its value is completely determined by elements of set $Q^{h-1}_i$, where $Q^{h-1}_i$ contains $kc_{h-1}$ elements with fixed positions in $g^{h-1}$ for a given $i$. In other words, every element of the output of a convolutional layer is determined by some elements with fixed positions from output of the previous layer. This is exactly how convolutional layer works in deep learning.

If we use $g^{h-1}_{Q^{h-1}_i}$ to represent the concatenation of $g^{h-1}_{a,b} \in Q^{h-1}_i$, then $g^{h-1}_{Q^{h-1}_i}$ is a $kc_{h-1}$ length vector, where $k$ is the kernel size. Then we have
\[
g^h_{i} = \sigma(g^{h-1}_{Q^{h-1}_i} U^h_{i})
\]
where $U^h_{i, j} \in R^{kc_{h-1} \times c_h}$ is a $kc_{h-1} \times c_h$ matrix.

For notation simplicity, one can define a matrix $U^h \in R^{d_xc_{h-1} \times d_xc_{h}}$, where every column of $U^h$ only has $kc_{h-1}$ non-zero elements, and it satisfies
\[
g^h = \sigma(g^{h-1} U^h)
\]
By the property of convolutional layer, we know the following facts:
\begin{itemize}
    \item One can represent $U^h$ by $U^h = \left[ V_1^h, V_2^h,...,V_{d_x}^h\right]$ where $V_{i}^h \in R^{d_xc_{h-1} \times c_h}$ is sub-matrix of $U^h$;
    \item Every $V_{i}^h$ contains the same set of elements as $W^h$, while these elements are located at different positions;
    \item Every $V_{i}^h$ can be obtained by any other $V_{j}^h$ by swapping rows;
\end{itemize}
Let's define $U^{L-l} = W^{L-l},...,U^{L+1} = W^{L+1}$ for the fully-connected layer and output layer. Then we can represent the neural network just as in Theorem \ref{thm:delta_bound_nn} by $f_{\thetab}(\xb)= \sigma(\sigma(...\sigma(\xb U^1)...U^{L-1})U^L)U^{L+1}$, and $\xb \in R^{d_xc_0}$.

Now let $t_h$ be the spectral norm of $U^h$, and $t = \max_h t_h$. By Theorem \ref{thm:delta_bound_nn}, we know that we want $\alpha \leq O(qr)$, where $q=\min(1/(Ls^L), L^{-1/(L+1)}), r = \min(s^{-L}, s)$.

Because every $V_{i}^h$ contains the same set of elements, we know that every $V_i^h$ has the same Frobenius norm. Because every $V_{i}^h$ can be obtained by any other $V_{j}^h$ by swapping rows, we know that every $V_i^h$ has the same rank.

We know that
\begin{align*}
   \dfrac{1}{\sqrt{r}}\Vert V^h_1 \Vert_{\mathcal{F}} \leq \Vert V^h_1 \Vert_2 \leq \Vert U^h \Vert_2 \leq  \Vert U^h \Vert_{\mathcal{F}} = \sqrt{d_x}\Vert V^h_1 \Vert_{\mathcal{F}} = \sqrt{d_x}\Vert W^h \Vert_2
\end{align*}
where $\Vert \cdot \Vert_{\mathcal{F}}$ denotes Frobenius norm, $r$ denotes the rank of $V_1^h$. The last equality holds because matrix $V^h_1 $  and vector $W^h$ have the same set of elements. 

Let's define
    \[
    s_h = \left\{
                \begin{array}{ll}
                 \sqrt{d_x}\Vert W^h \Vert_2,\quad  &\text{if $h=1,...,L-l$} \\
                 \Vert W^h \Vert_2,\quad &\text{if $L-l+1< h \leq L+1$}\\
                \end{array}
              \right.
    \]
and $s = \max_h s_h$.

From above we know that $t_h = \Theta(s_h)$, because $s_h/\sqrt{d_x r} \leq t_h \leq s_h$. So we also have $t = \Theta(s)$. Then the conclusion is straightforward.
\end{proof}

\section{Revision of Theorem \ref{thm:delta_bound_nn} and Theorem \ref{thm:delta_bound_cnn} in classification case}

We now show how to obtain similar results of Theorem \ref{thm:delta_bound_nn} and Theorem \ref{thm:delta_bound_cnn} in classification problem, where cross-entropy loss is used instead of squared loss. We need two more restrictions in the classification case:
\begin{enumerate}
    \item There exist matrix $A$ and $B$ such that $g^L A \leq \text{softmax}(g^LW^{L+1})\leq g^L B$ for all data points, where $\text{softmax}$ is the softmax operation at the last layer.
    \item For any data point $\xb$ whose belongs to $c^{th}$ class, there exists a constant $\epsilon>0$ such that $f_{\thetab,c}(\xb) \geq \epsilon$, i.e. the output of neural network has a lower bound on the true class position.
\end{enumerate}
The proof is actually similar to the proof in regression case. We briefly talk about the differences here.

Firstly, in the classification case, softmax function is used at the last layer. By the first restriction, we can get rid of softmax function by introducing new matrices, which further leads to bound of the learning rate as in regression case.  

Secondly, if the loss function is the cross-entropy loss, we have:
\[
\nabla_{\thetab} \mathcal{L}_{m}(f_{\thetab}) = \mathbb{E}_{(\xb_m, \yb_m)}\left [ \dfrac{1}{f_{\thetab, c_m}(\xb_m)}\dfrac{\partial f_{\thetab, c_m}(\xb_m)}{\partial \thetab}\right]
\]
where $c_m$ denotes the class of $\xb_m$, e.g. if $\xb_m$ belongs to the third class, then $c_m=3$. $f_{\thetab, c_m}(\xb_m)$ denotes the $c^{th}_m$ dimensional element of $f_{\thetab}(\xb_m)$. We want a lower bound of $f_{\thetab,c}(\xb)$ exists, so that the gradient $\nabla_{\thetab} \mathcal{L}_{m}(f_{\thetab}) $ can be further bounded.

Then we can prove similar theorems just follow the steps in regression case.

\section{Proof of Theorem \ref{thm:energy_functional_relation}}
\textbf{Theorem \ref{thm:energy_functional_relation}} 
\textit{
    Let $f_{\thetab}$ be a neural network with $L$ hidden layers, with each layer being either fully-connected or convolutional. Assume that $\|\mathcal{L}\|_{\infty} < \infty$. Then, $error(T)  = \vert \widetilde{\mathcal{E}}(T, f_{\thetab})  - \overline{\mathcal{E}}(T, f_{\thetab}) \vert$ is a non-decreasing function of $T$. Furthermore, for arbitrary $T>0$ we have:
    \[
     error(T) \leq O\big( T^{2L+3}\big).
    \]
    }
    
\begin{proof}
Recall that $\overline{\mathcal{E}}(t, f_{\thetab})$ is defined based on $f^t_{m, \thetab}$, which is the resulting function whose parameters evolve according to the gradient flow $\dfrac{\textup{d}\thetab^t_m}{\textup{d}t} = -\nabla_{\thetab^t_m} \mathcal{L}(f_{m, \thetab}^t, \mathcal{D}_m^{tr})$. 

We actually have the following 
\citep{santambrogio2016}:
\[
\Vert \triangle \thetab \Vert = \Vert \thetab^0 - \thetab^t \Vert \leq O(\sqrt{t}).
\]
For simplicity and clearness, we use $\triangle$ to denote the change of any vectors and matrices.
Thus, we know that
\[
\Vert\triangle W^h \Vert_2 \leq \Vert\triangle W^h \Vert_{\mathcal{F}}  \leq \Vert \triangle \thetab \Vert  \leq O(\sqrt{t}).
\]
Just like the proofs of Lemma \ref{lemma:activation_perturbation}, Lemma \ref{lemma:back_perturbation} and Lemma \ref{lemma:gradient_weight_matrices_perturbation}, we show that
\[
\Vert \triangle g^h \Vert \leq O(t^{h/2}), \Vert \triangle b^h \Vert \leq O(t^{(L-h+1)/2}), \Big\Vert \triangle \dfrac{\partial f_\thetab(\xb)}{\partial \thetab } \Big \Vert_{\mathcal{F}} \leq O(t^{(L+1)/2}\sqrt{L+1} )
\]
by mathematical inductions; we skip the details here. Note that different from some previous theorem, here we focus on time t, and thus hide the effect of the spectral norms by treating them as constants.

Then, we have
\begin{align*}
     &\Big\Vert \triangle \Big(\nabla_{\thetab} \mathcal{L}_m(f_{\thetab})  \Big)\Big\Vert\\
    =& \Vert \nabla_{\thetab^t} \mathcal{L}_m(f_{m,\thetab}^t)  -  \nabla_{\thetab} \mathcal{L}_m(f_{\thetab}) \Vert \\
    =&\Big\Vert \mathbb{E}_{(\xb_m, \yb_m)}\Big\{\left[ f_{m, \thetab}^t(\xb_m) - \yb_m\right] \dfrac{\partial  f_{m, \thetab}^t(\xb_m)}{\partial {\thetab^t}}^\intercal - \left[ f_{\thetab}(\xb_m) - \yb_m\right] \dfrac{\partial f_{\thetab}(\xb_m)}{\partial \thetab}^\intercal\Big\} \Big\Vert \\
    =& \Big\Vert \mathbb{E}_{(\xb_m, \yb_m)}\Big\{\triangle f_{\thetab}(\xb_m) \left[\dfrac{\partial f_{m, \thetab}^t(\xb_m)(\xb_m)}{\partial \thetab^t}  + \triangle \dfrac{\partial f_{\thetab}(\xb_m)}{\partial \thetab} \right]^\intercal  + \left[ f_{\thetab}(\xb_m) - \yb_m\right]  \triangle \dfrac{\partial f_{\thetab}(\xb_m)}{\partial \thetab} ^\intercal\Big\} \Big\Vert\\
    \leq &  O(t^{L+1}\sqrt{L+1} ).
\end{align*}

Recall that:
\begin{align*}
    \overline{\mathcal{E}}(T, f_{\thetab}) &= \mathbb{E}_{\mathcal{T}_m}\left[\mathcal{L}_m(f_{m, \thetab}^T) \right]\\
    &=\mathbb{E}_{\mathcal{T}_m}\left[\mathcal{L}_m(f_{\thetab}) + \int_0 ^T \nabla_t \mathcal{L}_m(f_{m, \thetab}^t) \text{d} t\right] \\
    &= \mathbb{E}_{\mathcal{T}_m}\left[\mathcal{L}_m(f_{\thetab}) + \int_0 ^T \dfrac{\text{d}\thetab^t}{\text{d}t } \nabla_{\thetab^t}\mathcal{L}_m(f_{m, \thetab}^t)  \text{d} t\right] \\
    & = \mathbb{E}_{\mathcal{T}_m}\left[\mathcal{L}_m(f_{\thetab}) - \int_0 ^T \Big\Vert \nabla_{\thetab^t}\mathcal{L}_m(f_{m, \thetab}^t)  \Big\Vert ^2  \text{d} t\right] 
\end{align*}
and
\[
    \widetilde{\mathcal{E}}(T, f_\thetab) = \mathbb{E}_{\mathcal{T}_m}\left[\mathcal{L}_m(f_\thetab) - T \Vert \nabla _{f_\thetab}\mathcal{L}_m(f_\thetab) \Vert^2_{\mathcal{H}} \right] = \mathbb{E}_{\mathcal{T}_m}\left[\mathcal{L}_m(f_\thetab) - T \Vert \nabla _{\thetab}\mathcal{L}_m(f_\thetab) \Vert^2 \right].
\]

Because
\begin{align*}
    &\widetilde{\mathcal{E}}(T, f_{\thetab})  - \overline{\mathcal{E}}(T, f_{\thetab})\\
    =&  \int_0 ^T \Big\Vert \nabla_{\thetab^t}\mathcal{L}_m(f_{m, \thetab}^t)  \Big\Vert ^2 \text{d} t -T \Vert \nabla _{\thetab}\mathcal{L}_m(f_\thetab) \Vert^2  \\
    =&\int_0 ^T  \Big \Vert \nabla _{\thetab}\mathcal{L}_m(f_\thetab)  + \triangle \Big( \nabla _{\thetab}\mathcal{L}_m(f_\thetab) \Big)\Big\Vert^2 \text{d} t  -T \Vert \nabla _{\thetab}\mathcal{L}_m(f_\thetab) \Vert^2\\
    =& \int_0 ^T  2 \nabla _{\thetab}\mathcal{L}_m(f_\thetab)\triangle \Big( \nabla _{\thetab}\mathcal{L}_m(f_\thetab) \Big)^\intercal  + \Big \Vert\triangle \Big( \nabla _{\thetab}\mathcal{L}_m(f_\thetab) \Big)\Big\Vert^2 \text{d} t, 
\end{align*}
we have
\[
error(T) = \vert \widetilde{\mathcal{E}}(T, f_{\thetab})  - \overline{\mathcal{E}}(T, f_{\thetab}) \vert \leq O\Big(\dfrac{L+1}{2L+3} T^{2L+3}\Big) =  O\big(T^{2L+3}\big)
\]
by simple calculation.

On the other hand, observe that
\[
  \bar{ \mathcal{E} }( T, f_{\thetab} )
  = \mathbb{E}_{\mathcal{T}_m } \left[ \mathcal{L}_m( f_{\thetab} ) - \int_0^T \Big \Vert\nabla_{\thetab^t}
  \mathcal{L}_m( f_{m, \thetab}^t)\Big \Vert^2_{\mathcal{H}} dt \right],   
\]
\[
  \tilde{ \mathcal{E}}(T, f_{\thetab}) = \mathbb{E}_{ \mathcal{T}_m}[
  \mathcal{L}_m( f_{\thetab} ) - T \Big \Vert \nabla_{\thetab } \mathcal{L}_m(
  f_{\thetab} )  \Big \Vert_{ \mathcal{H} }^2  ].
\]
We let 
\[ G(\tau) = \int_0^{\tau} \Big \Vert  \nabla_{\thetab^t}
\mathcal{L}_m( f_{m, \thetab}^t) \Big \Vert^2 dt,  \]
and assume that $\nabla_{\thetab^t}
  \mathcal{L}_m( f_{m, \theta}^t)$ is continuous at $t=0$.  Then, we have $ G'( \tau) = \|  \nabla_{\thetab^t}
\mathcal{L}_m( f_{\thetab}^t) \|^2 $. 
\begin{align*}
\Big \Vert \overline{ \mathcal{E} }( T,f_{\thetab} ) - \tilde{ \mathcal{E}}(T,
 f_{\thetab}) \Big \Vert  =& \Big \Vert \mathbb{E}_{\mathcal{T}_m} \left[ \int_0^T \Big \Vert \nabla_{\thetab^t}
  \mathcal{L}_m( f_{m, \thetab}^t) \Big \Vert^2 dt -  T\Big \Vert \nabla_{\thetab} \mathcal{L}_m(
  f_{\thetab} )  \Big \Vert_{ \mathcal{H} }^2
 \right]  \Big \Vert \\
 =& \Big \Vert \mathbb{E}_{\mathcal{T}_m} (G(T) - T\cdot G'(0))  \Big \Vert,
\end{align*}
where $TG'(0) = G(0) + TG'(0)$ (note that $G(0) = 0$)  is a first order approximation to $G(T)$ at $\tau = 0$. When $T=1$, $G(T)-TG'(0)$ can be taken as a local truncation error (i.e., the error that occurs in one step of a numerical approximation). When $T$ increases, the difference is no better than the global truncation error (in $T$ steps):
\begin{align*} 
\Big \Vert G(T) - \sum_{i=0}^T (i-(i-1))G'(i) \Big \Vert 
= & \Big \Vert \sum_{i=0}^T \int_i^{i+1} \Big \Vert \nabla_{\thetab^t}
  \mathcal{L}_m( f_{m, \thetab}^t)\Big \Vert^2 - \Big \Vert \nabla_{\thetab^i}
  \mathcal{L}_m( f_{m, \thetab}^{t=i})\Big \Vert^2 dt \Big \Vert \\
\approx & \Big \Vert \sum_{i=0}^T \int_i^{i+1}  2 \cdot \triangle_{t}^i 
  \mathcal{L}_m( f_{m, \thetab}) \cdot \nabla \mathcal{L}_m( f_{m, \thetab}^t) dt\Big \Vert,  
\end{align*}
where $\triangle_{t}^i 
  \mathcal{L}_m( f_{m, \thetab}) = \nabla_{\thetab^t}
  \mathcal{L}_m( f_{m, \thetab}^t) -  \nabla_{\thetab^i}
  \mathcal{L}_m( f_{m, \thetab}^t) $ as shown previously
, $i$ is the $i$-th time step, and $G'(i)$ is the gradient of $G$ at time step $i$.
Now we can see that  $\Big \Vert \overline{ \mathcal{E} }( T,f_{\thetab} ) - \tilde{ \mathcal{E}}(T,
 f_{\thetab}) \Big \Vert$ highly relates to the difference between $\nabla_{\thetab^t}
  \mathcal{L}_m( f_{m, \thetab}^t)$ at different time steps (i.e. $\triangle_{t}^i 
  \mathcal{L}_m( f_{m, \thetab})$), $\nabla_{\thetab^t} \mathcal{L}_m(f_{m,\thetab}^t)$ and $T$ . The first two terms  relate to how flat or sharp the hyperplane of $\mathcal{L}_m(f_{m, \thetab})$ is near $t=0$. We can wrap it as a constant $C_0(\mathcal{L},t=0)$. Then, the error is at least $C_0(\mathcal{L},t=0)\cdot O(T)$. For the hyperplane smooth enough, we can further get a first order approximation of $\triangle_{t}^i
  \mathcal{L}_m( f_{m, \thetab}) $ and yield $C(\mathcal{L},t=0)O(T^2)$, where $C(\mathcal{L},t=0)$ can be analogized as the second order derivative of $\mathcal{L}$.
\end{proof}  
  
\section{Some Experimental Details}


\subsection{Implementation of Classification for Meta-RKHS-\ROM{2}}
As we mentioned earlier, our proposed energy functional with closed form adaptation can not be directly applied to classification problem. We handle this challenge following \cite{arora19}. For a $d_y$ class classification problem, every data $\xb$ is associated with a $R^{d_y}$ one-hot vector $\yb$ as its label. For $C$ classes classification problem, its encoding is $C$ dimensional vector and we use $-1/C$ and $(C-1)/C$ as its correct and incorrect entries encoding. In the prediction, $Y^{tr}$ is replaced by the encoding of training data. $f_{\thetab}(\xb)$ is replaced by $f_{\thetab}(\xb)^\intercal [1, ..., 1] \in R^{n \times d_y}$ for dimension consistency. During the testing time, we compute the encoding of the test data point, and choose the position with largest value as its predicted class.

\section{Extra Experimental Results}
\subsection{Comparison with RBF kernel}
One interesting question is, without introducing extra model components or networks, what will the results of other kernel be? We provide the results of using RBF (Gaussian) kernel here: $42.1 \pm 1.9$ (5-way 1-shot) and $54.9 \pm 1.1$ (5-way 5-shot)  on Mini-ImageNet, $32.4 \pm 2.0$ (5-way 1-shot) and $38.2 \pm 0.9$ (5-way 5-shot) on FC-100, which are worse than the NTK based Meta-RKHS-\ROM{2}, showing the superiority of using NTK.

\subsection{More Results on out-of-distribution Generalization}
We provide some more results on out-of-distribution generalization experiments here. From the results we can find that the proposed methods is more robust and can generalize to different datasets better.

\begin{wraptable}{R}{0.9\linewidth}
    \caption{Meta testing on different out-of-distribution datasets with model trained on FC-100.}
    \label{fig:OODFC-100}
    \begin{center}
        \begin{sc}
        \begin{adjustbox}{scale=0.9}
        \begin{tabular}{lrrrrrrrrrr}
        \toprule
        & \multicolumn{2}{c}{5 way 1 shot}& &\multicolumn{2}{c}{5 way 5 shot}&\\
        Algorithm & CUB & VGG Flower & & CUB & VGG Flower & \\
        \midrule
         MAML &$31.58 \pm 1.89 \%$ &  $50.82 \pm 1.94 \%$ && $41.72 \pm 1.29 \%$ &$65.19\pm 1.36 \%$&\\
         FOMAML& $32.34 \pm 1.57 \%$ & $49.90 \pm 1.78 \%$& &  $41.96 \pm 1.53\%$& $66.87 \pm 1.45\%$\\
         Reptile&$33.56 \pm 1.40 \%$ & $46.77 \pm 1.81 \%$ & & $42.79 \pm 1.38 \%$&$67.97 \pm 0.71 \%$ &\\
         iMAML&$32.49 \pm 1.52 \%$ &$49.96 \pm 1.98 \%$ & &$38.92 \pm 1.62 \%$ &$59.80 \pm 1.82 \%$&\\
         Bayesian TAML(SOTA) &$31.82\pm0.49\%$&$49.58\pm0.55\%$ & &$43.97\pm 0.57\%$& $67.36\pm 0.53\%$&\\
         Meta-RKHS-\ROM{1} &$34.12 \pm 1.34 \%$ & $48.81 \pm 1.89 \%$ & & $43.31 \pm 1.43\%$ &$69.02 \pm 0.62 \%$ &\\
         Meta-RKHS-\ROM{2}&$\mathbf{36.35 \pm 1.07 \%}$ & $\mathbf{59.75 \pm 1.23 \%}$ & &$\mathbf{49.92 \pm 0.68} \%$ & $\mathbf{76.32 \pm 0.58 \%}$ &&\\
         \bottomrule
        \end{tabular}
    \end{adjustbox}
    \end{sc}
    \end{center}
\end{wraptable}

\clearpage
\subsection{More Results on Adversarial Attack}
We now show some more extra results on adversarial attack in the following figures. Consistent to the results in main text, we can find  that our proposed methods are more robust to adversarial attacks.

\begin{figure}[htb!]
     \centering
     \begin{minipage}{0.45\textwidth}
         \centering
         \includegraphics[width=\textwidth]{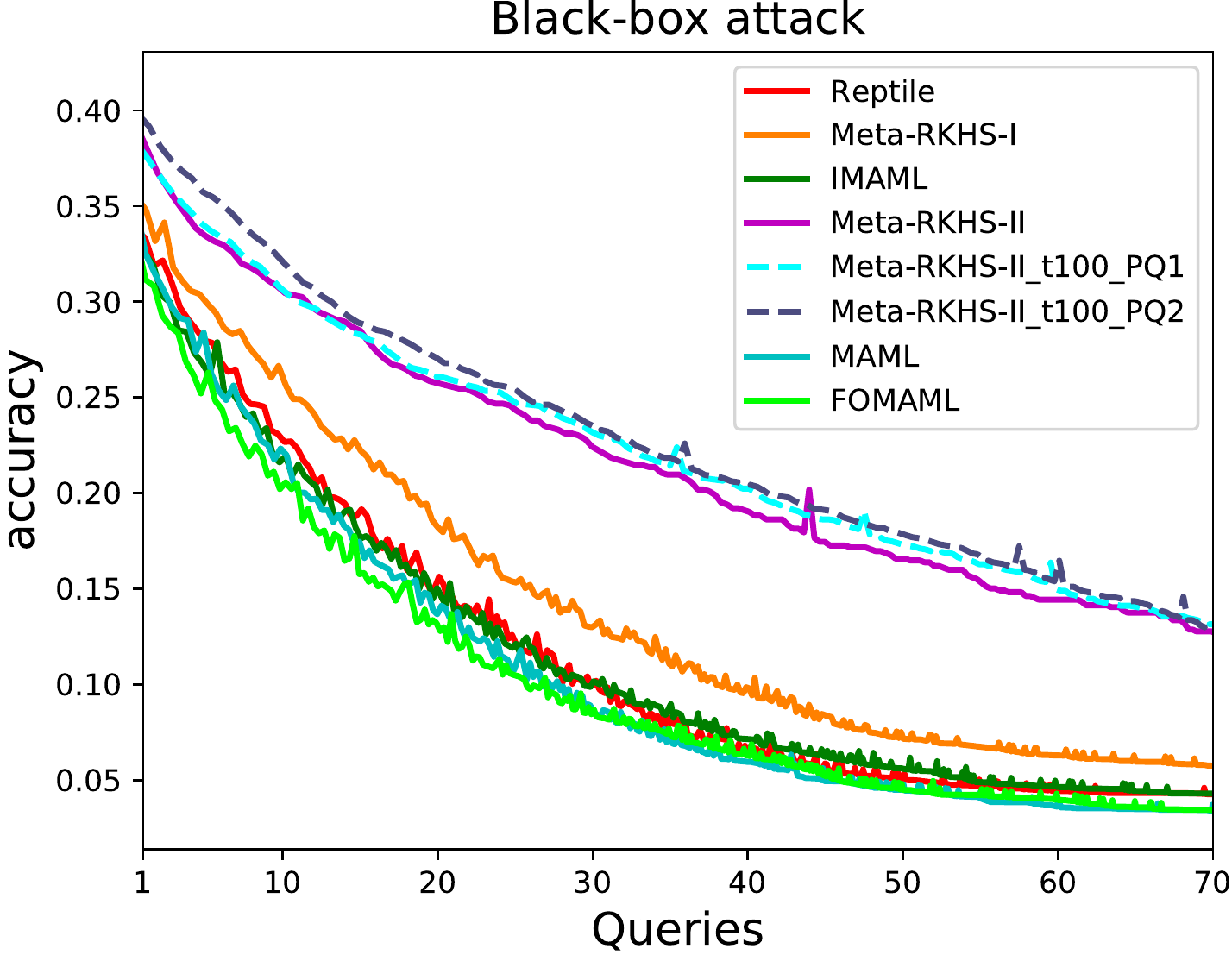}
     \end{minipage}
     \hfill
     \begin{minipage}{0.54\textwidth}
         \centering
         \includegraphics[width=\textwidth]{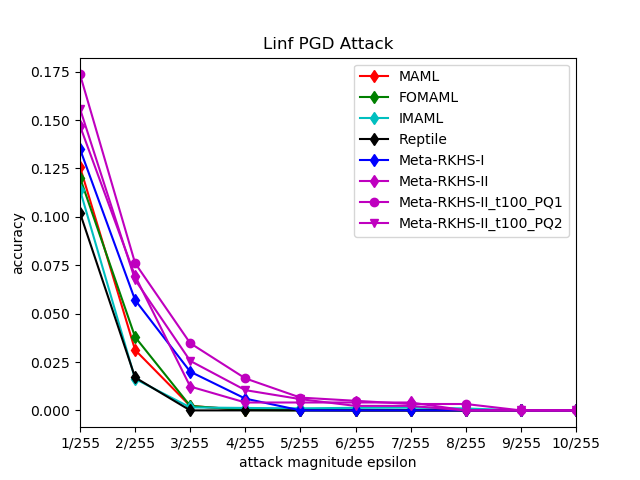}
     \end{minipage}
     \caption{FC-100 5-way 5-shot Black-box attacks (left) and 5-way 1-shot PGD $\ell_{\infty}$ norm attack (right).}
     \label{blackbox:cifar100}
\end{figure}

\subsection{Impact of Gradient Norm in Meta-RKHS-\ROM{1}}
 In this experiment, we compare between our proposed Meta-RKHS-\ROM{1} and Reptile. We evaluate the trained models with different adaptation steps in testing-time. The comparison is shown in Figure \ref{fig:reptile_vs_mgfl}. As we can see, our Meta-RKHS-\ROM{1} always gets better results than Reptile, which supports our idea that the learned function should be close to task-specific optimal and have large functional gradient norm. These two conditions together lead to the ability of fast adaptation.
\begin{figure}[ht!]
    \centering
        \subfigure[Mini-ImageNet, 5 Way 1 Shots]{\includegraphics[width=0.22\linewidth]{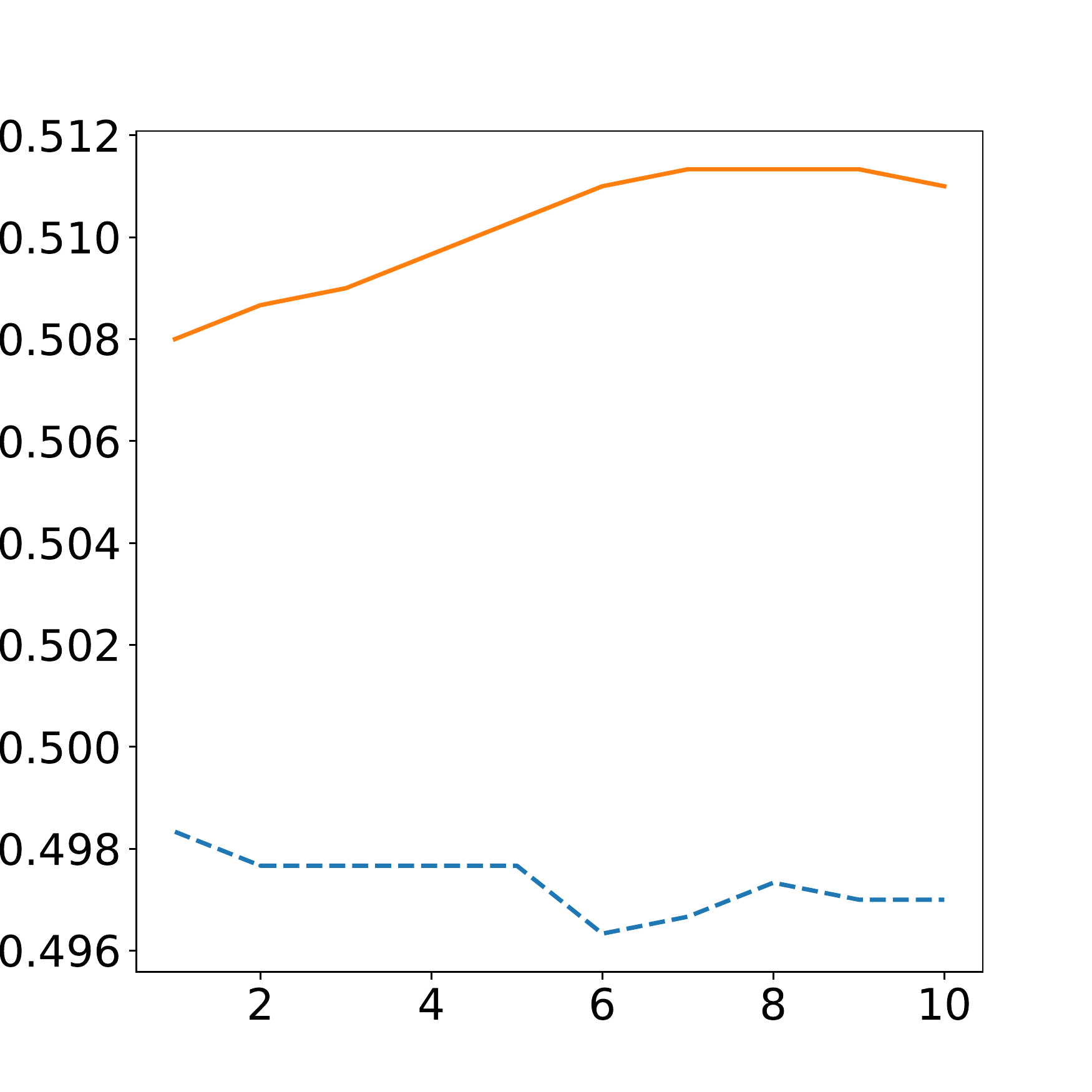}}
        \subfigure[Mini-ImageNet, 5 Way 5 Shots]{\includegraphics[width=0.22\linewidth]{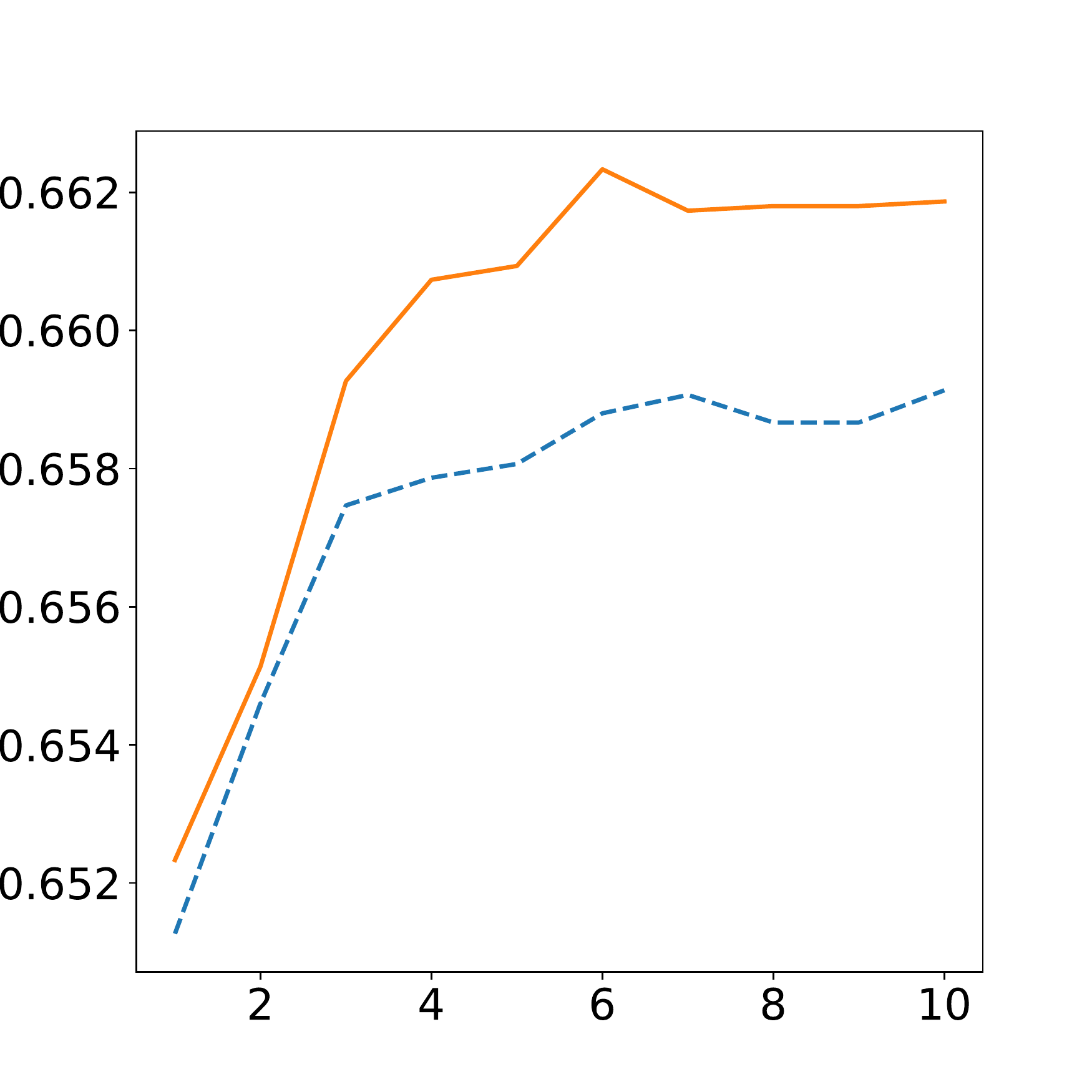}}
        \subfigure[FC-100, 5 Way 1 Shots]{\includegraphics[width=0.22\linewidth]{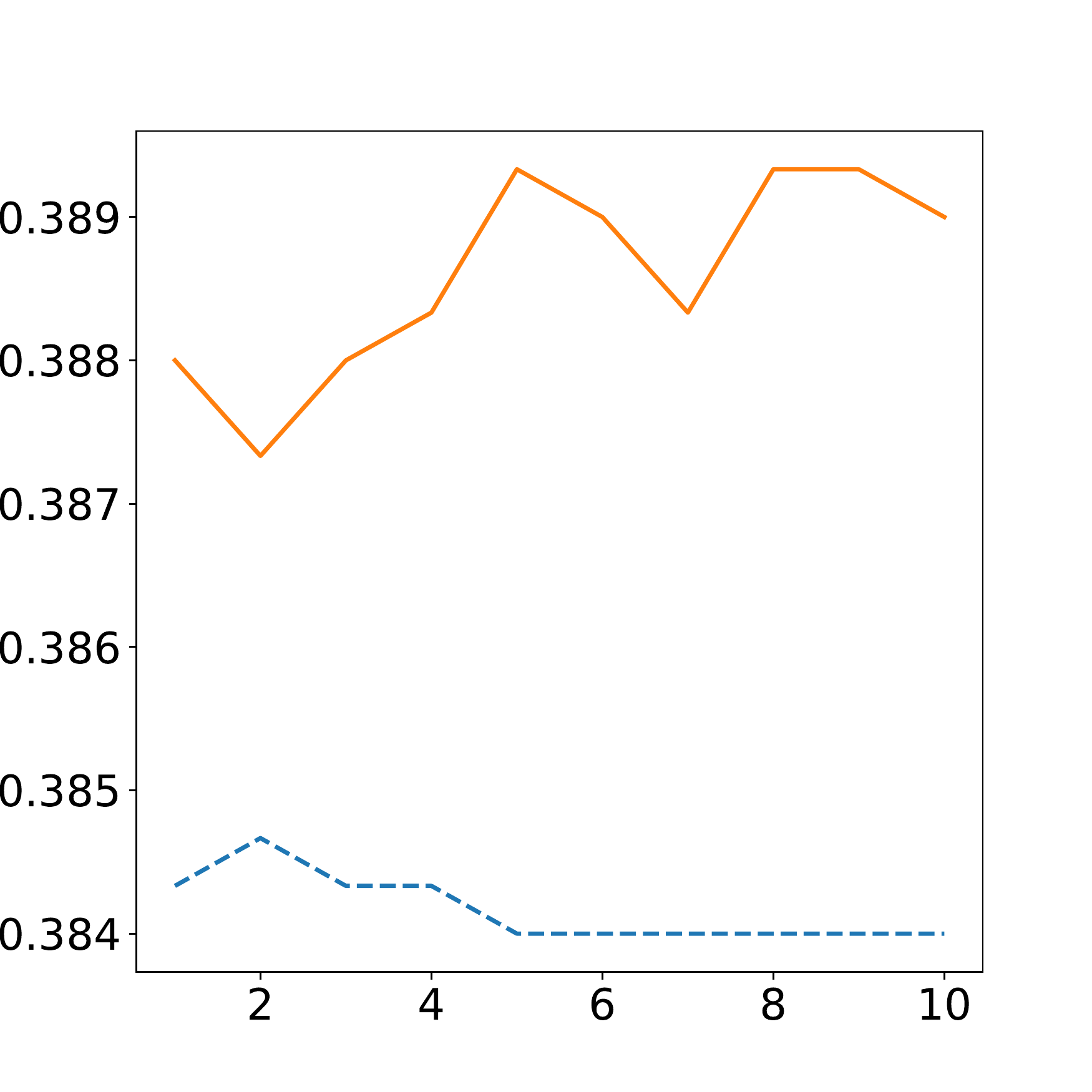}}
        \subfigure[FC-100, 5 Way 5 Shots]{\includegraphics[width=0.22\linewidth]{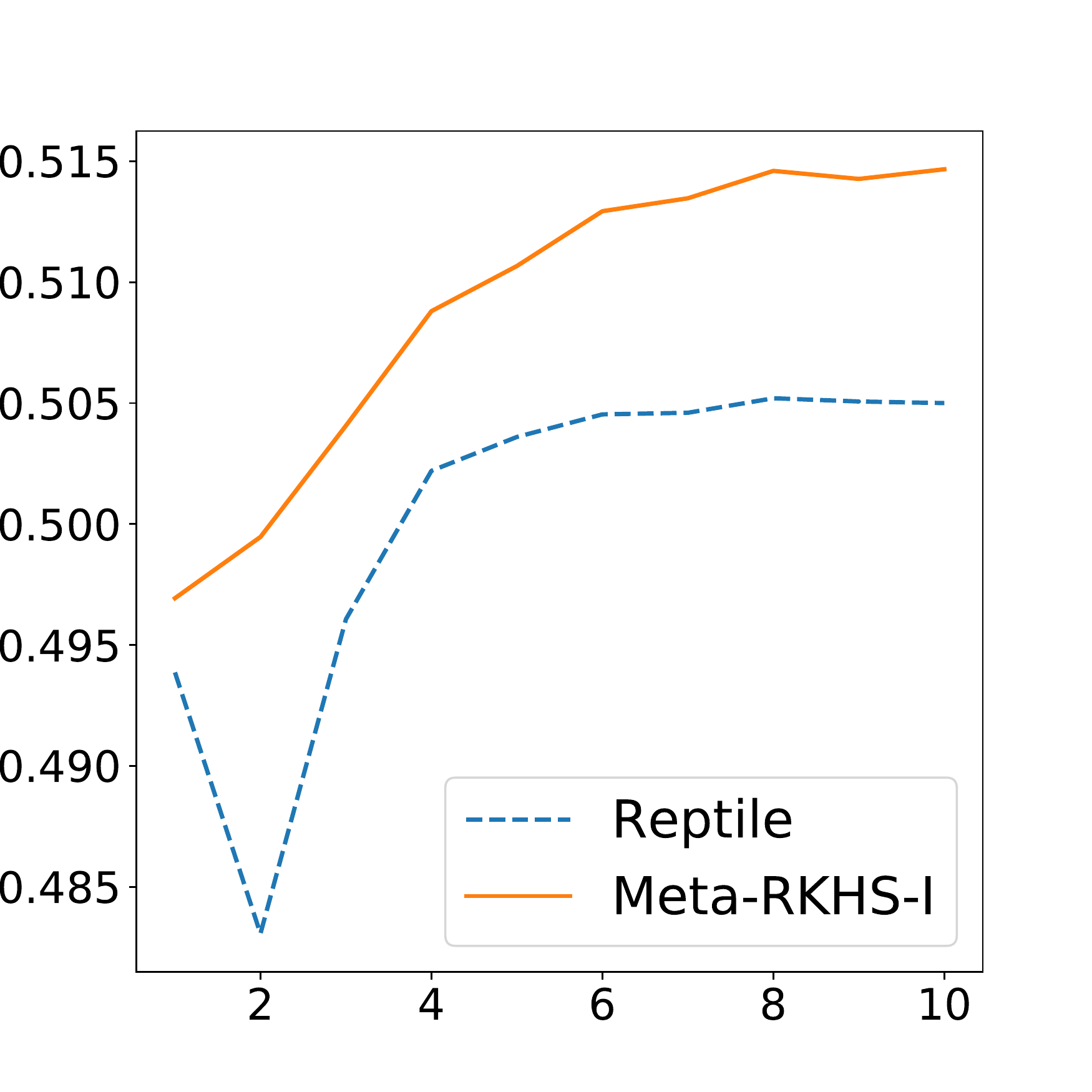}}
    \caption{Reptile (dashed) vs. Meta-RKHS-\ROM{1} (solid) with different testing adaptation steps (x-axis).} 
    \label{fig:reptile_vs_mgfl}
\end{figure}

\subsection{Impact of network architecture for different meta-learning models}

In this section, we compare different meta-learning models with feature channels of 100 and 200 of the CNN network structure with 4 or 5 CNN layers respectively.

\begin{wraptable}{R}{0.9\linewidth}
    \caption{Few-shot classification results on Mini-ImageNet with different number of feature channels of 4 convolution layers.}
    \vspace{-0.15in}
    \label{tab:ablation_architecture}
    \begin{center}
    \begin{sc}
    \begin{adjustbox}{scale=0.9}
        \begin{tabular}{lrrrrrr}
            \toprule
            &\multicolumn{2}{c}{100} & \multicolumn{2}{c}{200}\\
            Algorithm& 5 Way 1 Shot & 5 Way 5 Shots&5 Way 1 Shot & 5 Way 5 Shots \\
            \midrule
             MAML& $49.50 \pm 1.58 \%$ & $64.31 \pm 1.07 \%$ & $48.91 \pm 1.69 \%$ & $63.96 \pm 0.82 \%$&\\
             FOMAML& $48.69 \pm 1.62 \%$ & $63.73 \pm 0.76 \%$& $48.55 \pm 1.86 \%$ & $63.18 \pm 0.96 \%$&\\
             iMAML& $49.30 \pm 1.94 \%$ & $62.89 \pm 0.95 \% $& $48.23 \pm 1.58 \%$ & $62.25 \pm 0.83 \%$&\\
             Reptile & $50.20 \pm 1.69 \%$ & $64.12 \pm 0.92 \%$ & $48.72 \pm 1.97 \%$ & $63.67 \pm 0.79 \%$&\\
             \midrule
             Meta-RKHS-\ROM{1}& $51.23 \pm 1.79 \%$ & $66.69 \pm 0.73 \%$ & $\mathbf{51.54 \pm 1.64 \%}$ & $\mathbf{65.92 \pm 0.92\%}$&\\
             Meta-RKHS-\ROM{2}& $\mathbf{51.37 \pm 2.31\%}$& $\mathbf{66.97 \pm 0.98 \%}$& $50.96 \pm 2.15\%$& $65.21\pm 0.87\%$\\
             \bottomrule
        \end{tabular}
    \end{adjustbox}
    \end{sc}
    \end{center}
    \vspace{-0.1in}
\end{wraptable}

\begin{wraptable}{R}{0.9\linewidth}
    \caption{Few-shot classification results on Mini-ImageNet with different number of feature channels of 5 convolution layers.}
    \vspace{-0.15in}
    \label{tab:ablation_architecture}
    \begin{center}
    \begin{sc}
    \begin{adjustbox}{scale=0.9}
        \begin{tabular}{lrrrrrr}
            \toprule
            &\multicolumn{2}{c}{100} & \multicolumn{2}{c}{200}\\
            Algorithm& 5 Way 1 Shot & 5 Way 5 Shots&5 Way 1 Shot & 5 Way 5 Shots \\
            \midrule
             MAML& $49.87 \pm 1.65 \%$ & $65.78 \pm 1.18 \%$ & $48.62 \pm 1.82 \%$ & $63.25 \pm 0.75 \%$&\\
             FOMAML& $48.93 \pm 1.71 \%$ & $64.37 \pm 0.80 \%$& $48.27 \pm 1.74 \%$ & $62.95 \pm 0.83 \%$&\\
             iMAML& $48.03 \pm 1.76 \%$ & $62.15 \pm 0.83\% $& $47.52 \pm 1.73 \%$ & $61.77 \pm 0.89 \%$&\\
             Reptile & $50.62 \pm 1.83 \%$ & $64.53 \pm 0.97 \%$ & $49.33 \pm 1.89 \%$ & $63.26 \pm 0.70 \%$&\\
             \midrule
             Meta-RKHS-\ROM{1}& $\mathbf{52.45 \pm 1.88 \%}$ & $66.07 \pm 0.69 \%$ & $\mathbf{51.37 \pm 1.92 \%}$ & $\mathbf{65.39 \pm 0.98\%}$&\\
             Meta-RKHS-\ROM{2}& $50.92 \pm 2.16\%$& $\mathbf{66.45 \pm 0.91 \%}$& $50.43 \pm 2.42\%$& $64.17\pm 1.06\%$\\
             \bottomrule
        \end{tabular}
    \end{adjustbox}
    \end{sc}
    \end{center}
    \vspace{-0.1in}
\end{wraptable}

\end{document}